\newcommand{\LoadPackagesNow}{}
\newcommand{\LoadPackageLater}[2][]{%
   \g@addto@macro{\LoadPackagesNow}{%
      \usepackage[#1]{#2}%
   }%
}
\g@addto@macro\bfseries{\boldmath}
\numberwithin{equation}{section}
\definecolor{pdfurlcolor}{rgb}{0,0,0.6}
\definecolor{pdffilecolor}{rgb}{0.7,0,0}
\definecolor{pdflinkcolor}{rgb}{0,0,0.6}
\definecolor{pdfcitecolor}{rgb}{0,0,0.6}
\newcommand{\ifargdef}[3][{}]{\ifthenelse{\equal{#2}{}}{#1}{#3}}
\newenvironment{highlight}{\begin{addmargin}[1em]{1em}\itshape}{\end{addmargin}}
\newcommand{\highlighted}[1]{\emph{#1}}
\newenvironment{properties}[2][2em]
{\begin{enumerate}[label={\textsc{(#2\arabic*)}},leftmargin=#1]}
{\end{enumerate}} 
\newenvironment{listing}
{\begin{itemize}[itemindent=0em,leftmargin=1.2em]}
{\end{itemize}}
\newenvironment{rmklist}
{\begin{enumerate}[label={(\arabic*)},itemindent=2em,leftmargin=0em]}
{\end{enumerate}}
\newcommand{\qeddiamond}{\hfill$\Diamond$}
\providecommand{\qedrmkhere}{\hfill\qeddiamond}
\newtheoremstyle{claim}
	{\topsep}{\topsep}%
	{\itshape}%         Body font
	{}%         Indent amount (empty = no indent, \parindent = para indent)
	{}% Thm head font
	{}%        Punctuation after thm head
	{.5em}%     Space after thm head (\newline = linebreak)
	{{\bfseries\boldmath\thmname{#1} \thmnumber{#2}} \thmnote{(#3)}}%         Thm head spec
\newtheoremstyle{definition}
	{\topsep}{\topsep}%
	{}%         Body font
	{}%         Indent amount (empty = no indent, \parindent = para indent)
	{}% Thm head font
	{}%        Punctuation after thm head
	{.5em}%     Space after thm head (\newline = linebreak)
	{\textbf{\thmname{#1} \thmnumber{#2}} \thmnote{(#3)}}%         Thm head spec
\newtheoremstyle{algorithm}
	{\topsep}{\topsep}%
	{}%         Body font
	{}%         Indent amount (empty = no indent, \parindent = para indent)
	{\bfseries\boldmath}% Thm head font
	{}%        Punctuation after thm head
	{.5em}%     Space after thm head (\newline = linebreak)
	{\thmname{#1} \thmnumber{#2} \thmnote{(#3)}}%         Thm head spec
\theoremstyle{claim}
\newtheorem{theorem}{Theorem}[section]
\newtheorem{corollary}[theorem]{Corollary}
\theoremstyle{definition}
\newtheorem{definition}[theorem]{Definition}
\newtheorem{remark}[theorem]{Remark}
\newtheorem{example}[theorem]{Example}
\newcommand{\opleft}[1]{\mathopen{}\left#1}
\newcommand{\opright}[1]{\right#1\mathclose{}}
\newcommandx{\braces}[4]{%
\ifstrequal{#3}{normal}{#1#4#2}{%
\ifstrequal{#3}{auto}{\left#1#4\right#2}{%
\ifstrequal{#3}{opauto}{\opleft#1#4\opright#2}{%
#3#1#4#3#2}}}%
}
\newcommandx{\opannot}[3][3=\downarrow]{\stackrel{\mathclap{\substack{#1 \\ #3 \vspace{2pt}}}}{#2}}
\newcommandx{\lineannot}[3][3=\rightarrow]{\mathllap{\boxed{\text{\textsmaller{#1}}} #3} #2}
\newcommandx{\multilineannot}[4][4=\rightarrow]{\mathllap{\boxed{\parbox{#1}{\RaggedRight\textsmaller{#2}}} #4} #3}
\newcommand{\R}{\mathbb{R}} % real numbers
\newcommand{\eps}{\varepsilon} % shortcut for epsilon
\newcommand{\suchthat}[1][normal]{\ifstrequal{#1}{normal}{\mid}{#1|}} % seperator in sets (#1op = size)
\newcommand{\cardinality}{\#} % cardinality of a set
\newcommand{\intersec}{\cap} % intersection
\newcommandx{\intvcl}[3][1=normal]{\braces{[}{]}{#1}{#2, #3}} % closed interval (#1op=size, #2=left bound, #3=right bound)
\newcommandx{\intvop}[3][1=normal]{\braces{(}{)}{#1}{#2, #3}} % open interval
\newcommandx{\intvclop}[3][1=normal]{\braces{[}{)}{#1}{#2, #3}} % half-open interval (right)
\newcommandx{\intvopcl}[3][1=normal]{\braces{(}{]}{#1}{#2, #3}} % half-open interval (left)
\DeclareMathOperator*{\argmin}{argmin} % argmin
\DeclareMathOperator{\sign}{sign}
\newcommandx{\abs}[2][1=normal]{\braces{\lvert}{\rvert}{#1}{#2}} % absolute value
\newcommandx{\ceil}[2][1=normal]{\braces{\lceil}{\rceil}{#1}{#2}} % ceil
\newcommandx{\floor}[2][1=normal]{\braces{\lfloor}{\rfloor}{#1}{#2}} % floor
\newcommandx{\round}[2][1=normal]{\braces{[}{]}{#1}{#2}} % round
\newcommandx{\der}[1]{D^{#1}} % differential operator (#1 = multiindex)
\newcommandx{\gradient}{\nabla} % gradient
\newcommandx{\partder}[4][1={},4={}]{\frac{\partial^{#4} #2}{\partial #3^{#4}}\ifargdef{#1}{\Big|_{#1}}} % partial derivative (#1=point of evaluation, #2=function, #3=variable, #4=order)
\newcommandx{\integ}[4][1={},2={}]{\int_{#1}^{#2} #3 \, #4} % integral (#1op=lower bound, #2op=upper bound, #3=integrand, #4=differential form)
\newcommandx{\asympffaster}[2][1=normal]{o\braces{(}{)}{#1}{#2}} % asymptotically faster (proper) (#1op=size)
\newcommandx{\asympfaster}[2][1=normal]{O\braces{(}{)}{#1}{#2}} % asymptotically faster
\newcommandx{\asympeq}[2][1=normal]{\Theta\braces{(}{)}{#1}{#2}} % asymptotically equal
\newcommandx{\asympsslower}[2][1=normal]{\omega\braces{(}{)}{#1}{#2}} % asymptotically slower (proper)
\newcommandx{\asympslower}[2][1=normal]{\Omega\braces{(}{)}{#1}{#2}} % asymptotically slower
\newcommand{\matr}[1]{\begin{bmatrix} #1 \end{bmatrix}} % matrix
\newcommand{\smallmatr}[1]{\left[\begin{smallmatrix} #1 \end{smallmatrix}\right]} % small matrix
\newcommandx{\norm}[2][1=normal]{\braces{\|}{\|}{#1}{#2}} % norm
\renewcommandx{\sp}[3][1=normal]{\braces{\langle}{\rangle}{#1}{#2, #3}} % inner product (#1op=size, #2=left, #3=right)
\newcommandx{\End}[2][2={}]{\mathcal{L}\opleft( #1 \ifargdef{#2}{, #2} \opright)} % endomorphism (#1=from, #2op=to)
\newcommand{\T}{\mathsf{T}} % transposition (of a matrix)
\renewcommand{\vec}[1]{\boldsymbol{#1}} % vectors in boldface
\newcommandx{\measure}[2][1=normal]{\operatorname{vol}\braces{(}{)}{#1}{#2}} % Lebesgue-measure/volume of a set
\DeclareMathOperator{\supp}{supp} % support
\newcommandx{\Leb}[3][1={},3=normal]{L^{#2}\ifargdef{#1}{\braces{(}{)}{#3}{#1}}{}} % Lebesgue spaces (#1op=set, #2=exponent)
\newcommandx{\Lebnorm}[4][1=normal,3={2},4={}]{\norm[#1]{#2}_{#3}} % Lebesgue norm (#1op=size, #2=content, #3op=exponent, #4op=set)
\renewcommandx{\l}[3][1={},3=normal]{\ell^{#2}\ifargdef{#1}{\braces{(}{)}{#3}{#1}}} % lp sequence spaces (#1op=set, #2=exponent)
\newcommandx{\lnorm}[4][1=normal,3={2},4={}]{\norm[#1]{#2}_{#3}} % lp norm (#1op=size, #2=content, #3op=exponent, #4op=set)
\newcommandx{\Smooth}[4][1={},3={},4=normal]{C_{#3}^{#2}\ifargdef{#1}{\braces{(}{)}{#4}{#1}}} % space of differentiable functions (#1op=set, #2=order, #3op=modifier)
\newcommandx{\Schwartz}[2][1={},2=normal]{\mathscr{S}\ifargdef{#1}{\braces{(}{)}{#2}{#1}}} % space of Schwartz functions
\newcommandx{\Schwartzpoly}[2][1=normal]{\braces{\langle}{\rangle}{#1}{\abs[#1]{#2}} } % Schwartz polynomial
\newcommandx{\Tempdistr}[2][1={},2=normal]{\mathscr{S}'\ifargdef{#1}{\braces{(}{)}{#2}{#1}}} % tempered distributions
\newcommandx{\distrinp}[3][1=normal]{\braces{\langle}{\rangle}{#1}{#2, #3}} % evaluation of a tempered distribution (#1op=size, #2=distribution, #3=Schwartz function)
\newcommandx{\ft}[3][1=default,2=auto]{
\ifstrequal{#1}{default}{\widehat{#3}}{
\ifstrequal{#1}{long}{{\braces{(}{)}{#2}{#3}}^{\wedge}}{}}} % Fourier transform (hat-notation) (#1op=long expression mode, #2op=size, #3=content)
\newcommandx{\ift}[3][1=default,2=auto]{
\ifstrequal{#1}{default}{\check{#3}}{
\ifstrequal{#1}{long}{{\braces{(}{)}{#2}{#3}}^{\vee}}{}}} % inverse Fourier transform (hat-notation) (#1op=long expression mode, #2op=size, #3=content)
\newcommand{\define}[1]{\emph{#1}}
\renewcommand{\v}{\vec{v}}
\newcommand{\meas}{\vec{u}}
\newcommand{\y}{\vec{y}}
\newcommand{\yrnd}{Y}
\newcommand{\Y}{\mathsf{Y}}
\newcommand{\fobs}{f}
\newcommand{\Fobs}{\solu{F}}
\newcommand{\scalfac}{\mu}
\newcommand{\modeldev}{\rho}
\newcommand{\modeldeveta}{\eta}
\newcommand{\modeldevconst}{C_{\modeldev,\modeldeveta}}
\newcommand{\advdev}{\eps}
\newcommand{\x}{\vec{x}}
\newcommand{\data}{\vec{x}}
\newcommand{\datarnd}{\vec{X}}
\newcommand{\lat}{\vec{s}}
\newcommand{\latrnd}{\vec{S}}
\newcommand{\latnoise}{\vec{n}}
\newcommand{\sig}{\vec{z}}
\newcommand{\trusig}{\tru{\sig}}
\newcommand{\trusigmu}{\scalfac\tru{\sig}}
\newcommand{\fv}{\vec{\beta}}
\newcommand{\atom}{\vec{a}}
\newcommand{\atoms}{\vec{A}}
\newcommand{\atomnoise}{\vec{b}}
\newcommand{\atomsnoise}{\vec{B}}
\newcommand{\dict}{\vec{D}}
\newcommand{\dictatom}{\vec{D}}
\newcommand{\dictenergy}{D_{\max}}
\newcommand{\dictnoise}{\vec{N}}
\newcommand{\noiseatom}{\vec{N}}
\newcommand{\snrscal}{\lambda}
\newcommand{\scalsig}{\tau}
\newcommand{\tru}[1]{{#1}_0}
\newcommand{\solu}[1]{\hat{#1}}
\newcommand{\std}[1]{\bar{#1}}
\newcommand{\extd}[1]{\tilde{#1}}
\newcommand{\sset}{K}
\newcommand{\loss}{\mathcal{L}}
\newcommand{\losssq}{\mathcal{L}^{\text{sq}}}
\newcommand{\lossemp}[1][{}]{\bar{\mathcal{L}}_{#1}}
\newcommand{\Idm}[1]{\vec{I}_{#1}}
\newcommand{\vnull}{\vec{0}}
\newcommand{\diag}[1]{\operatorname{diag}(#1)}
\newcommand{\vunit}{\vec{e}}
\newcommandx{\prob}[2][1={},2=normal]{\mathbb{P}\ifargdef{#1}{\braces{[}{]}{#2}{#1}}}
\newcommandx{\mean}[2][1={},2=normal]{\mathbb{E}\ifargdef{#1}{\braces{[}{]}{#2}{#1}}}
\newcommandx{\var}[2][1={},2=normal]{\mathbb{V}\ifargdef{#1}{\braces{[}{]}{#2}{#1}}}
\newcommand{\distributed}{\sim}
\newcommand{\Normdistr}[2]{\mathcal{N}(#1, #2)}
\newcommand{\gaussian}{\vec{g}}
\newcommand{\gaussianuniv}{g}
\newcommand{\stddev}{\sigma}
\newcommand{\Covmatr}{\vec{\Sigma}}
\newcommand{\SNR}{\operatorname{SNR}}
\newcommandx{\ball}[2][1={},2={}]{B_{#1}^{#2}}
\DeclareMathOperator{\convhull}{conv}
\renewcommand{\S}{S}
\newcommand{\meanwidth}[2][{}]{w_{#1}(#2)}
\newcommand{\effdim}[2][{}]{d_{#1}(#2)}
\newcommand{\tsqrt}[1]{{\scriptstyle\sqrt{#1}}}
\begin{document}

% \listoftodos\clearpage

\pagestyle{scrheadings}

% title (\vphanton{g} is to achieve an appropriate scaling for multi-line titles)
\noindent{\Large\raggedright\bfseries \vphantom{g}A Mathematical Framework for Feature Selection from \\ Real-World Data with Non-Linear Observations}
% Toward a Theoretical Foundation of ...
% Towards a Mathematical Theory of Feature Selection from \\ Real-World Data with Non-Linear Observations
% A Mathematical Framework for Feature Selection from Real-World Data with Non-Linear Observations
% A Mathematical Approach Towards Feature Selection from Real-World Data with Non-Linear Observations
% A Mathematical Framework for Feature Selection from Real-World Data with Non-Linear Observations

\vspace{1\baselineskip}
\begin{addmargin}[6em]{0em}
\noindent{\normalsize\bfseries\larger{Martin Genzel and Gitta Kutyniok}}

\noindent Technische Universit\"at Berlin, Department of Mathematics \\
Straße des 17. Juni 136, 10623 Berlin, Germany \\
% \vspace{.5\baselineskip}
\noindent E-Mails: \href{mailto:genzel@math.tu-berlin.de}{\texttt{[genzel,kutyniok]@math.tu-berlin.de}}

\vspace{1\baselineskip}
{\smaller
\noindent\textbf{Abstract.}
In this paper, we study the challenge of \emph{feature selection} based on a relatively small collection of sample pairs $\{(\data_i, y_i)\}_{1 \leq i \leq m}$. %, also to be understood as input-output pairs. 
The \emph{observations} $y_i \in \R$ are thereby supposed to follow a noisy single-index model, depending on a certain set of \emph{signal variables}.
A major difficulty is that these variables usually cannot be observed directly, but rather arise as \emph{hidden factors} in the actual \emph{data vectors} $\data_i \in \R^d$ (\emph{feature variables}).
We will prove that a successful variable selection is still possible in this setup, even when the applied estimator does not have any knowledge of the underlying model parameters and only takes the ``raw'' samples $\{(\data_i, y_i)\}_{1 \leq i \leq m}$ as input.  
The model assumptions of our results will be fairly general, allowing for \emph{non-linear observations}, arbitrary \emph{convex signal structures} as well as \emph{strictly convex loss functions}.
This is particularly appealing for practical purposes, since in many applications, already standard methods, e.g., the Lasso or logistic regression, yield surprisingly good outcomes.
Apart from a general discussion of the practical scope of our theoretical findings, we will also derive a rigorous guarantee for a specific real-world problem, namely sparse feature extraction from \emph{(proteomics-based) mass spectrometry data}.

\noindent\textbf{Key words.}
Convex optimization, dictionary representations, feature selection, Gaussian mean width, high-dimensional data, mass spectrometry data, model uncertainty, non-linear observations, 
sparsity, structured loss minimization, variable selection.
}
\end{addmargin}
\newcommand{\shortauthor}{M. Genzel and G. Kutyniok}

% \maketitle
\thispagestyle{plain}

\section{Introduction}
\label{sec:intro}

\subsection{Motivation: Feature Selection from Proteomics-Based Data}
\label{subsec:intro:motivation}

Let us start with a classical problem situation from learning theory. Suppose we are given a collection of \define{samples} $(\lat_1, y_1), \dots, (\lat_m, y_m) \in 
\R^p \times \{-1,+1\}$ which are independently drawn from a random pair $(\latrnd, \yrnd)$ with unknown joint probability distribution on $\R^p \times \{-1,+1\}$. Here, the random vector $\latrnd$ typically models a set of \define{signal variables} (or \define{data variables}), whereas the binary \define{label} $\yrnd$ assigns this data to a certain \define{class-of-interest}, which is either $-1$ or $+1$ in our case. A major challenge of \define{supervised machine learning} is then to find an accurate 
\define{predictor} $\Fobs\colon \R^p \to \{-1,+1\}$ of this classification procedure, such that $\solu\yrnd := \Fobs(\latrnd)$ coincides with the true variable $\yrnd$, at least ``with high probability.'' In a very simple example scenario, we may assume that the observed labels can be described by a \define{linear classification model} of the form\footnote{This is the same as assuming that $\yrnd = \sign(\sp{\latrnd}{\trusig})$ because each sample $(\lat_i, y_i)$ can be seen as an independent realization of $(\latrnd, \yrnd)$. But in this paper, we shall prefer the ``sample notation'' of \eqref{eq:intro:binarymodel}, which seems to be more natural and convenient for our purposes.}
\begin{equation}\label{eq:intro:binarymodel}
	y_i = \sign(\sp{\lat_i}{\trusig}), \quad i = 1, \dots, m,
\end{equation}
where $\trusig \in \R^p$ is an unknown \define{signal vector} (or \define{parameter vector}).
The ultimate goal would be now to \emph{learn} an estimator $\solu\sig \in \R^p$ of $\trusig$, by merely using a \emph{small} set of training pairs $\{(\lat_i, y_i)\}_{1 \leq i \leq m}$.
A good approximation of the true signal vector $\trusig$ would not only provide a reliable predictor $\Fobs(\latrnd) = \sign(\sp{\latrnd}{\solu\sig})$, but in fact, its non-zero entries $\supp(\solu\sig) = \cardinality\{ j \suchthat \solu{z}_j \neq 0 \}$ would even indicate which variables of the data $\latrnd$ are (strongly) correlated with the associated class $\yrnd$. Such a statement is of course much stronger than just correctly predicting the class label because in that way, we are able to understand the underlying observation process.
The main focus of this work will be precisely on this type of problem, which is usually referred to as the task of \define{feature selection}, \define{variable selection}, or \define{feature extraction} in statistical learning theory.

Before continuing with the general problem issue, let us illustrate the above setup by a specific real-world example: The medical research of the last decades has shown that the early diagnosis of tumor diseases, such as cancer, can be significantly improved by extracting new \define{bio\-markers} from \define{proteomics data}. In this context, each sample pair $(\lat_i, y_i) \in \R^p \times \{-1,+1\}$ corresponds to an individual proband of a clinical study.
The class label $y_i$ simply specifies whether the \mbox{$i$-th} test person suffers from a certain disease ($y_i = -1$) or not ($y_i = +1$), whereas each single entry (variable) of the data $\lat_i = (s_{i,1}, \dots, s_{i,p})$ contains the \emph{molecular concentration} of a particular protein structure in the human body.
In this sense, $\lat_i$ represents (a part of) the so-called \define{proteome}, which is the entire collection of an individual's proteins at a fixed point of time.
Assuming a linear classification model as in \eqref{eq:intro:binarymodel} would mean that the patient's health status can be essentially determined by the presence or absence of some of those protein structures.
The signal vector $\trusig$ now plays the role of a \define{disease fingerprint} because its non-zero entries precisely indicate those proteins which seem to be relevant to the examined disease. Interestingly, various empirical studies have shown that, oftentimes, already a very small set of highly discriminative molecules is characteristic for a certain disease (see \cite{conrad2015spa} and the references therein), which implies that $\trusig$ might be relatively \define{sparse}.\footnote{A vector is said to be \emph{sparse} if most of its entries are equal to zero.}
The molecular concentrations of these few proteins finally form candidates for \define{biomarkers}, that are, reliable indicators for a potential affection of the body.
This prototype application will recurrently serve as an illustration of our framework. In order to keep the examples as simple as possible, we shall omit several nonessential biological details in this work. %, which seem unnecessary for acquiring a basic understanding.
The interested reader is referred to \cite[Chap.~2]{genzel2015master} and \cite{conrad2015spa} for a more extensive discussion of the biological and clinical background.

Returning to our initial challenge of variable selection, we may ask under which conditions an accurate reconstruction of $\trusig$ from $\{(\lat_i, y_i)\}_{1 \leq i \leq m}$ is possible.
For instance, when assuming a Gaussian distribution of the data, $\lat_i \distributed \Normdistr{\vnull}{\Covmatr}$ with $\Covmatr \in \R^{p\times p}$ positive definite, this task becomes practically feasible (even in a much more general setting). Efficient algorithms as well as rigorous recovery guarantees are indeed available in this case; see \cite{genzel2016estimation,plan2013robust,plan2015lasso,ai2014onebitsubgauss,jacques2013onebit} for example.

Unfortunately, the situation is more complicated in most practical applications.
The signal variables $\lat_i = (s_{i,1}, \dots, s_{i,p})$ usually cannot be observed directly but merely in terms of a certain \define{data representation}. A typical model for a \define{real-world data set} may look as follows:
\begin{equation}\label{eq:intro:datamodel}
	\data_i = \sum_{k = 1}^p s_{i,k} \atom_k + \std\latnoise_i \in \R^d, \quad i = 1, \dots, m,
\end{equation}
where $\atom_1, \dots, \atom_p \in \R^d$ are fixed \define{feature atoms} (or \define{patterns}) and $\std\latnoise_i \in \R^d$ generates additive random \define{noise} in each entry.
The (hidden) variables $s_{i,1}, \dots, s_{i,p}$ are rather encoded as (random) scalar factors of a linear combination now, building up the data vector $\data_i$; this is why the $s_{i,1}, \dots, s_{i,p}$ are sometimes also called \define{signal factors}.

In the setup of proteomics, the model of \eqref{eq:intro:datamodel} could, for example, describe \define{mass spectrometry data} (\define{MS~data}), which is a widely-used acquisition method to detect the concentration of protein structures within clinical samples, e.g., blood or urine.
A typical mass spectrum is shown in Figure~\ref{fig:intro:motivation:msdata}.
We may assume that each single feature atom $\atom_k \in \R^d$ corresponds to a particular peak, determining its \define{position} and \define{shape}, whereas its \define{height} is specified by the scalar factor $s_{i,k}$, which varies from sample to sample. 
Thus, the information-of-interest is not directly available anymore, but only represented in terms of the (relative) peak heights.
The technical and physical details of MS~data will be further discussed in Subsection~\ref{subsec:applications:msdata}, including a precise definition of the data model.
\begin{figure}
	\centering
	\includegraphics[width=1\textwidth]{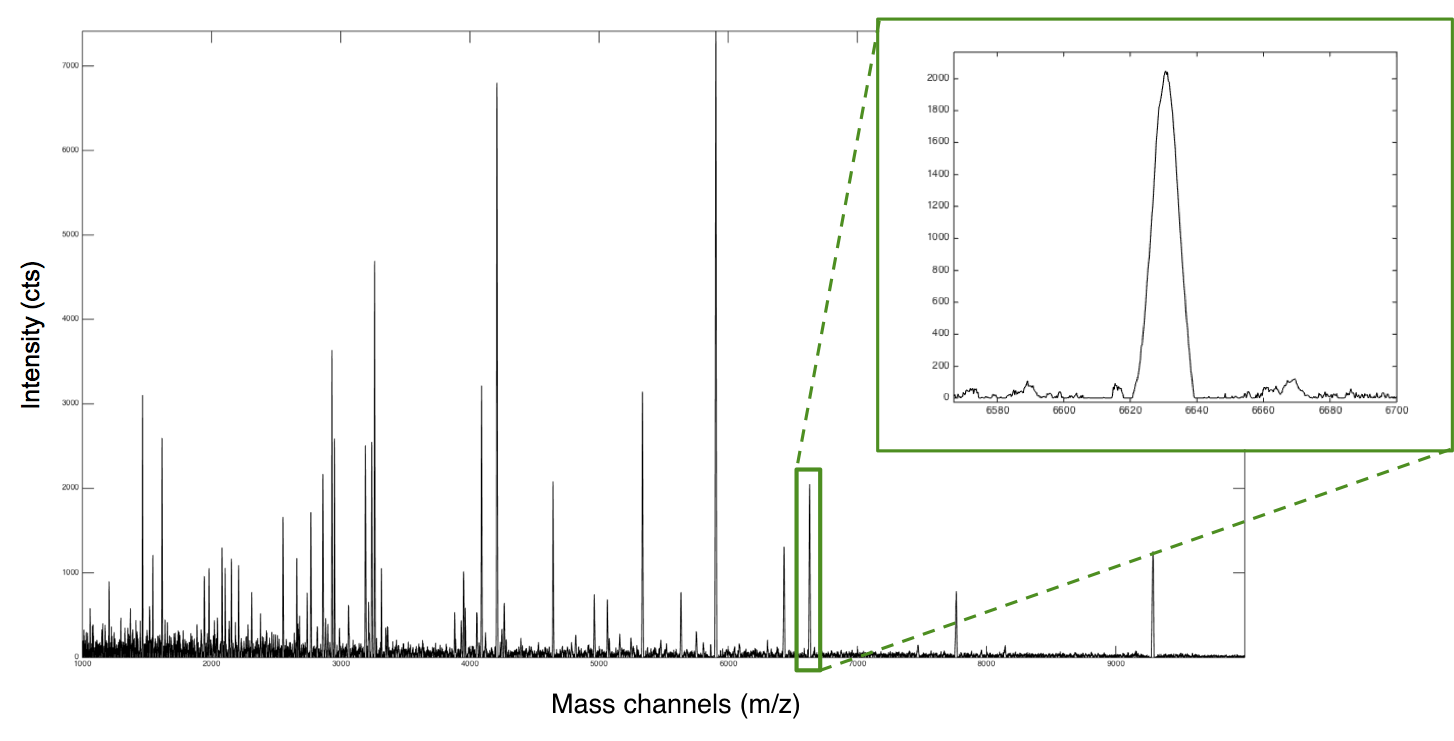}
	\caption{A typical example of a (finite-resolution) mass spectrum.
	The horizontal axis represents the indices $j = 1, \dots, d$ of the associated data vector $\data_i = (x_{i,1}, \dots, x_{i,d})$, plotted against its entries on the vertical axis (in the above spectrum, we have $d = 42\ 390$).
	Each of the characteristic \emph{Gaussian-shaped peaks} can be identified with a specific type of protein, where its maximal intensity is proportional to the molecular concentration within the examined sample.
	}
	\label{fig:intro:motivation:msdata}
\end{figure}

As a general conclusion, we are often obliged to deal with data pairs $\{(\data_i, y_i)\}_{1 \leq i \leq m}$---for example, generated by \eqref{eq:intro:datamodel} and \eqref{eq:intro:binarymodel}, respectively---but our primal goal is still to estimate the signal vector $\trusig$ from them. %, whose support contains the actual variables-of-interest.
The main difficulty is that in many practical situations neither the feature atoms $\atom_1,\dots, \atom_p$ nor the signal factors $\lat_1, \dots, \lat_m$ are explicitly known.
In particular, there is no straightforward way to evaluate the binary observation scheme \eqref{eq:intro:binarymodel}.
One common approach is to perform a \define{factor analysis} for the data model \eqref{eq:intro:datamodel}, i.e., using the available sample collection $\data_1, \dots, \data_m$ to approximate both $\atom_1,\dots, \atom_p$ and $\lat_1, \dots, \lat_m$. But as long as the sample count $m$ is small and noise is present, such a procedure could become very unstable, and moreover, the desired factorization might be highly non-unique.

To circumvent these drawbacks, let us follow a more basic strategy that extracts features directly from the data.
For that matter, one could simply try to mimic the behavior of the classification model \eqref{eq:intro:binarymodel} by learning a \define{feature vector} $\solu\fv \in \R^d$ such that\footnote{In the literature, \emph{feature vectors} are often referred to as the outputs of a \emph{feature map}. But in this work, a feature vector $\fv \in \R^d$ is rather understood to be a \emph{vector that extracts features from the data} by a linear projection $\sp{\data_i}{\fv}$.}
\begin{equation}\label{eq:intro:binarymodeldd}
	y_i = \sign(\sp{\data_i}{\solu\fv})
\end{equation}
holds at least for ``many'' $i = 1, \dots, m$.\footnote{The word ``many'' essentially means that the equality of \eqref{eq:intro:binarymodeldd} holds for a large fraction of samples. This will be made more precise later on in Section~\ref{sec:results}.}
Similarly to the initial problem of variable selection, the non-zero entries of $\solu\fv$ may determine several features of the data that are relevant to an accurate prediction of the label.
For the case of (proteomics-based) MS~data, we would therefore try to choose the support of $\solu\fv$ as small as possible, but in such a way that its entries are closely located to those peaks which are strongly correlated with the examined disease.
Finally, a physician could use his/her expertise to identify these (few) peaks with the associated proteins ``by hand.'' From a practical perspective, this would essentially solve the original problem, although the molecular concentrations $\lat_i$ are still unknown.
Such a simple work-flow for (sparse) feature selection from proteomics data can lead in fact to state-of-the-art results for both simulated and real-world data sets; see \cite{conrad2015spa,genzel2015master} for some numerical experiments.
These works promote that many classical algorithms from machine learning, combined with some appropriate preprocessing, already provide a reliable feature vector $\solu\fv$. Perhaps, the most prominent example is the \define{Lasso}, originally introduced by Tibshirani \cite{tibshirani1996lasso}:
\begin{equation}
	\solu\fv = \argmin_{\fv \in \R^d} \tfrac{1}{2m} \sum_{i = 1}^m (\sp{\data_i}{\fv} - y_i)^2 \quad \text{subject to $\lnorm{\fv}[1] \leq R$,} \label{eq:intro:lasso}\tag{$P_{R}$}
\end{equation}
where the \define{regularization parameter} $R > 0$ controls the sparsity of the minimizer.
% Note again that the application of \emph{sparse} methods is very natural in proteomics, since disease fingerprints \gk{typically consist of very few proteins} and a certain regularization to avoid overfitting is necessary anyway.

Inspired by the previous (rather empirically-based) observations, the major objective of this work is to develop an abstract framework for feature selection from real-world data, including provable guarantees for a fairly general class of problem instances. In this context, we particularly aim to address the following three questions:
\begin{properties}[3em]{Q}
\item\label{question:intro:theoretical}
	\highlighted{Which information about $\trusig$ does an estimated feature vector $\solu\fv$ already carry?
    Can we use $\solu\fv$ to define an accurate estimator $\solu\sig = \solu\sig(\solu\fv)$ of $\trusig$? If so, how do we measure the approximation error?}
\item\label{question:intro:practical}
	\highlighted{Regarding the issues of \ref{question:intro:theoretical}, to which extent can we aomit explicit knowledge of the data model \eqref{eq:intro:datamodel}? More precisely, in how far is the definition of $\solu\sig$ affected by the unknown feature atoms $\atom_1, \dots, \atom_p$?}
% 	\highlighted{Under which conditions on the data model \eqref{eq:intro:datamodel} can we use the feature vector $\solu\fv$ to make an interesting statement
%  about the original problem, that is, selecting variables from the $\lat_i$? \todocontent{Was meinst Du mit `interesting statement'? Das wird nich klar auch
%  nicht durch den `that is' Teil.} \gk{To which extent are we allows to} omit explicit knowledge of the feature atoms $\atom_1, \dots, \atom_p$? \todocontent{
%  Das muss immer im Zusammenhand gesehen werden, was wir erreichen wollen; also: was soll dann noch gelten?}}
\item\label{question:intro:setup}
	\highlighted{In how far can we extend the above model setup? In particular: Can we replace the $\sign$-function in \eqref{eq:intro:binarymodel} by a general (unknown and noisy) non-linearity? 
	What other types of convex loss functions and structural constraints could be considered for the Lasso-estimator \eqref{eq:intro:lasso}?
% 	Can the Lasso-estimator \eqref{eq:intro:lasso} be generalized to ... \todomg{überarbeiten}
% 	Can we allow for other types of convex loss functions in \eqref{eq:intro:lasso} or for signal structures that go beyond sparsity? 
	}
\end{properties}

It will turn out that, although the first two questions are obviously closely connected, their answers might be quite different. Indeed, while \ref{question:intro:theoretical} and \ref{question:intro:setup} are rather theoretically motivated and we will be able to give general solutions, a positive statement on \ref{question:intro:practical} will heavily depend on the intrinsic structure of the data model \eqref{eq:intro:datamodel}.

\subsection{Key Concepts: Signal Domain, Data Domain, and Optimal Representations}
\label{subsec:intro:keyidea}

Let us first focus on the problem issue of \ref{question:intro:theoretical}.
As already mentioned above, our scope is initially restricted to the \define{data domain} $\R^d$ where the samples $\data_1, \dots, \data_m \in \R^d$ belong.
%, which in turn follows some unknown factor model \eqref{eq:intro:datamodel}.
But our actual task is associated with the \define{signal domain} $\R^p$, which contains the hidden signal variables $\lat_1, \dots, \lat_m$ and the desired parameter vector $\trusig$. Hence, the following question emerges:
\highlighted{How can we express the underlying observation procedure \eqref{eq:intro:binarymodel} in terms of the data domain?}
Inspired by the previous subsection, we shall simply assume that there exists \emph{some} feature vector $\tru\fv \in \R^d$ such that
\begin{equation}\label{eq:intro:trudatamodel}
	y_i = \sign(\sp{\lat_i}{\trusig}) = \sign(\sp{\data_i}{\tru\fv}) \quad \text{for ``many'' $i = 1, \dots, m$.}
\end{equation}
However, this choice of $\tru\fv$ is typically non-unique. The Gaussian-shaped peaks of MS~data, for example, have a certain spatial extent (see Figure~\ref{fig:intro:motivation:msdata}) so that the data vectors $\data_1, \dots, \data_m$ consist of several almost collinear entries. Due to this \emph{redundancy}, we may find another feature vector $\solu\fv \in \R^d$ whose support is disjoint from the one of $\tru\fv$ but satisfies \eqref{eq:intro:trudatamodel} as well.
It is therefore not very meaningful to compare feature vectors only in terms of their supports or their Euclidean distance.

In order to further investigate the properties of $\tru\fv$ and to derive an appropriate measure of distance for feature vectors,
we need to incorporate the factor model of \eqref{eq:intro:datamodel}. Collecting the atoms in a \define{feature matrix} $\atoms := \matr{\atom_1 \mid \dots \mid \atom_p} \in \R^{d \times p}$, we obtain a compact expression $\data_i = \atoms \lat_i + \std\latnoise_i$, $i = 1, \dots, m$, that leads to the following identity:
\begin{equation}\label{eq:intro:fromddtosd}
	\sign(\sp{\data_i}{\tru\fv}) = \sign(\sp{\atoms \lat_i}{\tru\fv} + \sp{\std\latnoise_i}{\tru\fv}) = \sign(\sp{\lat_i}{\atoms^\T\tru\fv} + \sp{\std\latnoise_i}{\tru\fv}).
\end{equation}
Supposed that the additive noise term $\sp{\std\latnoise_i}{\tru\fv}$ is relatively small compared to the signal term $\sp{\lat_i}{\atoms^\T\tru\fv}$---which essentially means that the \define{signal-to-noise ratio} (\define{SNR}) of the data is high---we may expect that
\begin{equation}\label{eq:intro:ddrepresentation}
	y_i = \sign(\sp{\lat_i}{\trusig}) = \sign(\sp{\lat_i}{\atoms^\T\tru\fv}) \quad \text{for ``many'' $i = 1, \dots, m$.}
\end{equation}
These simple calculations show us how the hidden classification rule can be (approximately) represented in terms of the feature vector $\tru\fv \in \R^d$. %, which is associated with the data domain $\R^d$.
In particular, the matrix $\dict := \atoms^\T \in \R^{p \times d}$ thereby arises as a natural transform between the signal domain and data domain.
Note that $\dict$ just contains all feature atoms $\atom_1, \dots, \atom_p$ as rows, which should be primarily seen as \emph{fixed} and \emph{intrinsic} parameters of the data model \eqref{eq:intro:datamodel}.

If, in addition, $\dict$ is surjective,\footnote{This automatically implies that $p \leq d$. In fact, if the assumption of surjectivity is not satisfied, there is a certain evidence that the available data set might not be ``rich enough'' to describe the original problem of variable selection (associated with the signal domain).} we may even choose $\tru\fv \in \R^d$ such that $\trusig = \dict \tru\fv$ and \eqref{eq:intro:ddrepresentation} holds for \emph{all} samples $i = 1, \dots, m$.
In this context, it is quite common to regard $\dict$ as an overcomplete \define{feature dictionary} that allows us to \emph{represent} the signal-of-interest $\trusig$ by a \emph{coefficient vector} $\tru\fv$ from the data domain.
But nevertheless, the choice of $\tru\fv$ could be still highly non-unique. So one might ask for the following:
\highlighted{What is the ``best-possible'' representation of $\trusig$ by the dictionary $\dict$?}
On the one hand, we should make use of the fact that $\trusig$ often carries some additional (low-complexity) structure.
As an example, one could try to mimic the sparsity constraint of the Lasso \eqref{eq:intro:lasso} and assume that there exists $\tru\fv \in R \ball[1][d]$ with $R > 0$ small and $\trusig = \dict \tru\fv$;\footnote{Here, $\ball[1][d] = \{ \data \in \R^d \suchthat \lnorm{\data}[1] \leq 1 \}$ denotes the $\l{1}$-unit ball of $\R^d$.} this would actually mean that $\trusig$ can be \emph{sparsely} represented by $\dict$.
% For example, when extracting features from proteomics data, it turns out that the desired disease fingerprint is typically very \emph{sparse}.
% \footnote{One might wonder whether this hypothesis of sparsity corresponds to the signal vector $\trusig$ or the feature vector $\tru\fv$. Fortunately, the rather simple structure of MS~data (``well-separated'' peaks) implies that sparsity in the signal and data domain basically means the same.}
% Inspired by the sparsity constraint of the Lasso \eqref{eq:intro:lasso}, we may therefore assume $\trusig \in R \dict \ball[1][d]$, i.e., there exists some $\tru\fv \in R \ball[1][d]$ with (small) sparsity parameter $R > 0$ such that $\trusig = \dict \tru\fv$.\footnote{Here, $\ball[1][d] = \{ \data \in \R^d \suchthat \lnorm{\data}[1] \leq 1 \}$ denotes the $\l{1}$-unit ball of $\R^d$.}
On the other hand, it is also essential to control the deviation of the noise term $\sp{\std\latnoise_i}{\tru\fv}$ in \eqref{eq:intro:fromddtosd}, so that the mismatch of our approximation in \eqref{eq:intro:trudatamodel} does not become too large. %sufficiently accurate.
Hence, we may simply minimize its variance:
\begin{equation}\label{eq:intro:noisevariance}
	\tru\stddev^2 := \min_{\substack{\fv \in R \ball[1][d] \\ \trusig = \dict \fv}} \mean[\sp{\std\latnoise_i}{\fv}^2].
\end{equation}
This definition forms a key ingredient of our general framework in Section~\ref{sec:results}.
In fact, any minimizer $\tru\fv \in R \ball[1][d]$ of \eqref{eq:intro:noisevariance} will be referred to as an \define{optimal representation of $\trusig$ by $\dict$} (see Definition~\ref{def:results:dictionary:optimalrepr}).
We will particularly observe that the corresponding \define{noise variance} $\tru\stddev^2$ is immediately related to the SNR, which in turn plays a crucial role in the quality of our error estimates.

Let us finally return to the initial question of \ref{question:intro:theoretical}. Given an accurate classifier $\solu\fv \in \R^d$, the above discussion suggests to define an estimator of $\trusig$ via the dictionary transform $\dict$, that is, $\solu\sig := \dict \solu\fv$.
It will turn out that this idea, although surprisingly simple, is the key step to achieve a rigorous error bound on the Euclidean distance
\begin{equation}\label{eq:intro:errormeasure}
	\lnorm{\solu\sig - \snrscal \trusig} = \lnorm{\dict \solu\fv - \snrscal \dict \tru\fv},
\end{equation}
where $\snrscal$ is an adaptive \define{scaling parameter} that depends on the SNR as well as on the (non-)linear observation model. In particular, \eqref{eq:intro:errormeasure} should be regarded as an intrinsic and natural (semi-)metric to compare two feature vectors within the data domain.

\subsection{A First Glimpse of Our Main Result}
\label{subsec:intro:guarantee}

Before developing the ideas of the previous subsection towards a rigorous mathematical framework in Section~\ref{sec:results}, we would like to briefly illustrate our approach by a first theoretical guarantee for feature selection, which is a simplified and easy-to-read version of our main results Theorem~\ref{thm:results:selection} and Theorem~\ref{thm:results:selection:refined}.
In order to stress the key concepts and to avoid technicalities, let us consider the \emph{noiseless} case here: This means, we assume that the data pairs $\{(\data_i, y_i)\}_{1 \leq i \leq m} \in \R^d \times \{-1,+1\}$ are i.i.d. samples of the models \eqref{eq:intro:binarymodel} and \eqref{eq:intro:datamodel}, where $\lat_i \distributed \Normdistr{\vnull}{\Idm{p}}$ and $\std\latnoise \equiv \vnull$.
Moreover, let $\dictatom_1, \dots, \dictatom_d \in \R^p$ denote the columns of the dictionary $\dict \in \R^{p \times d}$, which are usually referred to as the \define{dictionary atoms}.\footnote{This should not be mixed up with the \emph{feature atoms} $\atom_1, \dots, \atom_p \in \R^d$, which form the rows of $\dict$ and are rather seen as the ``building blocks'' of the data.}
The ``maximal energy'' of the feature variables is then given by the largest column norm:
\begin{equation}
	\dictenergy := \max_{1 \leq j \leq d} \lnorm{\dictatom_j}.
\end{equation}
Using this model setup, we can state the following theorem with its proof being postponed to Subsection~\ref{subsec:proofs:sparseselection}:
\begin{theorem}\label{thm:intro:sparseselection}
	Let the above model assumptions hold true. Moreover, suppose that $\lnorm{\trusig} = 1$ and $\trusig \in R \dict\ball[1][d]$ for some $R > 0$, implying that we can find a representing feature vector $\tru\fv \in R\ball[1][d]$ with $\trusig = \dict \tru\fv$. Then there exist constants $C, C' > 0$ such that the following holds with high probability:
	
	If the number of samples obeys
	\begin{equation}\label{eq:intro:sparseselection:samplecount}
		m \geq C \cdot \dictenergy^2 \cdot R^2 \cdot \log(2d),
	\end{equation}
	then, setting $\solu\sig := \dict \solu\fv$ for any minimizer $\solu\fv$ of the Lasso \eqref{eq:intro:lasso}, we have
	\begin{equation}\label{eq:intro:sparseselection:bound}
		\lnorm{\solu\sig - \sqrt{\tfrac{2}{\pi}}\trusig} = \lnorm{\dict\solu\fv - \sqrt{\tfrac{2}{\pi}}\dict\tru\fv} \leq C' \bigg(\frac{\dictenergy^2 \cdot R^2 \cdot\log(2d)}{m}\bigg)^{1/4}.
	\end{equation}
\end{theorem}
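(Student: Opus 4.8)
\emph{Proof strategy.} The plan is to transport the Lasso \eqref{eq:intro:lasso} back into the signal domain, identify the population minimizer, and then run a standard constrained‑empirical‑risk‑minimization argument. In the noiseless case $\data_i = \atoms\lat_i$ and $\dict = \atoms^\T$, so $\sp{\data_i}{\fv} = \sp{\lat_i}{\dict\fv}$ for all $\fv\in\R^d$ and $\dict^\T\lat_i = \data_i$; writing $\sig := \dict\fv$, the Lasso is equivalent to the constrained least‑squares problem $\solu\sig = \dict\solu\fv = \argmin_{\sig\in\dict(R\ball[1][d])}\tfrac1{2m}\sum_i(\sp{\lat_i}{\sig}-y_i)^2$. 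Since $\lat_i\distributed\Normdistr{\vnull}{\Idm p}$, $\lnorm{\trusig}=1$ and $y_i = \sign(\sp{\lat_i}{\trusig})$, a short computation (split $\lat_i$ into its projection onto $\trusig$ and the orthogonal part) gives $\mean[y_i\lat_i] = \scalfac\trusig$ with $\scalfac := \sqrt{2/\pi}$, and hence, using $\mean[\sp{\lat_i}{\sig}^2] = \lnorm{\sig}^2$, the population risk $\sig\mapsto\tfrac12\mean[(\sp{\lat_i}{\sig}-y_i)^2]$ equals $\tfrac12\lnorm{\sig-\scalfac\trusig}^2$ up to an additive constant, so $\scalfac\trusig$ is its unique minimizer; it is moreover feasible, as $\scalfac\trusig = \dict(\scalfac\tru\fv)$ with $\lnorm{\scalfac\tru\fv}[1] = \scalfac\lnorm{\tru\fv}[1] \le \scalfac R < R$.

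Feasibility of $\scalfac\trusig$ and optimality of $\solu\sig$ give $\tfrac1{2m}\sum_i(\sp{\lat_i}{\solu\sig}-y_i)^2 \le \tfrac1{2m}\sum_i(\sp{\lat_i}{\scalfac\trusig}-y_i)^2$; expanding the squares and setting $\h := \solu\sig - \scalfac\trusig = \dict(\solu\fv - \scalfac\tru\fv)$, with $\lnorm{\solu\fv - \scalfac\tru\fv}[1] \le (1+\scalfac)R \le 2R$, this rearranges into the basic inequality
\[
	\tfrac{1}{2m}\sum_{i=1}^m \sp{\lat_i}{\h}^2 \;\le\; \tfrac1m\sum_{i=1}^m \sp{\lat_i}{\h}\bigl(y_i - \sp{\lat_i}{\scalfac\trusig}\bigr).
\]
The right‑hand side I would bound via $\l{1}$–$\l{\infty}$ duality, using $\sp{\lat_i}{\h} = \sp{\data_i}{\solu\fv-\scalfac\tru\fv}$, by $2R\cdot\bigl\|\tfrac1m\sum_i(y_i-\sp{\lat_i}{\scalfac\trusig})\data_i\bigr\|_\infty$; the vector inside the norm is centred (again because $\mean[y_i\lat_i] = \scalfac\trusig$), its $j$‑th coordinate is an average of i.i.d.\ sub‑exponential variables of $\psi_1$‑norm $\lesssim\dictenergy$ (a bounded factor times the Gaussian $\sp{\dictatom_j}{\lat_i}$, which has variance $\le\dictenergy^2$), so Bernstein's inequality together with a union bound over $j=1,\dots,d$ makes it $\lesssim\dictenergy\sqrt{\log(2d)/m}$ with high probability once $m\gtrsim\log(2d)$. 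For the left‑hand side I would apply a localized Gaussian restricted‑isometry bound (e.g.\ the matrix deviation inequality) on the symmetric convex set $T := \dict(2R\ball[1][d])\ni\h$, whose Gaussian mean width is $\meanwidth{T} = 2R\,\mean[\max_{1\le j\le d}\abs{\sp{\dictatom_j}{\gaussian}}] \lesssim R\dictenergy\sqrt{\log(2d)}$, to obtain $\tfrac1m\sum_i\sp{\lat_i}{\v}^2 \ge \lnorm{\v}^2 - c\tfrac{\meanwidth{T}}{\sqrt m}\lnorm{\v}$ for all $\v\in T$, in particular for $\v=\h$.

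Feeding both estimates into the basic inequality yields a scalar quadratic inequality $t^2 \le a\,t + b$ for $t := \lnorm{\h}$ with $a,b \lesssim R\dictenergy\sqrt{\log(2d)/m}$, hence $t \le a + \sqrt b$. Since $1 = \lnorm{\trusig} = \lnorm{\dict\tru\fv} \le \dictenergy\lnorm{\tru\fv}[1] \le \dictenergy R$ we have $R\dictenergy \ge 1$, so under the sample bound $m \ge C\dictenergy^2 R^2\log(2d)$ the term $a$ is dominated by $\sqrt b$ (and, since then $m\gtrsim\log(2d)$ as well, the Bernstein step is legitimate), giving $\lnorm{\solu\sig - \scalfac\trusig} \lesssim \sqrt b \lesssim (\dictenergy^2 R^2\log(2d)/m)^{1/4}$, which is \eqref{eq:intro:sparseselection:bound}. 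I expect the main obstacle to be the left‑hand‑side estimate: one has to use a \emph{localized} deviation inequality — controlling $\bigl|(\tfrac1m\sum_i\sp{\lat_i}{\v}^2)^{1/2} - \lnorm{\v}\bigr|$ over $T$ and squaring it — rather than a crude two‑sided bound on $\sup_{\v\in T}\bigl|\tfrac1m\sum_i\sp{\lat_i}{\v}^2 - \lnorm{\v}^2\bigr|$, because only the former keeps the leading $\lnorm{\h}^2$ with unit coefficient and an error merely linear in $\lnorm{\h}$, which is exactly what produces the quartic‑root rate at the stated sample complexity. (All of this is of course subsumed by the general machinery of Section~\ref{sec:results}; the above is just the shortest direct path.)
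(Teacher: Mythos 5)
Your proof is correct, but it takes a genuinely different route from the paper's. The paper proves Theorem~\ref{thm:intro:sparseselection} in a few lines by specializing the general machinery: it sets $q=0$ (so $\extd\dict=\dict$ and $\advdev=0$), reads off $\scalfac=\sqrt{2/\pi}$ from Example~\ref{ex:results:estimatorsignal:lossmodel}, bounds $\effdim{\dict(R\ball[1][d])}\lesssim R^2\dictenergy^2\log(2d)$ via the polytope estimate \eqref{eq:results:estimatorsignal:effdim:dictionary:dictpolytope}, and then invokes Theorem~\ref{thm:results:selection}, which itself rests on the imported black-box result \cite[Thm.~1.3]{genzel2016estimation}. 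You instead give a self-contained constrained-ERM argument: identifying $\scalfac\trusig$ as the population minimizer via $\mean[y_i\lat_i]=\sqrt{2/\pi}\,\trusig$, running the basic inequality against the feasible point $\scalfac\trusig$ (feasible for the same reason the paper uses, namely $\scalfac<1$), controlling the cross term by $\l{1}$--$\l{\infty}$ duality plus Bernstein, and lower-bounding the empirical quadratic form by a localized Gaussian deviation inequality over $T=\dict(2R\ball[1][d])$, whose mean width $\lesssim R\,\dictenergy\sqrt{\log(2d)}$ is exactly the quantity the paper packages as $\meanwidth{\extd\dict\sset}$. Your approach buys transparency --- the quadratic inequality $t^2\le at+b$ makes the origin of the $m^{-1/4}$ rate and of the sample-size threshold visible, and the observation $R\dictenergy\ge 1$ needed to absorb the linear term is a nice touch the paper never has to make explicit --- at the cost of being wedded to the square loss and the $\sign$ non-linearity; the paper's detour through optimal representations, the extended signal domain, and general strictly convex losses is what lets the same two-line reduction also deliver Theorems~\ref{thm:results:selection} and~\ref{thm:results:selection:refined}. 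The only point worth flagging is that your deviation-inequality step must indeed be the localized (additive-in-$\lnorm{\h}$) form you describe rather than a uniform bound on $\sup_{\v\in T}|\tfrac1m\sum_i\sp{\lat_i}{\v}^2-\lnorm{\v}^2|$; you correctly identify this, so there is no gap.
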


% Roughly speaking, this theorem states that a combination of \eqref{eq:intro:sparseselection:samplecount} with the error bound \eqref{eq:intro:sparseselection:bound} yields a sufficient condition for successful feature selection:\footnote{Some people strictly refer to \emph{variable selection} as the task of recovering the support of $\trusig$. In this work however, \emph{selecting variables} and \emph{approximating} $\trusig$ (in the Euclidean sense) are understood to be the same challenges. This is also due to the fact that we shall go beyond classical sparsity patterns later on.}
In simple terms, the error bound \eqref{eq:intro:sparseselection:bound} (together with the condition of \eqref{eq:intro:sparseselection:samplecount}) yields a sufficient criterion for successful feature selection:\footnote{Some people strictly refer to \emph{variable selection} as the task of recovering the support of $\trusig$. In this work however, \emph{selecting variables} and \emph{approximating} $\trusig$ (in the Euclidean sense) are understood to be the same challenges. This is also due to the fact that we shall go beyond classical sparsity patterns later on.}
\highlighted{An accurate recovery of $\trusig$ is guaranteed as long as the sample count $m$ (greatly) exceeds $\dictenergy^2 \cdot R^2 \cdot \log(2d)$.}
Thus, at least in our simplified model setting, Theorem~\ref{thm:intro:sparseselection} gives a satisfactory answer to \ref{question:intro:theoretical}.
It is particularly important that the right-hand side of \eqref{eq:intro:sparseselection:bound} only \emph{logarithmically} depends on the dimension $d$ of the data, which can be extremely large in practice.
As a consequence, the above result indicates that feature selection is already possible if only a relatively few samples are available, which is also a typical constraint in realistic scenarios.

In contrast, Theorem~\ref{thm:intro:sparseselection} does not give a full solution to \ref{question:intro:practical}. The dictionary $\dict$ is unfortunately \emph{unknown} in many applications so that we are neither able to explicitly construct the estimator $\solu\sig$ nor to assess the quality of $\solu\fv$ in the sense of \eqref{eq:intro:sparseselection:bound}.
Therefore, it is a remarkable feature of Theorem~\ref{thm:intro:sparseselection} that variable selection by a standard algorithm is still (theoretically) feasible, although a factorization of the data $\data_i = \dict^\T \lat_i$ is completely missing.

Apart from that, the approximation quality of \eqref{eq:intro:sparseselection:bound} also depends on an appropriate bound for the product $\dictenergy^2 \cdot R^2$.
In fact, both factors heavily rely on the transformation rule of $\dict$, which may change substantially from application to application.
We shall return to this issue in Section~\ref{sec:applications} where we further discuss the practical relevance of our theoretical findings and investigate some desirable properties (and modifications) of the feature dictionary $\dict$.

\subsection{Contributions, Related Work, and Expected Impact}

One major goal of this work is to develop a further understanding of why even standard approaches, such as the Lasso or logistic regression, perform surprisingly well in many real-world applications.
Indeed, our main results show that successful feature selection can be already achieved with estimators which only take the raw\footnote{For simplicity, we will still speak of ``raw'' data if it has undergone some common preprocessing, like smoothing or standardization, which is not part of the actual selection procedure.} sample pairs $\{(\data_i, y_i)\}_{1 \leq i \leq m}$ as input and do not require any specific information about the data.
Regarding the illustrative setup of Theorem~\ref{thm:intro:sparseselection}, we shall also see in Section~\ref{sec:results} that the underlying model assumptions can be further generalized, including noisy non-linear outputs, strictly convex loss functions, and arbitrary (convex) signal structures.
In this sense, our framework gives fairly general solutions to the initial challenges of \ref{question:intro:theoretical} and \ref{question:intro:setup}.

As already sketched in Subsection~\ref{subsec:intro:keyidea}, our key idea is to mimic the true observation model within the data domain.
Using the novel concept of \emph{optimal dictionary representations} (cf. Definition~\ref{def:results:dictionary:optimalrepr}), it is in fact possible to obtain an approximation of $\trusig$ while the actual estimator works in the data domain.
The proofs of our results are based on the recent work of \cite{genzel2016estimation}, where the first author has provided an abstract toolbox for signal recovery in high dimensions.
To a certain extent, this paper also continues the philosophy of \cite{genzel2016estimation}, in the sense that the unknown data decomposition can be seen as an additional source of model uncertainty. %which is unknown to the applied reconstruction method.
But we will try to keep this exposition as self-contained as possible and restate the used main principles of \cite{genzel2016estimation}.

Most approaches in classical learning theory aim to model the posterior distribution $\mean[\yrnd|\datarnd]$ (conditional expectation) in a direct way.
In our context, this would basically mean that a model of the form $y_i \approx \sp{\data_i}{\tru\fv}$ is assumed to be the ground-truth output rule.
There has been a remarkable effort during the last decades to prove recovery guarantees (for $\tru\fv$) if the data variables $\data_i$ are not ``too degenerated.''
Prominent examples are the \define{restricted isometry property}, \define{restricted eigenvalue condition}, or \define{irrepresentability condition}, which require that the $\data_i$ are sufficiently well-behaved; see \cite{foucart2013cs,hastie2015sparsity,buhlmann2011statistics} for overviews.
But unfortunately, real-world data is typically highly redundant, so that these assumptions are not even met in relatively simple cases, such as mass spectra (see Figure~\ref{fig:intro:motivation:msdata}).
For that reason, various strategies have been recently suggested in the literature to deal with redundant (or almost perfectly correlated) features, for instance, \define{hierarchical clustering} (\cite{buehlmann2013clustering}) or \mbox{\define{OSCAR/OWL}} (\cite{bondell2008oscar,figueiredo2014owl}).
These methods are however mostly adapted to very specific settings and therefore do not allow for a general treatment of our problem setup.
In contrast, we shall not focus on designing sophisticated algorithms, but rather propose a novel perspective on the challenge of variable selection:
As pointed out above, one may easily circumvent the issue of redundancy by considering a hidden measurement process $y_i \approx \sp{\lat_i}{\trusig}$ and relating it to the data prior $\data_i$ in a separate step.
Such a combination of models has been much less studied theoretically, and to the best of our knowledge, this work provides the first rigorous mathematical result in this direction.
Hence, we hope that our key techniques, primarily the idea of optimal representations, could also have a certain impact on future developments in the field of feature extraction.

An alternative branch of research aims at extracting the signal variables $\lat_1, \dots, \lat_m$ directly from the raw data $\data_1, \dots, \data_m$. This could be done by applying an appropriate \define{feature map} (to the data), which is typically obtained by \define{dictionary learning} or a \define{factor analysis}.
Afterwards, the question of \ref{question:intro:theoretical}--\ref{question:intro:setup} would be essentially superfluous, since any estimator could now explicitly invoke the hidden output rule.
But on the other hand, such a strategy is only feasible as long as sufficiently many samples are available and the data are not too noisy---and these are two assumptions which are hardly satisfied in practice. For example, a \emph{peak detection} for MS~data might be extremely unstable if some of the peaks are ``buried'' in the baseline noise.
It was already succinctly emphasized by Vapnik in \cite[p.~12]{vapnik1998learning} that a direct approach should be always preferred:
\begin{highlight}
``If you possess a restricted amount of information for solving some problem, try to solve the problem directly and never solve the more general problem as an intermediate step. It is possible that the available information is sufficient for a direct solution but is insufficient for solving a more general intermediate problem.''
\end{highlight}
This fundamental principle is precisely reflected by our findings, showing that selecting features and learning a feature map can (and should) be considered as separate tasks.

The generality of our framework, in particular that $\dict$ can be arbitrary, comes with the drawback that we cannot always make a significant statement on \ref{question:intro:practical}. If no further information on the dictionary $\dict$ is given, it is virtually impossible to assess the practical relevance of an estimated feature vector $\solu\fv$.
As a consequence, when establishing a novel result for a specific application, one always needs to carefully analyze the intrinsic structure of $\dict$.
This will be illustrated for MS~data in Subsection~\ref{subsec:applications:msdata}, but we believe that our theorems may be applicable to other types of data as well, such as microarrays or hyperspectral images.
For that matter, some general rules-of-thumbs are provided in Subsection~\ref{subsec:applications:relevance}, indicating whether or not a successful feature extraction (by standard estimators) can be expected for a certain problem-of-interest.

There is also an interesting relationship of our results to \emph{compressed sensing (CS) with dictionaries} (see \cite{candes2011csdict,rauhut2008csdict} for example).
In order to apply the guarantees from \cite{genzel2016estimation}, the convex optimization program (e.g., \eqref{eq:intro:lasso}) is actually transformed into a \define{synthesis formulation} using the feature dictionary $\dict$ (see Subsection~\ref{subsec:proofs:selection}, particularly \eqref{eq:proofs:selection:estimatorequiv}).
However, it is not entirely clear if one could make use of this connection, since the CS-related theory strongly relies on an explicit knowledge of the dictionary.

\subsection{Outline and Notation}

The main focus of Section~\ref{sec:results} is on the questions of \ref{question:intro:theoretical} and \ref{question:intro:setup}. In this section,
we will develop the setup of the introduction further towards an abstract framework for feature selection, including more general data and observation models (Subsections~\ref{subsec:results:model} and \ref{subsec:results:dictionary}), strictly convex loss functions, and arbitrary convex signal structures (Subsection~\ref{subsec:results:estimatorsignal}).
The Subsections~\ref{subsec:results:guarantee} and \ref{subsec:results:refinements} then contain our main recovery guarantees (Theorem~\ref{thm:results:selection} and Theorem~\ref{thm:results:selection:refined}), while their proofs are postponed to Subsection~\ref{subsec:proofs:selection}.
The practical scope of our results is studied afterwards in Section~\ref{sec:applications}.
In this course, we will particularly investigate the benefit of standardizing the data (Subsection~\ref{subsec:applications:standardization}).
Moreover, the issue of \ref{question:intro:practical} is addressed again, first by returning to our prototype example of MS~data (Subsection~\ref{subsec:applications:msdata}), followed by a more general discussion (Subsection~\ref{subsec:applications:relevance}).
Some concluding remarks and possible extensions are finally presented in Section~\ref{sec:conclusion}.

Throughout this paper, we will recurrently make use of several (standard) notations and conventions which are summarized in the following list:
\begin{listing}
\item
	Vectors and matrices are usually written in boldface, whereas their entries are referenced by subscripts. Let $\x = (x_1, \dots, x_n) \in \R^n$ be a vector (unless stated otherwise, it is always understood to be a \emph{column} vector).
	The \define{support} of $\x$ is defined by
	\begin{equation}
		\supp(\x) := \{ 1 \leq j \leq n \suchthat x_j \neq 0 \},
	\end{equation}
	and its cardinality is $\lnorm{\x}[0] := \cardinality{\supp(\x)}$. For $1 \leq p \leq \infty$, the \define{$\l{p}$-norm} of $\x$ is given by
	\begin{equation}
	\lnorm{\x}[p] := \begin{cases} (\sum_{j = 1}^n \abs{x_j}^p)^{1/p}, & p < \infty, \\ \max_{1 \leq j \leq n} \abs{x_j}, & p = \infty.
	\end{cases}
	\end{equation}
	The associated \define{unit ball} is denoted by $\ball[p][n] := \{ \x \in \R^n \suchthat \lnorm{\x}[p] \leq 1 \}$ and the \define{(Euclidean) unit sphere} is $S^{n-1} := \{ \x \in \R^n \suchthat \lnorm{\x} = 1 \}$. The \define{operator norm} of a matrix $\vec{M} \in \R^{n'\times n}$ is defined as $\norm{\vec{M}} := \sup_{\x \in S^{n-1}} \lnorm{\vec{M}\x}$.
\item
	Let $Z$ and $Z'$ be two real-valued random variables (or random vectors). The \define{expected value} of $Z$ is denoted by $\mean[Z]$ and the \define{variance} by $\var[Z]$. Similarly, we write $\mean[Z|Z']$ for the \define{conditional expectation} of $Z$ with respect to $Z'$.
	The \define{probability} of an event $A$ is denoted by $\prob[A]$.
\item
	The letter $C$ is always reserved for a constant, and if necessary, an explicit dependence of $C$ on a certain parameter is indicated by a subscript.
% 	Especially in the proofs, we will treat $C$ sometimes as a \define{generic constant}, meaning that it might change its value from step to step.
	More specifically, $C$ is said to be a \define{numerical} constant if its value is independent from all involved parameters in the current setup.
	In this case, we sometimes simply write $A \lesssim B$ instead of $A \leq C \cdot B$; and if $C_1 \cdot A \leq B \leq C_2 \cdot A$ for numerical constants $C_1, C_2 > 0$, we use the abbreviation $A \asymp B$.
\item
	The phrase ``with high probability'' means that an event arises at least with a fixed (and high) \emph{probability of success}, for instance, 99\%. Alternatively, one could regard this probability as an additional parameter which would then appear as a factor somewhere in the statement. But for the sake of convenience, we will usually omit this explicit quantification.
\end{listing}

\section{General Framework for Feature Selection and Theoretical Guarantees}
\label{sec:results}

In this section, we shall further extend the exemplary setup of the introduction, ultimately leading to an abstract framework for feature selection from real-world data.
Our main results, Theorem~\ref{thm:results:selection} and Theorem~\ref{thm:results:selection:refined}, will show that rigorous guarantees even hold in this more advanced situation.
These findings particularly provide relatively general solutions to the issues of \ref{question:intro:theoretical} and \ref{question:intro:setup}.

\subsection{Model Assumptions and Problem Formulation}
\label{subsec:results:model}

For an overview of the notations introduced in this and the subsequent subsection, the reader may also consider Table~\ref{tab:results:notation} below.

Let us start by generalizing the simple binary model of \eqref{eq:intro:binarymodel}.
The following observation scheme was already considered in \cite{genzel2016estimation,plan2015lasso,plan2014highdim}:
\begin{properties}[3em]{M}
\setcounter{enumi}{0}
\item\label{assump:model:obsmodel}
	We assume that the \define{observation variables} $y_1, \dots, y_m \in \R$ obey a \define{semiparametric single-index model}
	\begin{equation}\label{eq:results:model:obsmodel}
		y_i = \fobs(\sp{\lat_i}{\trusig}), \quad i = 1, \dots, m,
	\end{equation}
	where $\trusig \in \R^p$ is the ground-truth \define{signal vector}. The \define{signal variables} $\lat_1, \dots, \lat_m \distributed \Normdistr{\vnull}{\Idm{p}}$ are independent samples of a standard Gaussian vector and $\fobs \colon \R \to \Y$ is a (possibly random) function which is independent of $\lat_i$,\footnote{The randomness of $\fobs$ is understood observation-wise, i.e., for every sample $i \in \{1, \dots, m\}$, we take an \emph{independent} sample of $\fobs$. But this explicit dependence of $\fobs$ on $i$ is omitted here.} with $\Y$ denoting a closed subset of $\R$.
\end{properties}
The range $\Y$ of $\fobs$ defines the \define{observation domain}, restricting the possible values of $y_1, \dots, y_m$. For example, we would have $\Y = \{-1,0,+1\}$ for the binary classification rule in \eqref{eq:intro:binarymodel}. The function $\fobs$ plays the role of a \define{non-linearity} in \eqref{eq:results:model:obsmodel}, modifying the linear output $\sp{\lat_i}{\trusig}$ in a certain (random) way.\footnote{One should be aware of the fact that $f$ could be also a (noisy) \emph{linear} function, even though we will continue to speak of ``non-linearities.''} Remarkably, this perturbation might be even \emph{unknown} so that, in particular, we do not need to assume any knowledge of the noise structure of the observations.

Our extension of the factor model in \eqref{eq:intro:datamodel} looks as follows:
\begin{properties}[3em]{M}
\setcounter{enumi}{1}
\item\label{assump:model:datamodel}
	The sample data are generated from a \define{linear model} of the form
	\begin{equation}\label{eq:results:model:datamodel}
		\data_i = \sum_{k = 1}^p s_{i,k} \atom_k + \sum_{l = 1}^{q} n_{i,l} \atomnoise_l \in \R^d, \quad i = 1, \dots, m,
	\end{equation}
	where the signal variables $\lat_i = (s_{i,1}, \dots, s_{i,p})$ are already determined by \ref{assump:model:obsmodel}. The \define{noise variables} $\latnoise_i = (n_{i,1}, \dots, n_{i,q}) \distributed \Normdistr{\vnull}{\Idm{q}}$, $i = 1, \dots, m$, are independently drawn from a standard Gaussian vector, which is also independent from the $\lat_1, \dots, \lat_m$. As before, $\atom_1, \dots, \atom_p \in \R^d$ are fixed (deterministic) \define{feature atoms} whereas the fixed vectors $\atomnoise_1, \dots, \atomnoise_{q} \in \R^d$ are called \define{noise atoms}.
\end{properties}
Note that the assumptions of $\lat_i$ and $\latnoise_i$ having mean zero is no severe restriction, since the input data is typically centered in advance (see Remark~\ref{rmk:applications:standardization}\ref{rmk:applications:standardization:empirical}).
Similarly, unit variances can be also taken for granted as the actual energy of the signals is determined by their feature and noise atoms, respectively.
The main novelty of \eqref{eq:results:model:datamodel}, compared to \eqref{eq:intro:datamodel}, is obviously the generalized noise term. Indeed, if $q = d$ and $\atomnoise_1, \dots, \atomnoise_d$ are (scalar multiples of) the Euclidean unit vectors of $\R^d$, we would precisely end up with the entry-wise noise structure of \eqref{eq:intro:datamodel}.
Put simply, the data $\data_1, \dots, \data_m$ are built up of a linear combination of atoms where their respective contributions (scalar factors) are random.
But once the observation process \ref{assump:model:obsmodel} is taken into account, a distinction between \emph{feature} and \emph{noise} atoms becomes meaningful:
The signal factors $s_{i,1}, \dots, s_{i,p}$ are precisely those contributing to \eqref{eq:results:model:obsmodel} whereas the noise factors $n_{i,1}, \dots, n_{i,q}$ are irrelevant.
It would be therefore also quite natural to refer to $\lat_i$ and $\latnoise_i$ as \define{active} and \define{inactive} variables, respectively.
This issue will become important again in the following subsection when we introduce the so-called \define{extended signal domain}, unifying both types of signals.
% However, we will continue to speak of signal and noise variables, since this work is mainly inspired by the example of MS~data. \todocontent{Eventuell sollte man hier doch mehr auf die allgemeine Fragestellung abheben um die Begriffe zu begruenden...}

Finally, let us briefly restate the major challenge of feature selection:
\highlighted{Find a robust and accurate estimator $\solu\sig$ of the signal vector $\trusig$, using only a (small) collection of sample pairs $\{(\data_i, y_i)\}_{1 \leq i \leq m} \in \R^d \times \Y$.}
At this point, we would like to call attention to the nomenclature used for our framework: All phrases containing the word ``feature'' are associated with the \define{data domain} $\R^d$; in particular, each single component of $\data_i = (x_{i,1}, \dots, x_{i,d})$ is called a \define{feature variable}.
In contrast, the word ``signal'' is naturally related to the \define{signal domain} $\R^p$.
Hence, it is a bit inconsistent to speak of ``feature selection'' in this context because our task is rather to select signal variables.
But using a phrase like ``signal selection'' would be quite uncommon, and moreover, the output of the estimator (e.g., the Lasso) is actually a \emph{feature vector}.

\begin{remark}
In the language of statistical learning, \eqref{eq:results:model:obsmodel} is a typical example of a \define{discriminative model} that describes the \define{posterior} $\mean[y_i|\lat_i]$, %\footnote{Here, $\mean[y_i|\lat_i]$ denotes the \define{conditional expectation of $y_i$ with respect to $\lat_i$}. Roughly speaking, this corresponds to the probability distribution of $y_i$ when the value of $\lat_i$ is already known.} 
and the above goal would translate into learning the \define{model parameters} $\trusig$.
Together with the \define{prior distribution} of the data $\data_i$ defined in \eqref{eq:results:model:datamodel}, the joint probability distribution of a sample pair $(\data_i, y_i)$ is completely determined.
% So one might ask what the scope of such a (parametric) \defin{generative model} is. %for the data pairs?
In fact, any Gaussian vector can be written in terms of \eqref{eq:results:model:datamodel} with appropriately chosen feature atoms.
The actual limitation of our model are the assumptions in \ref{assump:model:obsmodel}; it is not always true that an observation $y_i$ only depends on a \emph{linear} projection of $\lat_i$. For example, one could think of an \define{additive model} where the entries of $\lat_i$ are modified in a non-linear fashion before projecting.
Fortunately, we will see in Subsection~\ref{subsec:results:refinements} that \ref{assump:model:obsmodel} can be further relaxed. 
The true observations might be even affected by some arbitrary, possibly deterministic noise. 
% But when estimating $\trusig$, we have to pay the price of an additional error term that quantifies this systematic model mismatch.
And furthermore, it will turn out that low correlations between the signal variables do not cause any difficulties, i.e., we can allow for $\lat_i \distributed \Normdistr{\vnull}{\Covmatr}$ with a positive definite covariance matrix $\Covmatr \in \R^{p\times p}$.
\qedrmkhere
\end{remark}

\subsection{Optimal Dictionary Representations and Extended Signal Domain}
\label{subsec:results:dictionary}

Now, we would like to adapt the key concept of \emph{optimal representations} that was already outlined in Subsection~\ref{subsec:intro:keyidea}.
In order to carry out a precise analysis, one clearly needs to incorporate the influence of the noise variables, which was essentially disregarded in \eqref{eq:intro:trudatamodel}.
As foreshadowed in the previous subsection, it is quite natural here to view the noise term as an additional (physical) source of signals.
Mathematically, this corresponds to artificially extending the signal domain by all noise variables so that we actually consider joint vectors $(\lat_i, \latnoise_i) \in \R^{p+q}$.
Thus, collecting the feature and noise atoms as matrices $\atoms := \matr{\atom_1 \mid \dots \mid \atom_p} \in \R^{d\times p}$ and $\atomsnoise := \matr{\atomnoise_1 \mid \dots \mid \atomnoise_{q}} \in \R^{d \times q}$, we obtain a simple factorization of our data model \eqref{eq:results:model:datamodel}:
\begin{equation}\label{eq:results:dictionary:factorization}
	\data_i = \sum_{k = 1}^p s_{i,k} \atom_k + \sum_{l = 1}^{q} n_{i,l} \atomnoise_l = \atoms \lat_i + \atomsnoise \latnoise_i = \matr{\atoms \mid \atomsnoise}  \matr{\lat_i \\ \latnoise_i}.
\end{equation}

A major difficulty is that this factorization is usually \emph{unknown}, forcing us to work in the data domain.
For this reason, we shall consider a \define{representing feature vector} $\tru\fv \in \R^d$ that tries to ``mimic the properties`` of the ground-truth signal vector $\trusig$.
As a first step, let us use \eqref{eq:results:dictionary:factorization} to redo the computation of \eqref{eq:intro:fromddtosd} for our generalized models \ref{assump:model:obsmodel} and \ref{assump:model:datamodel}:
\begin{align}
	y_i' :={} & \fobs(\sp{\data_i}{\tru\fv}) = \fobs\Big(\sp[\Big]{\matr{\atoms \mid \atomsnoise}\matr{\lat_i \\ \latnoise_i}}{\tru\fv}\Big) \\
	={} & \fobs(\underbrace{\sp{\lat_i}{\atoms^\T \tru\fv}}_{=: \tru{s}} + \underbrace{\sp{\latnoise_i}{\atomsnoise^\T \tru\fv}}_{=: \tru{n}}) = \fobs(\tru{s} + \tru{n}), \quad i = 1, \dots, m. \label{eq:results:dictionary:ddrepresentation}
\end{align}
Note that the dependence of the \define{signal term} $\tru{s}$ and the \define{noise term} $\tru{n}$ on the index $i$ was omitted for the sake of convenience, since all samples are identically and independently distributed.

The additional summand $\tru{n}$ in \eqref{eq:results:dictionary:ddrepresentation} might generate a certain \emph{model mismatch} with respect to \eqref{eq:results:model:obsmodel}, which means that we have $y_i \neq y_i'$ for (some of) the samples $i = 1, \dots, m$.
Our primal goal is therefore to ensure that the approximate observation rule of \eqref{eq:results:dictionary:ddrepresentation} matches as closely as possible with the true model of \eqref{eq:results:model:obsmodel}. In other words, we wish to choose $\tru\fv$ in such a way that the deviation between $y_i$ and $y_i'$ becomes very small.
% , and $\advdev$ can be properly controlled.
Following the argumentation of Subsection~\ref{subsec:intro:keyidea}, it is promising to pick some $\tru\fv$ with $\trusig = \atoms^\T \tru\fv$, which implies that $\sp{\lat_i}{\trusig} = \sp{\lat_i}{\atoms^\T \tru\fv} = \tru{s}$.
The matrix $\dict = \atoms^\T \in \R^{p\times d}$ should be again regarded as a \define{feature dictionary}, containing the fixed \emph{intrinsic} parameters of our data model.
To impose some additional structure on $\trusig$, we may assume that the representing feature vector just belongs to a known convex \define{coefficient set} (or \define{feature set}) $\sset \subset \R^d$.
More precisely, we choose
\begin{equation}\label{eq:results:dictionary:fvstructured}
	\tru\fv \in \sset_{\trusig} := \{ \fv \in \R^d \suchthat \scalfac\fv \in \sset \text{ and } \trusig = \dict \fv \} \subset \R^d,
\end{equation}
where $\scalfac$ is constant scaling factor that depends on $\fobs$ and is specified later on in \eqref{eq:results:estimatorsignal:modelparam:scalfac}. This particularly means that $\scalfac\trusig \in \dict \sset$.
The purpose of the assumption \eqref{eq:results:dictionary:fvstructured} is twofold: On the one hand, one would like to reduce the \emph{complexity} of a potential estimator in order to avoid \emph{overfitting}, which is especially important when $m \ll d$. Our main result in Subsection~\ref{subsec:results:guarantee} will show that such a restriction of the solution space is immediately related to the number of required samples.
On the other hand, we may intend to incorporate some heuristic constraints by an appropriate choice of $\sset$.
For example, this could be sparsity, which we have already applied in Theorem~\ref{thm:intro:sparseselection} (with $\sset = R \ball[1][d]$).

Taking \eqref{eq:results:dictionary:fvstructured} into account, we have $y_i = \fobs(\sp{\lat_i}{\trusig}) = \fobs(\tru{s})$ and $y_i' = \fobs(\tru{s} + \tru{n})$.
Our goal is now to match $y_i'$ with $y_i$ by decreasing the impact of the additive noise term $\tru{n} = \sp{\latnoise_i}{\atomsnoise^\T \tru\fv}$.
Since $\latnoise_i \distributed \Normdistr{\vnull}{\Idm{q}}$, we observe that $\tru{n} \distributed \Normdistr{0}{\tru\stddev^2}$ with $\tru\stddev := \lnorm{\atomsnoise^\T \tru\fv}$.
Hence, it is quite natural to consider precisely those $\tru\fv \in \sset_{\trusig}$ that minimize the variance of $\tru{n}$: %(and we set $\tru\stddev^2 := \lnorm{\atomsnoise^\T \tru\fv}^2$):
\begin{definition}\label{def:results:dictionary:optimalrepr}
A feature vector $\tru\fv \in \R^d$ satisfying
\begin{equation}\label{eq:results:dictionary:optimalrepr}
	\tru\fv = \argmin_{\fv \in \sset_{\trusig}} \lnorm{\atomsnoise^\T \fv}^2
\end{equation}
is called an \define{optimal representation of $\trusig$ by the feature dictionary $\dict$}.\footnote{One does not have to be concerned about the (non-)uniqueness of the minimizer in \eqref{eq:results:dictionary:optimalrepr}, since all results of this work hold for \emph{any} optimal representation.}
The associated \define{noise variance} is denoted by $\tru\stddev^2 = \lnorm{\atomsnoise^\T \tru\fv}^2$.
\end{definition}

% Henceforth, we shall call a feature vector $\tru\fv$ satisfying \eqref{eq:results:dictionary:optimalrepr} an \define{optimal representation of $\trusig$ by $\dict$}.
We would like to emphasize that our concept of optimal representations is also underpinned by the fact that it provides the best possible \define{signal-to-noise ratio} (\define{SNR}) for the corrupted observation model \eqref{eq:results:dictionary:ddrepresentation}. Indeed, the SNR of \eqref{eq:results:dictionary:ddrepresentation} can be defined as
\begin{equation}\label{eq:results:dictionary:snr}
	\SNR := \frac{\var[\tru{s}]}{\var[\tru{n}]} = \frac{\var[\sp{\lat_i}{\atoms^\T \tru\fv}]}{\var[\sp{\latnoise_i}{\atomsnoise^\T \tru\fv}]} = \frac{\lnorm{\atoms^\T \tru\fv}^2}{\lnorm{\atomsnoise^\T \tru\fv}^2} =  \frac{\lnorm{\trusig}^2}{\tru\stddev^2},
\end{equation}
where the signal vector $\trusig$ is always assumed to be fixed. Therefore, maximizing the SNR is actually equivalent to minimizing $\tru\stddev^2$.

\begin{remark}\label{rmk:results:dictionary}
\begin{rmklist}
\item
	The previous paragraph has shown that our choice of $\tru\fv$ is optimal with respect to the SNR.
	But it is not entirely clear whether such a notion always leads to the ``best possible'' representation.
	The above argumentation is somewhat heuristic in the sense that we did not specify how the distance between the outputs $y_i$ and $y_i'$ is measured.
	This will be made precise in the course of our main results, namely in \eqref{eq:results:guarantee:noiseparam}.
	A more general approach towards optimal representations is briefly discussed in the concluding part of Section~\ref{sec:conclusion}.
\item
	The noise term $\tru{n}$ in \eqref{eq:results:dictionary:ddrepresentation} generates a certain perturbation of the true observation model from \ref{assump:model:obsmodel}.
	From the perspective of signal recovery, this means that we aim to reconstruct a signal $\trusig$ from samples $y_1, \dots, y_m$ while the available measurement process is inaccurate.
	Indeed, we are only allowed to invoke $y_i' = \fobs(\sp{\data_i}{\tru\fv}) = \fobs(\sp{\lat_i}{\trusig} + \tru{n})$, but not $y_i = \fobs(\sp{\lat_i}{\trusig})$.
	Consequently, one can view the noise term $\tru{n}$ as another source of uncertainty that affects the underlying measurement rule---and by our specific choice of $\tru\fv$ in Definition~\ref{def:results:dictionary:optimalrepr}, we try to minimize its impact.
\item\label{rmk:results:dictionary:extddict}
	When stating our main recovery result in Subsection~\ref{subsec:results:guarantee}, it will be very helpful to be aware of the following alternative strategy, which however leads exactly to Definition~\ref{def:results:dictionary:optimalrepr}.
	For this, let us take the perspective that was already suggested in the previous subsection:
	Instead of regarding the noise factors $\latnoise_i =(n_{i,1}, \dots, n_{i,q})$ as separate variables, one can rather interpret them as additional signal variables that remain \emph{inactive} in the observation process \eqref{eq:results:model:obsmodel}.
	This motivates us to consider the so-called \define{extended signal domain} $\R^{p+q}$ which serves as the ambient space of the joint factors $\extd{\lat}_i := (\lat_i, \latnoise_i) \distributed \Normdistr{\vnull}{\Idm{p+q}}$.\footnote{Hereafter, we agree with the convention that objects which are associated with the extended signal space are usually endowed with a tilde.}
	
	In order to adapt our initial problem formulation to this setup, we first extend the signal vector $\trusig$ in a trivial way by $\tru{\extd\sig} := (\trusig, \vnull) \in \R^{p+q}$, leading to an ``extended'' observation scheme
	\begin{equation}\label{eq:results:dictionary:obsmodelextd}
		y_i = \fobs(\sp{\lat_i}{\trusig}) = \fobs\Big(\sp[\Big]{\matr{\lat_i \\ \latnoise_i}}{\matr{\trusig \\ \vnull}}\Big) = \fobs(\sp{\extd{\lat}_i}{\tru{\extd\sig}}), \quad i = 1, \dots, m.
	\end{equation}
	Similarly, let us introduce the \define{extended dictionary}
	\begin{equation}
		\extd\dict := \matr{\dict \\ \dictnoise} = \matr{\atoms \mid \atomsnoise}^\T = \matr{\atom_1 \mid \dots \mid \atom_p \mid \atomnoise_1 \mid \dots \mid \atomnoise_q}^\T \in \R^{(p+q)\times d},
	\end{equation}
	where $\dictnoise := \atomsnoise^\T$ is referred to as the \define{noise dictionary}.

	Since $\data_i = \extd\dict^\T \extd{\lat}_i$, the computation of \eqref{eq:results:dictionary:ddrepresentation} can be rewritten as follows:
% 	Due to \eqref{eq:results:dictionary:eddrepresentation}, this is also consistent with our notation of $\extd{y}_1, \dots, \extd{y}_m$.}
	\begin{equation} \label{eq:results:dictionary:eddrepresentation}
		y_i' = \fobs(\sp{\data_i}{\tru\fv}) = \fobs(\sp{\extd\dict^\T \extd{\lat}_i}{\tru\fv}) = \fobs(\sp{\extd{\lat}_i}{\extd\dict\tru\fv}), \quad i = 1, \dots, m.
	\end{equation}
	Compared to \eqref{eq:results:dictionary:ddrepresentation}, we are not concerned with an additive noise term here anymore, but this comes along with the drawback that there might not exist an \emph{exact} representation of the extended signal vector $\tru{\extd\sig}$ by $\extd\dict$.
	Indeed, if $p + q > d$, the matrix $\extd\dict$ is not surjective, so that it could be impossible to find $\tru\fv \in \R^d$ with $\tru{\extd\sig} = \extd\dict \tru\fv$.
	Restricting again to $\tru\fv \in \sset_{\trusig}$, we can therefore immediately conclude that the vector
	\begin{equation}
		\tru{\extd\sig}' := \extd\dict \tru\fv = \matr{\dict \tru\fv \\ \dictnoise \tru\fv} = \matr{\trusig \\ \dictnoise \tru\fv}
	\end{equation}
	has a non-vanishing noise component, i.e., $\dictnoise \tru\fv = \atomsnoise^\T \tru\fv \neq \vnull$.
	In order to match \eqref{eq:results:dictionary:eddrepresentation} with \eqref{eq:results:dictionary:obsmodelextd} closely, we can try to minimize the distance between $\tru{\extd\sig}$ and $\tru{\extd\sig}'$:
	\begin{equation}
		\tru\fv = \argmin_{\fv \in \sset_{\trusig}} \lnorm{\tru{\extd\sig} - \extd\dict\fv} = \argmin_{\fv \in \sset_{\trusig}} \lnorm[auto]{\smallmatr{\trusig \\ \vnull} - \smallmatr{\trusig \\ \atomsnoise^\T \fv}} = \argmin_{\fv \in \sset_{\trusig}} \lnorm{\atomsnoise^\T \fv}.
	\end{equation}
	Not very surprisingly, this choice precisely coincides with the finding of \eqref{eq:results:dictionary:optimalrepr}.
	
	It will turn out that the proofs of our main results in Subsection~\ref{subsec:proofs:selection} do strongly rely on the idea of working in the extended signal domain.
% 	In fact, we shall always work in the extended signal domain when applying the recovery guarantees of \cite{genzel2016estimation}.
	This particularly explains why the extended dictionary $\extd\dict$ explicitly appears in the statements of Theorem~\ref{thm:results:selection} and Theorem~\ref{thm:results:selection:refined}.
	\qedrmkhere
\end{rmklist}
\end{remark}

\begin{table}
\centering
\tabulinesep=.5mm
\begin{tabu}{|X[l,m]|X[l,m]|}
\hline\multicolumn{2}{|c|}{\textbf{Signal domain (SD)} $\R^p$} \\ \hline
Signal factors/variables, \newline latent factors/variables (random) & $\lat_i = (s_{i,1}, \dots, s_{i,p}) \distributed \Normdistr{\vnull}{\Idm{p}}$, $i = 1, \dots, m$ \\ \hline
Signal vector, parameter vector, classifier & $\trusig, \solu\sig \in \R^p$ \\ \hline
\hline\multicolumn{2}{|c|}{\textbf{Data domain (DD)} $\R^d$} \\ \hline
Sample data, input data, feature variables (random) & $\data_1, \dots, \data_m \in \R^d$ \\ \hline
Noise factors/variables (random) & $\latnoise_i = (n_{i,1}, \dots, n_{i,q}) \distributed \Normdistr{\vnull}{\Idm{q}}$, $i = 1, \dots, m$ \\ \hline
Feature atoms & $\atom_1, \dots, \atom_p \in \R^d$, $\atoms = \smallmatr{\atom_1 \mid \dots \mid \atom_p} \in \R^{d\times p}$ \\ \hline
Noise atoms & $\atomnoise_1, \dots, \atomnoise_q \in \R^d$, $\atomsnoise = \smallmatr{\atomnoise_1 \mid \dots \mid \atomnoise_{q}} \in \R^{d \times q}$ \\ \hline
\hline\multicolumn{2}{|c|}{\textbf{Optimal representations and extended signal domain (ESD)} $\R^{p+q}$} \\ \hline
Feature dictionary & $\dict = \atoms^\T \in \R^{p\times d}$ \\ \hline
Noise dictionary & $\dictnoise = \atomsnoise^\T \in \R^{q\times d}$ \\ \hline
Feature vector, coefficient vector & $\tru\fv, \solu\fv \in \R^d$ \\ \hline
Coefficient set, feature set & $\sset \subset \R^d$ convex and bounded \\ \hline
Signal term (random) & $\tru{s} = \sp{\lat_i}{\atoms^\T \tru\fv} \distributed \Normdistr{0}{\lnorm{\trusig}^2}$ \\ \hline
Noise term (random) & $\tru{n} = \sp{\latnoise_i}{\atomsnoise^\T \tru\fv} \distributed \Normdistr{0}{\tru\stddev^2}$ \\ \hline
Noise variance & $\tru\stddev^2 = \lnorm{\atomsnoise^\T \tru\fv}^2$ \\ \hline
Extended signal factors/variables (random) & $\extd{\lat}_i = (\lat_i, \latnoise_i) \distributed \Normdistr{\vnull}{\Idm{p+q}}$, $i = 1, \dots, m$ \\ \hline
Extended signal vector & $\tru{\extd\sig} = (\trusig, \vnull) \in \R^{p+q}$  \\ \hline
Extended dictionary & $\extd\dict = \smallmatr{\dict \\ \dictnoise} \in \R^{(p+q)\times d}$ \\ \hline
\hline\multicolumn{2}{|c|}{\textbf{Observation domain (OD)} $\Y \subset \R$} \\ \hline
Observation variables, observations, measurements, outputs (random) & $y_1, \dots, y_m \in \Y$ \\ \hline
Non-linearity (i.i.d. random function) & $\fobs\colon \R \to \Y$ \\ \hline
% Noise parameter & $\advdev \geq 0$ \\ \hline
\end{tabu}
\caption{A summary of the notions introduced in Subsections~\protect\ref{subsec:results:model} and \protect\ref{subsec:results:dictionary}.}
\label{tab:results:notation}
\end{table}

\subsection{Generalized Estimators, Model Parameters, and Effective Dimension}
\label{subsec:results:estimatorsignal}

Now, we aim to generalize the Lasso-estimator \eqref{eq:intro:lasso} which was considered in the introduction.
The first question one may ask is whether there is a compelling reason why the \define{square loss}
\begin{equation}
\losssq(\sp{\data_i}{\fv}, y_i) := \tfrac{1}{2}(\sp{\data_i}{\fv} - y_i)^2
\end{equation}
is applied to fit a (non-)linear observation model, such as \ref{assump:model:obsmodel}.
At least when some properties of the model are known, for example that $\fobs \colon \R \to \Y$ produces \emph{binary} outputs, it might be more beneficial to replace $\losssq$ by a specifically adapted loss function. There would be (empirical) evidence in the binary case that a least-square fit is outperformed by \define{logistic regression} using
\begin{equation}
	\loss^{\text{log}} (\sp{\data_i}{\fv}, y_i) :=  - y_i \cdot \sp{\data_i}{\fv} + \log(1+\exp(- y_i\cdot \sp{\data_i}{\fv})).
\end{equation}
Hence, we shall allow for a \define{general loss function} from now on,
\begin{equation}
\loss \colon \R \times \Y \to \R, \ (v,y) \mapsto \loss(v, y),
\end{equation}
which measures the \define{residual} between $v = \sp{\data_i}{\fv}$ and $y = y_i$ in a very specific way. A second extension of \eqref{eq:intro:lasso} concerns the sparsity constraint $\lnorm{\fv}[1] \leq R$. Here, we simply follow our general structural assumption of \eqref{eq:results:dictionary:fvstructured} and ask for $\fv \in \sset$.
This leads us to the following \define{generalized estimator}:
\begin{equation}
	\min_{\fv \in \R^d} \tfrac{1}{m} \sum_{i = 1}^m \loss(\sp{\data_i}{\fv}, y_i) \quad \text{subject to $\fv \in \sset$.} \label{eq:results:estimatorsignal:estimator}\tag{$P_{\loss, \sset}$}
\end{equation}
The objective functional $\lossemp[\y](\fv) := \tfrac{1}{m} \sum_{i = 1}^m \loss(\sp{\data_i}{\fv}, y_i)$ is often called the \define{empirical loss function} because it actually tries to approximate the expected loss $\mean[\loss(\sp{\data_i}{\fv}, y_i)]$.
For this reason, \eqref{eq:results:estimatorsignal:estimator} is typically referred to as an \emph{empirical structured loss minimization} in the literature.

Next, let us specify some general properties of $\loss$ that make the optimization program \eqref{eq:results:estimatorsignal:estimator} capable of signal estimation.
The following conditions, originating from \cite{genzel2016estimation}, are relatively mild and therefore permit a fairly large class of loss functions:
\begin{properties}[3em]{L}
\item\label{assump:estimator:regularity}
	\define{Regularity:} Let $\loss$ be twice continuously differentiable in the first variable. The first and second partial derivatives are then denoted by $\loss'(v,y) := \partder{\loss}{v}(v,y)$ and $\loss''(v,y) := \partder{\loss}{v}[2](v,y)$.
	Furthermore, assume that $\loss'$ is Lipschitz continuous in the second variable, i.e., there exists a constant $C_{\loss'} > 0$ such that
	\begin{equation}
		\abs{\loss'(v, y) - \loss'(v, y')} \leq C_{\loss'} \abs{y - y'} \quad \text{for all $v \in \R, \ y, y' \in \Y$.}
	\end{equation}
% 	The second derivative is supposed to be continuous in the second variable.\footnote{If $\Y$ is a discrete or finite set, this condition trivially follows from the continuity in the first variable.}
\item\label{assump:estimator:convexity}
% 	\define{Strict convexity:} Let $\loss$ be \define{strictly convex} in the first variable, that is, $\loss''(v,y) > 0$ for all $(v,y) \in \R \times \Y$.
	\define{Strict convexity:} Let $\loss$ be \define{strictly convex} in the first variable, i.e., there exists some continuous function $\mathcal{F}\colon \R \to \intvop{0}{\infty}$ such that $\loss''(v,y) \geq \mathcal{F}(v) > 0$ for all $(v,y) \in \R \times \Y$.\footnote{The purpose of $\mathcal{F}$ is to guarantee that, once the value of $v$ is fixed, $\loss''(v,y)$ can be bounded from below independently of $y$. If $\Y$ is bounded and $\loss''$ is continuous in the second variable, then \ref{assump:estimator:convexity} is already satisfied if $\loss''(v,y) > 0$ for all $(v,y) \in \R \times \Y$.}
\end{properties}

\begin{remark}\label{rmk:results:estimatorsignal:lossfct}
\begin{rmklist}
\item
	By condition \ref{assump:estimator:convexity}, the estimator \eqref{eq:results:estimatorsignal:estimator} becomes a convex program and efficient solvers are often available in practice.
	However, we shall not discuss computational issues and the uniqueness of solutions here.
	Fortunately, our results hold for \emph{any} minimizer $\solu\fv$ of \eqref{eq:results:estimatorsignal:estimator}, even though this might lead to different outcomes $\solu\sig = \dict\solu\fv$.
\item\label{rmk:results:estimatorsignal:lossfct:rsc}
	For the sake of simplicity, this work restricts to strictly convex loss functions, but we would like to emphasize that the framework of \cite{genzel2016estimation} contains an even more general condition on $\loss$, based on the concept of \define{restricted strong convexity} (\define{RSC}). Instead of requiring a strictly positive second derivative on the entire domain, one can show that it is actually enough to have strong convexity locally around the origin:
	\begin{equation}\label{eq:results:estimatorsignal:rsc}
		\loss''(v,y) \geq C_M \quad \text{for all $(v,y) \in \intvcl{-M}{M} \times \Y$,}
% 		\loss''(v,y) \geq C_M \quad \text{for all $(v,y) \in \intvcl{-M}{M} \times (\intvcl{-M}{M} \intersec \Y)$,}
	\end{equation}
	where $C_M > 0$ is a constant and $M > 0$ is sufficiently large.
	The main results of \cite[Thm.~2.3 and Thm.~2.5]{genzel2016estimation} and their proofs indicate that such an assumption in fact seems to be a key property towards signal recovery and variable selection.
	In particular, all our theoretical guarantees do immediately generalize to loss functions that satisfy \eqref{eq:results:estimatorsignal:rsc}.
	\qedrmkhere
\end{rmklist}
\end{remark}

Probably the most remarkable feature of the generalized estimator \eqref{eq:results:estimatorsignal:estimator} is that it only takes the data pairs $\{(\data_i, y_i)\}_{1 \leq i \in m}$ as input and does neither require any explicit knowledge of \ref{assump:model:obsmodel} nor of \ref{assump:model:datamodel}.
As already highlighted in the introduction, this might become crucial in practical applications where the exact model parameters are hardly known.
But at some point of course, one needs to pay a certain price for using non-adaptive estimators. Indeed, our lack of information will impose a \emph{rescaling} of the ground-truth signal $\trusig$---we already saw this problem in \eqref{eq:intro:sparseselection:bound} and \eqref{eq:results:dictionary:fvstructured}---and the constants of the error bounds are affected as well.

In a first step, let us investigate how to quantify these model uncertainties.
The key idea is to regard the non-linearity $\fobs$ as a specific type of \emph{noise} that makes the observations deviate from a noiseless linear measurement process.
Following \cite{genzel2016estimation}, this can be captured by three \define{model parameters} $\scalfac$, $\modeldev$, and $\modeldeveta$, where $\scalfac$ is given by the solution of \eqref{eq:results:estimatorsignal:modelparam:scalfac}:\footnote{We would like to point out that this approach was originally suggested by Plan, Vershynin, and Yudovina in \cite{plan2014highdim,plan2015lasso}, where the authors introduce the same parameters for the special case of the square loss $\losssq$.}\footnote{Note that the parameter $\modeldev$ was denoted by $\sigma$ in \cite{genzel2016estimation}, but this notation could be easily mixed up with the noise variance $\tru\stddev^2$ defined previously.}
\noeqref{eq:results:estimatorsignal:modelparam:scalfac} \noeqref{eq:results:estimatorsignal:modelparam:modeldev} \noeqref{eq:results:estimatorsignal:modelparam:modeldeveta}%
\begin{subequations} \label{eq:results:estimatorsignal:modelparam}
\begin{align}
	0 = {} & \mean[\loss'(\scalfac \gaussianuniv, \fobs(\gaussianuniv)) \cdot \gaussianuniv], \label{eq:results:estimatorsignal:modelparam:scalfac} \\
	\modeldev^2 := {} & \mean[\loss'(\scalfac \gaussianuniv, \fobs(\gaussianuniv))^2], \label{eq:results:estimatorsignal:modelparam:modeldev} \\
	\modeldeveta^2 := {} & \mean[\loss'(\scalfac \gaussianuniv, \fobs(\gaussianuniv))^2 \cdot \gaussianuniv^2], \label{eq:results:estimatorsignal:modelparam:modeldeveta}
\end{align}
\end{subequations}
with $\gaussianuniv \distributed \Normdistr{0}{1}$. Since $\scalfac$ is only implicitly given, its existence is not always guaranteed.
There are in fact ``incompatible'' pairs of $\loss$ and $\fobs$ for which \eqref{eq:results:estimatorsignal:modelparam:scalfac} cannot be satisfied, see \cite[Ex.~3.4]{genzel2016estimation}.
Therefore, we shall assume for the rest of this work that the loss function $\loss$ was chosen such that $\scalfac$ exists. Fortunately, such a choice is always possible; for instance, one can isolate $\scalfac$ in \eqref{eq:results:estimatorsignal:modelparam:scalfac} when using the square loss $\losssq$.
A verification of this claim is part of the following example, which illustrates the statistical meaning of the model parameters defined in
\eqref{eq:results:estimatorsignal:modelparam}. For a more extensive discussion of the interplay between different loss functions $\loss$ and non-linearities $\fobs$, the reader is referred to \cite{genzel2016estimation}.

\begin{example}\label{ex:results:estimatorsignal:lossmodel}
\begin{rmklist}
\item\label{ex:results:estimatorsignal:lossmodel:linear}
Let us again consider the standard example of the square loss
\begin{equation}
	\losssq\colon \R \times \Y = \R \times \R \to \R, \ (v,y) \mapsto \tfrac{1}{2} (v-y)^2.
\end{equation}
Then the conditions \ref{assump:estimator:regularity}, \ref{assump:estimator:convexity} are easily verified ($(\losssq)'' \equiv 1$) and the definitions of the model parameters in \eqref{eq:results:estimatorsignal:modelparam} simplify significantly:
\noeqref{eq:results:estimatorsignal:modelparamllsq:scalfac} \noeqref{eq:results:estimatorsignal:modelparamllsq:modeldev} \noeqref{eq:results:estimatorsignal:modelparamllsq:modeldeveta}%
\begin{subequations} \label{eq:intro:modelparam}
\begin{align}
\scalfac &= \mean[\fobs(\gaussianuniv) \cdot \gaussianuniv], \label{eq:results:estimatorsignal:modelparamllsq:scalfac} \\
\modeldev^2 &= \mean[(\fobs(\gaussianuniv) - \scalfac\gaussianuniv)^2], \label{eq:results:estimatorsignal:modelparamllsq:modeldev} \\
\modeldeveta^2 &= \mean[(\fobs(\gaussianuniv) - \scalfac\gaussianuniv)^2\cdot \gaussianuniv^2]. \label{eq:results:estimatorsignal:modelparamllsq:modeldeveta}
\end{align}
\end{subequations}
Supposed that $\lnorm{\trusig} = 1$, it is very helpful to regard $\gaussianuniv := \sp{\lat_i}{\trusig} \distributed \Normdistr{0}{1}$ as a noiseless linear output and $\fobs(\gaussianuniv) = \fobs(\sp{\lat_i}{\trusig})$ as a non-linear and noisy modification of it.
In this context, we may interpret $\scalfac$ as the \emph{correlation} between these two variables, whereas $\modeldev$ and $\modeldeveta$ essentially measure their \emph{deviation}.

This intuition is particularly underpinned by the standard case of \emph{noisy linear observations} $y_i = \fobs(\sp{\lat_i}{\trusig}) := \bar{\scalfac} \sp{\lat_i}{\trusig} + \xi_i$ where $\bar{\scalfac} > 0$ is fixed and $\xi_i$ is independent, mean-zero noise. We easily compute
\begin{equation}
	\scalfac = \bar{\scalfac} \quad \text{and} \quad \modeldev^2 = \modeldeveta^2 = \mean[\xi_i^2],
\end{equation}
meaning that $\scalfac$ captures the rescaling caused by $\fobs$, and $\modeldev^2 = \modeldeveta^2$ equals the variance of the additive noise term.
Hence, it is reasonable to view the quotient $\scalfac^2 / \max\{\modeldev^2, \modeldeveta^2\}$ as the SNR of the observation model.
This stands in contrast to \eqref{eq:results:dictionary:snr}, where we were concerned with defining an SNR for the data model \ref{assump:model:datamodel}.
However, we will see in the discussion of Theorem~\ref{thm:results:selection} that both types of SNR affect the quality of our error estimates. %play an important role in our main results.
\item\label{ex:results:estimatorsignal:lossmodel:binary}
Inspired by the prototype application of feature selection from MS data, one might wonder how the model parameters behave in the setup of binary outputs.
As a basic example, we may assume a random bit-flip model:
Let $y_i = \fobs(\sp{\lat_i}{\trusig}) := \xi_i \cdot \sign(\sp{\lat_i}{\trusig})$ where $\xi_i$ is an independent $\pm 1$-valued random variable with $\prob[\xi_i = 1] =: p \in \intvcl{0}{1}$. Using the square loss $\loss = \losssq$ again, one has
\begin{align}
	\scalfac = \mean[\xi_i \cdot \sign(\gaussianuniv) \cdot \gaussianuniv] = \mean[\xi_i] \mean[\abs{\gaussianuniv}] =  (2p - 1) \cdot \sqrt{\tfrac{2}{\pi}} \quad \text{and} \quad \modeldev^2 = \modeldeveta^2 =  1 - \tfrac{2}{\pi} (1-2p)^2.
\end{align}
An interesting special case is $p = \frac{1}{2}$, implying that $\scalfac = 0$. Then, $\fobs(\gaussianuniv) = \xi_i \cdot \sign(\sp{\lat_i}{\trusig})$ and $\gaussianuniv = \sp{\lat_i}{\trusig}$ are perfectly uncorrelated and there is clearly no hope for a recovery of $\trusig$ (cf. \cite[Sec.~III.A]{plan2013robust}).
Interestingly, it will turn out that accurate estimates are still possible when the chance of a bit-flip is very close to $\frac{1}{2}$.
\qedrmkhere
\end{rmklist}
\end{example}

We have already pointed out that the major objective of the structural constraint $\fv \in \sset$ in \eqref{eq:results:estimatorsignal:estimator} is to incorporate some prior knowledge, so that the size of the solution space is eventually reduced. This is highly relevant in situations with small sample-counts, where overfitting is often a serious issue.
In order to establish a measure for the complexity of signal classes, let us introduce the powerful concept of \define{(Gaussian) mean width}. This is also well-known as \define{Gaussian complexity} in statistical learning theory (cf. \cite{bartlett2003complexity}).
\begin{definition}\label{def:results:estimatorsignal:meanwidth}
	The \define{(global Gaussian) mean width} of a bounded subset $L \subset \R^n$ is given by
	\begin{equation}\label{eq:results:estimatorsignal:meanwidth}
		\meanwidth{L} := \mean[\sup_{\x \in L} \sp{\gaussian}{\x}],
	\end{equation}
	where $\gaussian \distributed \Normdistr{\vnull}{\Idm{n}}$ is a standard Gaussian random vector.
	Moreover, we call the square of the mean width $\effdim{L} := \meanwidth{L}^2$ the \define{effective dimension} of $L$.
\end{definition}
% \begin{figure}
% 	\centering
% 	\includegraphics[width=.5\textwidth]{}
% 	\caption{Geometric meaning of the mean width. If $\lnorm{\gaussian} = 1$, then $\sp{\gaussian}{\vec{\bar{x}}} = \sup_{\x \in L} \sp{\gaussian}{\x}$ measures the spatial extend of $L$ in direction $\gaussian$, that is, the distance between the hyperplanes $\orthcompl{\{\gaussian\}}$ and $\orthcompl{\{\gaussian\}} + \vec{\bar{x}}$. Thus, computing the expected value in \eqref{eq:results:estimatorsignal:meanwidth} yields an ``average'' measure for the size of $L$.}
% 	\label{fig:results:estimatorsignal:meanwidth}
% \end{figure}
Fixing a random vector $\gaussian$, the supremum $\sup_{\x \in L} \sp{\gaussian}{\x}$ essentially measures the spatial extent (width) of $L$ in the direction of $\gaussian$, and by taking the expected value, we obtain an average measure of size (cf. \cite[Fig.~1]{genzel2016estimation}).
In general, the mean width enjoys a certain robustness against small perturbations, i.e., slightly increasing $L$ will only slightly change $\meanwidth{L}$.
A further discussion of (geometric) properties of the mean width can be found in \cite[Sec.~II]{plan2013robust}.
The following examples indicate that it is more convenient to consider the effective dimension when analyzing the complexity of signal sets.

\begin{example}\label{ex:results:estimatorsignal:effdim}
\begin{rmklist}
\item
	\emph{Linear subspaces.} Let $L \subset \R^n$ be a linear subspace of dimension $n'$. Then we have (cf. \cite{plan2015lasso})
	\begin{equation}\label{eq:results:estimatorsignal:effdim:linsubspace}
		\effdim{L \intersec \ball[2][n]} = \meanwidth{L \intersec \ball[2][n]}^2 \asymp n'.
	\end{equation}
	Note that $L$ is restricted to the Euclidean unit ball in order to obtain a bounded set.
	In fact, $\effdim{\cdot}$ measures the algebraic dimension in this case, which particularly justifies why we speak of the \emph{effective dimension} of a set.
\item\label{ex:results:estimatorsignal:effdim:polytope}
	\emph{Polytopes and finite sets.} Let $L' = \{\sig_1,\dots,\sig_k\} \subset \R^n$ be a finite collection of points. Thus, $L := \convhull(L')$ is a polytope and we have (cf. \cite[Ex.~1.3.8]{vershynin2014estimation})
	\begin{equation}\label{eq:results.estimatorsignal:effdim:polytope:bound}
		\effdim{L} = \effdim{L'} \lesssim (\max_{1 \leq j \leq k}\lnorm{\sig_j}^2) \cdot \log(k),
	\end{equation}
	where we have used that the mean width is invariant under taking the convex hull of a set (\cite[Prop.~2.1]{plan2013robust}).
	It is remarkable that the bound of \eqref{eq:results.estimatorsignal:effdim:polytope:bound} only logarithmically depend on the number of vertices, although $L$ might have full algebraic dimension.
	Therefore, polytopes with few vertices are of relatively low complexity, which makes them a good candidate for signal sets.
\item\label{ex:results:estimatorsignal:effdim:sparse}
	\emph{(Approximately) Sparse vectors.} Sparsity has emerged as one of the key model assumptions in many modern applications.
	In its simplest form, one may consider the set of \define{$s$-sparse} vectors
	\begin{equation}
		S := \{ \sig \in \R^n \suchthat \lnorm{\sig}[0] \leq s \}.
	\end{equation}
	The main difficulty when dealing with a sparse signal $\sig \in S$ is that its support is unknown, though small.
	In fact, $S$ is a union of $\binom{n}{s}$-many $s$-dimensional subspaces and we need to identify the one that contains $\sig$.
	Fortunately, this lack of knowledge affects the effective dimension only by a logarithmic factor (cf. \cite[Lem.~2.3]{plan2013robust}):\footnote{This asymptotic bound is well-established in the theory of compressed sensing and basically corresponds to the minimal number of measurements that is required to recover $s$-sparse vectors; see \cite{foucart2013cs} for example.}
	\begin{equation}\label{eq:results:estimatorsignal:effdim:sparse:boundl0}
		\effdim{S \intersec \ball[2][n]} \asymp s \log(\tfrac{2n}{s}),
	\end{equation}
	where we have again restricted to the unit ball.
% 	Again, the unit-ball restriction makes sense, since we shall assume $\lnorm{\trusig} = 1$ later on anyway.
	The set $S$ (or $S \intersec \ball[2][n]$) is usually not applied in practice because it would impose a non-convex constraint, and moreover, real-world signals are often not \emph{exactly} sparse. Instead, one rather tries to design a certain \define{convex relaxation}.
	For this purpose, let us apply the Cauchy-Schwarz inequality for some $\sig \in S \intersec \ball[2][n]$:
	\begin{equation}
		\lnorm{\sig}[1] \leq \sqrt{\lnorm{\sig}[0]} \cdot \lnorm{\sig} \leq \sqrt{s},
	\end{equation}
	that means, $S \intersec \ball[2][n] \subset \sqrt{s}\ball[1][n]$.
	Since $L := \sqrt{s}\ball[1][n]$ is a polytope with $2n$ vertices, we obtain by part \ref{ex:results:estimatorsignal:effdim:polytope} that
	\begin{equation}\label{eq:results:estimatorsignal:effdim:sparse:boundl1}
		\effdim{L} \lesssim s \log(2n),
	\end{equation}
	which is almost as good as \eqref{eq:results:estimatorsignal:effdim:sparse:boundl0}. %and is especially appealing for small values of $s$.
	This observation verifies that a scaled $\l{1}$-ball, as used for the standard Lasso \eqref{eq:intro:lasso}, can serve as an appropriate convex and robust surrogate of sparse vectors.
\item\label{ex:results:estimatorsignal:effdim:dictionary}
	\define{Representations in a dictionary.}
	In this work, the most important classes of signals are those arising from \emph{dictionary representations}.
	Motivated by the previous subsection, let $\dict \in \R^{n \times d}$ be a \define{dictionary} and assume that the class-of-interest is given by $L = \dict \sset$ for a certain \define{coefficient set} $\sset \subset \R^d$.
	A basic application of \define{Slepian's inequality} \cite[Lem.~8.25]{foucart2013cs} provides a general, yet non-optimal, bound on the effective dimension:
	\begin{equation}\label{eq:results:estimatorsignal:effdim:dictionary:slepianbound}
		\effdim{L} = \effdim{\dict\sset} \lesssim \norm{\dict}^2 \cdot \effdim{\sset}.
	\end{equation}
	This estimate becomes particularly bad when the operator norm of $\dict$ is large, which is unfortunately the case for highly overcomplete dictionaries as we consider.
	However, for many choices of coefficient sets there are sharper bounds available. For example, if $\sset = \convhull\{ \sig_1, \dots, \sig_k \} \subset \ball[2][d]$ is a polytope, so is $L = \dict\sset = \convhull\{ \dict\sig_1, \dots, \dict\sig_k \}$. Then we obtain
	\begin{equation}\label{eq:results:estimatorsignal:effdim:dictionary:dictpolytope}
		\effdim{L} = \effdim{\dict\sset} \lesssim D_{\max}^2 \cdot \log(k)
	\end{equation}
	by part \ref{ex:results:estimatorsignal:effdim:polytope}, where $D_{\max} := \max_{1 \leq j \leq k} \lnorm{\dict \sig_j} \leq \norm{\dict}$.
	\qedrmkhere
\end{rmklist}
\end{example}

\subsection{Main Result}
\label{subsec:results:guarantee}

In what follows, let us accept the notations introduced in Subsections~\ref{subsec:results:model}--\ref{subsec:results:estimatorsignal}; in particular, recall Table~\ref{tab:results:notation} and the definitions of the model parameters $\scalfac$, $\modeldev$, $\modeldeveta$ in \eqref{eq:results:estimatorsignal:modelparam}.
Moreover, the feature vector $\tru\fv \in \R^d$ is always understood to be an optimal representation of $\trusig \in \R^p$ (according to Definition~\ref{def:results:dictionary:optimalrepr}).
Since the resulting representation in the data domain \eqref{eq:results:dictionary:ddrepresentation} is usually not exact, we capture this 
\emph{model mismatch} by means of the following \define{noise parameter}:
\begin{equation}\label{eq:results:guarantee:noiseparam}
	\advdev := \sqrt{\tfrac{1}{m} \sum_{i = 1}^m \abs{y_i - y_i''}^2} \geq 0,
\end{equation}
where
\begin{equation}\label{eq:results:guarantee:modddrepresentation}
	y_i'' := \fobs\Big( \tfrac{\sp{\data_i}{\tru\fv}}{\sqrt{\lnorm{\trusig}^2 + \lnorm{\dictnoise \tru\fv}^2}} \Big) = \fobs\Big( \tfrac{\tru{s} + \tru{n}}{\sqrt{\lnorm{\trusig}^2 + \tru\stddev^2}} \Big), \quad i = 1, \dots, m.
\end{equation}
% \begin{equation}\label{eq:results:guarantee:noiseparam}
% 	\advfrac := \tfrac{1}{m} \cardinality\{ i \in [m] \suchthat y_i \neq \extd{y}_i \} \in \intvcl{0}{1} \quad\text{ and }\quad \advdev := \max_{i \in [m]} \abs{y_i - \extd{y}_i} \geq 0,
% \end{equation}
% i.e., $\advfrac$ bounds the fraction of wrong observations and $\advdev$ bounds their maximal deviation.
Note that $\advdev$ might be a random variable, depending on both the non-linearity $\fobs$ as well as on the SNR between $\tru{s}$ and $\tru{n}$ (cf. \eqref{eq:results:dictionary:snr}).
% The specific value of $\advdev$ depends on both the non-linearity $\fobs$ as well as on the SNR between $\tru{s}$ and $\tru{n}$ (cf. \eqref{eq:results:dictionary:snr}), making a general analysis of its behavior quite difficult.
% But at least for the important cases of (noisy) linear and binary observations, we will provide some simple bounds on $\advdev$ in Example~\ref{ex:applications:standardization:noiseparam}.

\begin{remark}\label{rmk:results:guarantee:modddrepresentation}
The modified outputs $y_i''$ arise from rescaling $y_i' = \fobs(\tru{s} + \tru{n})$ by a factor of $\tru\scalsig := \tsqrt{\lnorm{\trusig}^2 + \tru\stddev^2}$, which guarantees that $\tfrac{\tru{s} + \tru{n}}{\tru\scalsig}$ has unit variance.
Unfortunately, such a technicality is  necessary to make the framework of \cite{genzel2016estimation} applicable (see also Remark~\ref{rmk:proofs:selection}\ref{rmk:proofs:selection:rescaledobs}).
Since the additional factor of $\tru\scalsig$ is nonessential for the derivation of Subsection~\ref{subsec:results:dictionary}, it was omitted for the sake of clarity.
But this could have been easily incorporated by replacing the feature vector $\tru\fv$ by a dilated version $\tru\scalsig^{-1}\tru\fv$.
\qedrmkhere
\end{remark}

% The associated noise parameter $\advdev$ is given by \eqref{eq:results:guarantee:noiseparam}.
% Defining the \define{scaling factor}
% \begin{equation}
% 	\tru\snrscal := \frac{\scalfac}{\sqrt{1 + \tru\stddev^2}} = \frac{\scalfac}{\sqrt{1 + \lnorm{\dictnoise\tru\fv}^2}},
% \end{equation}
We are now ready to state the main result of this work. Its proof is postponed to Subsection~\ref{subsec:proofs:selection}.
\begin{theorem}\label{thm:results:selection}
	Suppose that the input data pairs $\{(\data_i, y_i)\}_{1 \leq i \leq m}$ obey \ref{assump:model:obsmodel} and \ref{assump:model:datamodel}. Furthermore, let \ref{assump:estimator:regularity} and \ref{assump:estimator:convexity} hold true. We assume that $\lnorm{\trusig} = 1$ and $\trusigmu \in  \dict\sset$ for a bounded, convex coefficient set $\sset \subset \R^d$ containing the origin.
	Then there exists a constant of the form $\modeldevconst = C \cdot \max\{1, \modeldev, \modeldeveta\} > 0$ with $C > 0$ such that the following holds with high probability:\footnote{More precisely, the constant $C$ may depend on the ``probability of success'' as well as on the so-called RSC-constant of $\loss$ (cf. Remark~\ref{rmk:results:estimatorsignal:lossfct}\ref{rmk:results:estimatorsignal:lossfct:rsc}).}
% 	Then there exists constants $C, C' > 0$ such that the following holds with high probability:\footnote{The constants may depend on the ``probability of success,'' on the model parameters $\modeldev$ and $\modeldeveta$ as well as on the so-called RSC-constant of $\loss$ (cf. Remark~\ref{rmk:results:estimatorsignal:lossfct}\ref{rmk:results:estimatorsignal:lossfct:rsc}).}
	
	If the number of samples satisfies
	\begin{equation}\label{eq:results:selection:samplecount}
		m \geq C \cdot \effdim{\extd\dict\sset},
	\end{equation}
	then, setting $\solu\sig := \dict \solu\fv$ for any minimizer $\solu\fv$ of \eqref{eq:results:estimatorsignal:estimator}, we have
	\begin{equation}\label{eq:results:selection:bound}
		\lnorm{\solu\sig - \tru\snrscal\trusig} = \lnorm{\dict\solu\fv - \tru\snrscal \dict\tru\fv} \leq \modeldevconst \bigg( \Big(\frac{\effdim{\extd\dict\sset}}{m}\Big)^{1/4} + \advdev \bigg),
	\end{equation}
	where $\tru\snrscal := \frac{\scalfac}{\sqrt{1 + \tru\stddev^2}} = \frac{\scalfac}{\sqrt{1 + \lnorm{\dictnoise\tru\fv}^2}}$.
\end{theorem}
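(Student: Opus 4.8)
The plan is to reduce the statement, by a change of variables, to the high-dimensional estimation guarantee of \cite{genzel2016estimation}, carried out entirely in the \emph{extended signal domain} $\R^{p+q}$ (cf. Remark~\ref{rmk:results:dictionary}\ref{rmk:results:dictionary:extddict} and Table~\ref{tab:results:notation}). First I would put the estimator \eqref{eq:results:estimatorsignal:estimator} into \emph{synthesis form}: since $\data_i = \extd\dict^\T\extd\lat_i$ with $\extd\lat_i = (\lat_i,\latnoise_i) \distributed \Normdistr{\vnull}{\Idm{p+q}}$ a \emph{standard} Gaussian vector, we have $\sp{\data_i}{\fv} = \sp{\extd\lat_i}{\extd\dict\fv}$, so substituting $\vec\theta := \extd\dict\fv$ turns \eqref{eq:results:estimatorsignal:estimator} into an empirical loss minimization $\min_{\vec\theta \in \extd\dict\sset} \tfrac1m\sum_i \loss(\sp{\extd\lat_i}{\vec\theta}, y_i)$ over the bounded, convex set $\extd\dict\sset \subset \R^{p+q}$ (which contains the origin because $\sset$ does). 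In particular, for \emph{any} minimizer $\solu\fv$ of \eqref{eq:results:estimatorsignal:estimator}, the vector $\solu{\vec\theta} := \extd\dict\solu\fv = (\dict\solu\fv, \dictnoise\solu\fv)$ is a minimizer of this reformulated program, because the two objectives agree under $\vec\theta = \extd\dict\fv$.

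Second, I would identify the correct ``ground truth'' for the reformulated problem. Let $\tru\fv$ be an optimal representation of $\trusig$ (Definition~\ref{def:results:dictionary:optimalrepr}), put $\tru\scalsig := \sqrt{1+\tru\stddev^2}$ as in Remark~\ref{rmk:results:guarantee:modddrepresentation}, and set $\vec{w} := \tru\scalsig^{-1}\extd\dict\tru\fv = \tru\scalsig^{-1}(\trusig, \dictnoise\tru\fv) \in \R^{p+q}$. Then $\lnorm{\vec{w}}^2 = \tru\scalsig^{-2}(\lnorm{\trusig}^2 + \tru\stddev^2) = 1$, so $\vec w$ is a unit vector; moreover $\scalfac\vec{w} = \extd\dict(\scalfac\tru\scalsig^{-1}\tru\fv) \in \extd\dict\sset$, since $\scalfac\tru\fv \in \sset$, $\sset$ is convex with $\vnull \in \sset$, and $\tru\scalsig^{-1} \in \intvopcl{0}{1}$ (because $\tru\scalsig \geq 1$). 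Finally $\sp{\extd\lat_i}{\vec{w}} = \tru\scalsig^{-1}(\tru{s}+\tru{n}) \distributed \Normdistr{0}{1}$, hence the ``nominal'' single-index observations associated with the signal $\vec w$ are exactly the rescaled outputs $y_i'' = \fobs(\sp{\extd\lat_i}{\vec w})$ of \eqref{eq:results:guarantee:modddrepresentation}; consequently the model parameters of this nominal model are precisely $\scalfac,\modeldev,\modeldeveta$ from \eqref{eq:results:estimatorsignal:modelparam}, and the deviation of the \emph{actual} observations $y_i$ from it is, by \eqref{eq:results:guarantee:noiseparam}, the noise parameter $\advdev$.

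Third, I would invoke the recovery guarantee of \cite{genzel2016estimation} (in the form allowing for model uncertainty) for the reformulated estimator, with standard Gaussian design $\extd\lat_i$, unit-norm signal $\vec w$, feasible rescaled target $\scalfac\vec w \in \extd\dict\sset$, loss conditions \ref{assump:estimator:regularity} and \ref{assump:estimator:convexity} (unaffected by the substitution), and observation mismatch $\advdev$: whenever $m \geq C\cdot\effdim{\extd\dict\sset}$ as in \eqref{eq:results:selection:samplecount}, it delivers, with high probability, the bound $\lnorm{\extd\dict\solu\fv - \scalfac\vec w} \leq \modeldevconst\big((\effdim{\extd\dict\sset}/m)^{1/4} + \advdev\big)$ with $\modeldevconst = C\max\{1,\modeldev,\modeldeveta\}$. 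Letting $\vec P\colon\R^{p+q}\to\R^p$ be the orthogonal projection onto the first $p$ coordinates, we have $\vec P(\extd\dict\solu\fv) = \dict\solu\fv = \solu\sig$ and $\vec P(\scalfac\vec w) = \scalfac\tru\scalsig^{-1}\trusig = \tru\snrscal\trusig$, so, since $\vec P$ is $1$-Lipschitz, $\lnorm{\solu\sig - \tru\snrscal\trusig} \leq \lnorm{\extd\dict\solu\fv - \scalfac\vec w}$, which is exactly \eqref{eq:results:selection:bound}.

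I expect the real work to lie entirely in the first two steps: one must verify that the reformulated problem literally meets the hypotheses of the theorem in \cite{genzel2016estimation} --- a standard Gaussian design, a unit-norm target that becomes feasible after rescaling by $\scalfac$, a bounded convex constraint containing the origin, and, crucially, the identification of $\advdev$ with the ``adversarial'' observation-noise slot of that framework --- and that the relevant complexity measure is exactly $\effdim{\extd\dict\sset}$, which explains why the extended dictionary appears in the statement. Once this dictionary reformulation is set up correctly, no further analysis is required: all concentration-of-measure and mean-width arguments are inherited from \cite{genzel2016estimation}. A minor point to keep in mind is that $\advdev$ is itself random (it depends on $\fobs$ and the $\extd\lat_i$), so it is carried along as a data-dependent term on the right-hand side rather than estimated, and ``with high probability'' refers only to the randomness driving the recovery argument of \cite{genzel2016estimation}.
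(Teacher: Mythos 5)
Your proposal is correct and follows essentially the same route as the paper's own proof: passing to the extended signal domain, taking the rescaled unit-norm target $\tru\scalsig^{-1}\extd\dict\tru\fv$ with feasible set $\extd\dict\sset$, identifying $y_i''$ as the nominal observations so that $\advdev$ fills the mismatch slot of the guarantee from \cite{genzel2016estimation}, and projecting onto the first $p$ coordinates at the end. The only detail you gloss over is the harmless translation $\effdim{\extd\dict\sset - \scalfac\tru\v} \lesssim \effdim{\extd\dict\sset}$, which the paper absorbs into the constant.
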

The statement of Theorem~\ref{thm:results:selection} gives a quite general answer to our initial challenge \ref{question:intro:theoretical}. % and \ref{question:intro:setup}.
Indeed, the error bound \eqref{eq:results:selection:bound} shows how one can use an estimated feature vector $\solu\fv$ to approximate the ground-truth signal $\trusig$.
A very remarkable observation is that the applied estimator \eqref{eq:results:estimatorsignal:estimator} only takes the raw data $\{(\data_i, y_i)\}_{1 \leq i \leq m}$ as input.
This indicates that---at least from a theoretical perspective---effective variable selection can be still achieved without any knowledge of the model specifications in \ref{assump:model:obsmodel} and \ref{assump:model:datamodel}.
However, this disregard is reflected by the fact that $\trusig$ needs to be appropriately rescaled (by $\tru\snrscal$) and the error (semi-)metric explicitly depends on the (unknown) feature dictionary $\dict$.
The above result does therefore not automatically resolve \ref{question:intro:practical}, unless the behavior of $\dict$ is well-understood or a good approximation of it is available.
We shall come back to these issues in Section~\ref{sec:applications}, where some applications and consequences of Theorem~\ref{thm:results:selection} are discussed.

Let us now analyze under which conditions \eqref{eq:results:selection:bound} establishes a meaningful error estimate:
\begin{listing}
\item
	\emph{Sample count.}
	If $m$ (greatly) exceeds $\effdim{\extd\dict\sset}$---which is already assumed by \eqref{eq:results:selection:samplecount}---the fraction $\effdim{\extd\dict\sset} / m$ becomes small.
	Thus, the effective dimension of $\extd\dict\sset$ can be regarded as a \emph{threshold} for the number of required observations.
	This explains why it is very important to impose a certain low-complexity constraint on $\trusig$ when only a few samples are available.
	A successful application of Theorem~\ref{thm:results:selection} therefore always asks for an appropriate upper bound on $\effdim{\extd\dict\sset}$.
	Later on, we will illustrate how this can be done in the case of sparse representations (see Subsection~\ref{subsec:applications:standardization} and proof of Theorem~\ref{thm:intro:sparseselection}).
\item
	\emph{Signal-to-noise ratio.}
	The accuracy of \eqref{eq:results:selection:bound} is highly sensitive to the choice of $\tru\fv$, since it affects both the scaling factor $\tru\snrscal$ and the noise parameter $\advdev$.
	By \eqref{eq:results:dictionary:optimalrepr}, we have defined $\tru\fv$ precisely such that the noise variance $\tru\stddev^2$ is minimized, i.e., $\abs{\tru\snrscal}$ is guaranteed to be as large as possible (among all representations).
	Consequently, an undesirable situation occurs if $\abs{\tru\snrscal}$ is still very small.
	Then, the rescaled vector $\tru\snrscal\trusig$ is close to $\vnull$ and \eqref{eq:results:selection:bound} provides a rather poor bound.\footnote{This can be seen more easily when the inequality \eqref{eq:results:selection:bound} is first divided by $\abs{\tru\snrscal}$.}
	Such an implication is actually not very surprising because if even an \emph{optimal} representation of $\trusig$ suffers from a low SNR, the underlying problem of variable selection might be intractable.
	
	The impact of $\advdev$, on the other hand, also depends on the (unknown) output function $\fobs$. Thus, it is quite difficult to make a general statement on its behavior.
	But at least for the popular cases of noisy linear and binary responses, we will provide simple high-probability bounds in Example~\ref{ex:applications:standardization:noiseparam}, which again depend on the SNR.
\item
	\emph{Model parameters.}
	We have illustrated in Example~\ref{ex:results:estimatorsignal:lossmodel}\ref{ex:results:estimatorsignal:lossmodel:linear} that $\scalfac$ should be seen as a measure for the expected rescaling (of $\sp{\lat_i}{\trusig}$) caused by $\fobs$.
	Since this ``scaling-effect'' is completely unknown to the estimator \eqref{eq:results:estimatorsignal:estimator}, we need to replace the actual signal-of-interest $\trusig$ by a dilated version $\scalfac\trusig$ in Theorem~\ref{thm:results:selection}.
	The necessity of such a technical step becomes already clear in the noiseless linear case when the outputs are multiplied by a fixed but unknown scalar; see again Example~\ref{ex:results:estimatorsignal:lossmodel}\ref{ex:results:estimatorsignal:lossmodel:linear} with $\xi_i \equiv 0$.
	The other two model parameters $\modeldev$ and $\modeldeveta$, capturing the ``variance'' of $\fobs$, are part of $\modeldevconst = C \cdot \max\{1, \modeldev, \modeldeveta\}$.
	Dividing \eqref{eq:results:selection:bound} by $\abs{\scalfac}$, we conclude that the resulting constant essentially scales like $\max\{1, \modeldev, \modeldeveta\} / \abs{\scalfac}$. Hence, the quality of the error estimate is improved if $\abs{\scalfac}$ is relatively large compared to $\modeldev$ and $\modeldeveta$.
	Interestingly, this observation is very closely related to our findings in Example~\ref{ex:results:estimatorsignal:lossmodel}\ref{ex:results:estimatorsignal:lossmodel:linear}, where we have identified the quotient $\scalfac / \max\{\modeldev, \modeldeveta\}$ as the SNR of \ref{assump:model:obsmodel}.
% 	In the next subsection, we shall further refine Theorem~\ref{thm:results:selection} and make this dependence explicit.
\end{listing}

Finally, we would like to emphasize that Theorem~\ref{thm:results:selection} also satisfies our desire for a general framework for feature selection, as formulated in \ref{question:intro:setup}.
The above setting in fact allows for a unified treatment of non-linear single-index models, arbitrary convex coefficient sets, and strictly convex loss functions.
Concerning the asymptotic behavior of \eqref{eq:results:estimatorsignal:estimator}, it has even turned out that the actual capability of approximating $\trusig$ neither depends on the (noisy) modifier $f$ nor on the choice of loss function $\loss$; see also \cite[Sec.~2]{genzel2016estimation} for a more extensive discussion.

\begin{remark}\label{rmk:results:selection}
\begin{rmklist}
\item
	A careful analysis of the proof of Theorem~\ref{thm:results:selection} shows that the optimality property of $\tru\fv$ from \eqref{eq:results:dictionary:optimalrepr} is never  used. Indeed, the statement of Theorem~\ref{thm:results:selection} actually holds for any representation of $\trusig$, i.e., $\tru\fv \in \sset_{\trusig}$.
	This is particularly relevant to applications where the feature vector $\tru\fv$ needs to be explicitly constructed. But in such a situation, the resulting error bound \eqref{eq:results:selection:bound} might be of course less significant, meaning that $\tru\snrscal$ decreases and $\advdev$ increases.
\item\label{rmk:results:selection:scaling}
	A certain normalization of the signal vector, e.g., $\lnorm{\trusig} = 1$, is inevitable in general. For example, it is obviously impossible to recover the magnitude of $\trusig$ if the outputs obey a binary rule $y_i = \sign(\sp{\lat_i}{\trusig})$.
	The same problem occurs for the noisy linear case of Example~\ref{ex:results:estimatorsignal:lossmodel}\ref{ex:results:estimatorsignal:lossmodel:linear} when there is an unknown scaling parameter $\std{\scalfac} > 0$ involved.
	
% 	However, by appropriately rescaling $\trusig$, the unit-norm assumption might be dropped at least in the linear case.

	Given a \emph{structured} signal $\scalfac \trusig \in \dict \sset$, the unit-norm assumption can be always achieved by considering $\trusig / \lnorm{\trusig}$ instead of $\trusig$.
	But this would imply that the coefficient set $\sset$ needs to be rescaled by $\lnorm{\trusig}^{-1}$ in order to satisfy the hypotheses of Theorem~\ref{thm:results:selection}.
	Since $\lnorm{\trusig}$ is usually unknown in real-world applications, it is very common to introduce an additional \define{scaling parameter} $R > 0$ and to use a scaled coefficient set $R \sset$ for Theorem~\ref{thm:results:selection}---note that this approach was already applied in Theorem~\ref{thm:intro:sparseselection}.
	If $R$ is sufficiently large, we can always ensure $\scalfac \trusig / \lnorm{\trusig} \in R \dict \sset$,\footnote{More precisely, we also need to assume that $\dict$ is surjective and $\sset$ is full-dimensional.} but at the same time, the effective dimension $\effdim{R \extd\dict\sset} = R^2 \cdot \effdim{\extd\dict\sset}$ starts to grow.
	Consequently, the significance of Theorem~\ref{thm:results:selection} is highly sensitive to the choice of $R$.
	Finding an appropriate scaling parameter is a deep issue in general, which arises (in equivalent forms) almost everywhere in structured and constrained optimization.
	Especially in the machine learning literature, numerous approaches have been developed to cope with this challenge; for an overview see \cite{hastie2009elements}.
	\qedrmkhere
\end{rmklist}
\end{remark}

\subsection{Some Refinements and Extensions}
\label{subsec:results:refinements}

Before proceeding with practical aspects of our framework in Section~\ref{sec:applications}, let us discuss some extensions of Theorem~\ref{thm:results:selection} which could be interesting for various real-world applications.
A first refinement concerns the limited scope of the output procedure in \ref{assump:model:obsmodel}.
Indeed, the true observations $y_1, \dots, y_m$ are supposed to \emph{precisely} follow a simple single-index model, thereby implying that one is always able to determine an appropriate parameter vector $\trusig$ for \eqref{eq:results:model:obsmodel}.
In practice however, it could happen that such a choice of $\trusig$ is actually impossible.
One would rather assume instead that there exists a vector $\trusig$, e.g., a \define{structured Bayes estimator}, which approximates the truth such that $y_i \approx \fobs(\sp{\lat_i}{\trusig})$ holds for (many) $i = 1, \dots, m$.
This motivates us to permit a certain degree of \emph{adversarial} model mismatch that could be even systematic and deterministic.
Moreover, we shall drop the assumption of independent signal factors $s_{i,1}, \dots, s_{i,p}$---a hypothesis that is rarely satisfied in applications---and allow for low correlations between them.
These desiderata give rise to the following extension of \ref{assump:model:obsmodel}:
\begin{enumerate}[label={\textsc{(M\arabic*')}},leftmargin=3em]
\item\label{assump:model:obsmodelrefined}
	We consider a single-index model
	\begin{equation}\label{eq:results:refinements:obsmodel}
		y_i^0 := \fobs(\sp{\lat_i}{\trusig}), \quad i = 1, \dots, m,
	\end{equation}
	where $\trusig \in \R^p$ and $\lat_1, \dots, \lat_m \distributed \Normdistr{\vnull}{\Covmatr}$ i.i.d.\ with positive definite \emph{covariance matrix} $\Covmatr \in \R^{p \times p}$. As before, $\fobs \colon \R \to \Y$ is a (possibly random) function which is independent of $\lat_i$, and $\Y \subset \R$ is closed.
	The true \define{observations} $y_1, \dots, y_m \in \Y$ are allowed to differ from \eqref{eq:results:refinements:obsmodel} and we assume that their deviation is bounded by means of
	\begin{equation}
		\sqrt{\tfrac{1}{m} \sum_{i = 1}^m \abs{y_i^0 - y_i}^2} \leq \tru\advdev
	\end{equation}
	for some \define{adversarial noise parameter} $\tru\advdev \geq 0$.
\end{enumerate}

With some slight modifications, the statement of Theorem~\ref{thm:results:selection} still holds in this more general setting.
% For the following result, we apply the same notation as in the previous subsection, in particular, $\tru\fv$ denotes again an optimal representation of $\trusig$ and $\tru\snrscal = \scalfac / \scriptstyle \sqrt{1 + \tru\stddev^2}$; but note that, due to \eqref{eq:results:refinements:obsmodel}, we now have
For the following result, we use the same notation as in the previous subsection, but note that due to \eqref{eq:results:refinements:obsmodel}, the noise parameter $\advdev$ from \eqref{eq:results:guarantee:noiseparam} now takes the form
\begin{equation}
	\advdev = \sqrt{\tfrac{1}{m} \sum_{i = 1}^m \abs{y_i^0 - y_i''}^2}.
\end{equation}

\begin{theorem}\label{thm:results:selection:refined}
	Suppose that the input data pairs $\{(\data_i, y_i)\}_{1 \leq i \leq m}$ obey \ref{assump:model:obsmodelrefined} and \ref{assump:model:datamodel}. Furthermore, let \ref{assump:estimator:regularity} and \ref{assump:estimator:convexity} hold true. We assume that $\lnorm{\sqrt{\Covmatr}\trusig} = 1$ and $\trusigmu \in  \dict\sset$ for a bounded, convex coefficient set $\sset \subset \R^d$ containing the origin.\footnote{Since $\Covmatr \in \R^{p \times p}$ is positive definite, there exists a unique, positive definite matrix root $\sqrt{\Covmatr} \in \R^{p \times p}$ with $\sqrt{\Covmatr} \cdot \sqrt{\Covmatr} = \Covmatr$.}
	Then, with the same constant $\modeldevconst = C \cdot \max\{1, \modeldev, \modeldeveta\} > 0$ as in Theorem~\ref{thm:results:selection}, the following holds with high probability:
	
	Let the number of samples satisfy
	\begin{equation}\label{eq:results:selection:refined:samplecount}
		m \geq C \cdot \effdim{\extd\dict_{\Covmatr}\sset},
	\end{equation}
	where $\extd\dict_{\Covmatr} := \smallmatr{\sqrt{\Covmatr}\dict \\ \dictnoise}$. Then, setting $\solu\sig := \dict \solu\fv$ for any minimizer $\solu\fv$ of \eqref{eq:results:estimatorsignal:estimator}, we have
	\begin{equation}\label{eq:results:selection:refined:bound}
		\lnorm{\sqrt{\Covmatr}(\solu\sig - \tru\snrscal\trusig)} = \lnorm{\sqrt{\Covmatr}(\dict\solu\fv - \tru\snrscal \dict\tru\fv)} \leq \modeldevconst \bigg( \Big(\frac{\effdim{\extd\dict_{\Covmatr}\sset}}{m}\Big)^{1/4} + \advdev + \tru\advdev \bigg).
	\end{equation}
\end{theorem}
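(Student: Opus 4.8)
The plan is to derive Theorem~\ref{thm:results:selection:refined} from Theorem~\ref{thm:results:selection} by a whitening change of variables, paying a little extra attention to the adversarial term $\tru\advdev$ and to the correlated signal factors.

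\emph{Whitening.} Since $\Covmatr$ is positive definite, I would write $\lat_i = \sqrt{\Covmatr}\,\gaussian_i$ with $\gaussian_1, \dots, \gaussian_m \distributed \Normdistr{\vnull}{\Idm{p}}$ i.i.d. Then $\sp{\lat_i}{\trusig} = \sp{\gaussian_i}{\sqrt{\Covmatr}\trusig}$, so setting $\trusig' := \sqrt{\Covmatr}\trusig$ --- which has $\lnorm{\trusig'} = 1$ by hypothesis --- the clean model \eqref{eq:results:refinements:obsmodel} reads $y_i^0 = \fobs(\sp{\gaussian_i}{\trusig'})$, i.e., it is exactly of the form \ref{assump:model:obsmodel}. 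In the same way, $\data_i = \atoms\lat_i + \atomsnoise\latnoise_i = (\atoms\sqrt{\Covmatr})\gaussian_i + \atomsnoise\latnoise_i$, which is again the data model \ref{assump:model:datamodel}, now with feature matrix $\atoms\sqrt{\Covmatr}$ and unchanged noise atoms; hence the associated feature dictionary becomes $\dict' := (\atoms\sqrt{\Covmatr})^\T = \sqrt{\Covmatr}\dict$ (using symmetry of $\sqrt{\Covmatr}$) and the noise dictionary $\dictnoise$ is left untouched. Because $\sqrt{\Covmatr}$ is invertible, the structural hypothesis transforms compatibly: $\trusigmu \in \dict\sset$ is equivalent to $\scalfac\trusig' \in \dict'\sset$, and the admissible set $\sset_{\trusig'}$ built with $\dict'$ coincides with $\sset_{\trusig}$. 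Consequently the optimal representation $\tru\fv$ of Definition~\ref{def:results:dictionary:optimalrepr} --- whose defining problem \eqref{eq:results:dictionary:optimalrepr} involves only $\dictnoise$ and the constraint $\trusig = \dict\fv$ --- is literally the same vector, and so are $\tru\stddev^2 = \lnorm{\dictnoise\tru\fv}^2$ and $\tru\snrscal = \scalfac/\sqrt{1+\tru\stddev^2}$. Finally, $\sp{\gaussian_i}{\trusig'} \distributed \Normdistr{0}{1}$ since $\lnorm{\trusig'} = 1$, so the model parameters $\scalfac, \modeldev, \modeldeveta$ from \eqref{eq:results:estimatorsignal:modelparam} --- and hence the constant $\modeldevconst$ --- are unchanged.

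\emph{Adversarial noise.} One cannot invoke Theorem~\ref{thm:results:selection} verbatim for the whitened model, because the estimator \eqref{eq:results:estimatorsignal:estimator} is fed the observed $y_i$, which need not equal the single-index outputs $y_i^0$. I would therefore re-run the proof of Theorem~\ref{thm:results:selection} (Subsection~\ref{subsec:proofs:selection}) for the whitened data, observing that the observed outputs enter the argument only through the empirical $\ell^2$-deviation $\sqrt{\tfrac1m\sum_{i=1}^m\abs{y_i - y_i''}^2}$ from the rescaled dictionary outputs $y_i''$ of \eqref{eq:results:guarantee:modddrepresentation} --- this is precisely what the Lipschitz hypothesis \ref{assump:estimator:regularity} on $\loss'$ is used for when controlling the gradient of the empirical loss at $\tru\fv$, and it is where the noise parameter $\advdev$ originates. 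Under \ref{assump:model:obsmodelrefined}, the triangle inequality for the empirical $\ell^2$-norm gives $\sqrt{\tfrac1m\sum_{i=1}^m\abs{y_i - y_i''}^2} \leq \sqrt{\tfrac1m\sum_{i=1}^m\abs{y_i - y_i^0}^2} + \sqrt{\tfrac1m\sum_{i=1}^m\abs{y_i^0 - y_i''}^2} \leq \tru\advdev + \advdev$, and the rest of the proof goes through unchanged. This is exactly the step that produces the extra summand $\tru\advdev$ in \eqref{eq:results:selection:refined:bound}, with the constant still absorbed into $\modeldevconst$.

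\emph{Translating back.} Applying the (thus modified) conclusion of Theorem~\ref{thm:results:selection} to the whitened data yields, for any minimizer $\solu\fv$ of \eqref{eq:results:estimatorsignal:estimator} and whenever $m \geq C\cdot\effdim{\extd\dict'\sset}$, the bound $\lnorm{\dict'\solu\fv - \tru\snrscal\dict'\tru\fv} \leq \modeldevconst\big((\effdim{\extd\dict'\sset}/m)^{1/4} + \advdev + \tru\advdev\big)$, where $\extd\dict' = \smallmatr{\dict' \\ \dictnoise} = \smallmatr{\sqrt{\Covmatr}\dict \\ \dictnoise} = \extd\dict_{\Covmatr}$. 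Since $\dict'\solu\fv = \sqrt{\Covmatr}\dict\solu\fv = \sqrt{\Covmatr}\solu\sig$ and $\dict'\tru\fv = \sqrt{\Covmatr}\dict\tru\fv = \sqrt{\Covmatr}\trusig$, this is precisely \eqref{eq:results:selection:refined:bound} together with the sample condition \eqref{eq:results:selection:refined:samplecount}. The only genuinely nonroutine point in this argument is the middle step: one must verify inside the proof of Theorem~\ref{thm:results:selection} that the observed outputs enter exclusively through the quantity $\sqrt{\tfrac1m\sum_{i=1}^m\abs{\cdot - y_i''}^2}$, so that the adversarial perturbation is governed purely by the Lipschitz constant $C_{\loss'}$ and enters additively; everything else is a linear change of coordinates that preserves the Gaussian and independence structure, the optimal representation $\tru\fv$, and the model parameters.
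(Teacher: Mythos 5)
Your proposal is correct and follows essentially the same route as the paper: the paper also whitens via $\sqrt{\Covmatr}$ (working with the extended samples $(\sqrt{\Covmatr}^{-1}\lat_i,\latnoise_i)$, the vector $\tru\v=\tru\scalsig^{-1}(\sqrt{\Covmatr}\trusig,\dictnoise\tru\fv)$, and the set $L=\extd\dict_{\Covmatr}\sset$) and splits the output deviation by the triangle inequality into $\advdev+\tru\advdev$. The one step you flag as nonroutine---that the observed outputs enter only through the empirical $\ell^2$-deviation from the ideal single-index outputs---requires no re-running of any proof, since the imported Theorem~\ref{thm:proofs:highdimestimation} is already stated for arbitrary inputs $y_1,\dots,y_m\in\Y$ with precisely that deviation term $\bigl(\tfrac1m\sum_{i=1}^m\abs{\tilde{y}_i-y_i}^2\bigr)^{1/2}$ appearing in its error bound.
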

The proof is again postponed to Subsection~\ref{subsec:proofs:selection}; note that the statements of Theorem~\ref{thm:results:selection} and Theorem~\ref{thm:results:selection:refined} coincide if $\Covmatr = \Idm{p}$ and $\tru\advdev = 0$.
The impact of the adversarial noise is reflected by an additive error term $\tru\advdev$ in \eqref{eq:results:selection:refined:bound}.
If the mismatch between the proper model of \eqref{eq:results:refinements:obsmodel} and the true observations $y_1, \dots, y_m$ is not too large, one should be able to control the value of $\tru\advdev$. But similarly to handling $\advdev$, this also relies on the specific behavior of the non-linear modifier $\fobs$.
Another difference to Theorem~\ref{thm:results:selection} is the additional deformation of $\trusig$ and $\extd\dict$ by $\sqrt{\Covmatr}$, which is due to the non-trivial covariance structure of the latent factors $\lat_i$.
Here, we may run into problems if some of the signal variables are almost perfectly correlated with each other, so that $\sqrt{\Covmatr}$ is close to being singular.
In such a situation, a good bound on $\lnorm{\sqrt{\Covmatr}(\solu\sig - \tru\snrscal\trusig)}$ does not automatically imply that we also have $\solu\sig \approx \tru\snrscal\trusig$.
This observation underpins our intuition that, without prior information, (almost) collinear features are considered to be equally important, and it is therefore unclear which one should be selected.
Finally, it is again worth mentioning that Theorem~\ref{thm:results:selection:refined} continues the fundamental philosophy of this work in the sense that the applied estimator \eqref{eq:results:estimatorsignal:estimator} does not require any knowledge of $\Covmatr$ and $\tru\advdev$.

\begin{remark}
	Exploiting the abstract framework of \cite{genzel2016estimation}, the above results could be even further generalized. This particularly concerns the asymptotic error rate of $O(m^{-1/4})$ in \eqref{eq:results:selection:refined:bound}, which is known to be non-optimal for some choices of $\sset$ (see \cite[Sec.~4]{plan2014highdim}).
	The key step towards optimal error bounds is to introduce a \emph{localized} version of the mean width, capturing the geometric structure of $\dict\sset$ in a (small) neighborhood around the desired signal $\scalfac\trusig$.
	This approach could be easily incorporated in Theorem~\ref{thm:results:selection:refined} by using \cite[Thm.~2.8]{genzel2016estimation}.
	But since the focus of this work is rather on a mathematical theory of feature selection, we omit this extension here and leave the details to the interested reader.
	\qedrmkhere
\end{remark}

\section{Consequences and Applications}
\label{sec:applications}

In this part, we focus on the application of our theoretical framework to real-world problems.
Subsection~\ref{subsec:applications:standardization} illustrates that \emph{standardizing} the data can be helpful (and sometimes even necessary) to obtain significant error bounds from Theorem~\ref{thm:results:selection}.\footnote{In the following, we usually refer to Theorem~\ref{thm:results:selection} as our main result. But unless stated otherwise, similar conclusions hold for its generalization, Theorem~\ref{thm:results:selection:refined}, as well.}
% \footnote{In the following, we usually refer to Theorem~\ref{thm:results:selection:refined} as our main result. But of course,  all conclusion hold for Theorem~\ref{thm:results:selection:refined} as well, since this is actually a special case of Theorem~\ref{thm:results:selection:refined}.}
In Subsection~\ref{subsec:applications:msdata}, we shall return to our prototype example of proteomics analysis and prove a rigorous guarantee for feature extraction from MS~data.
Finally, the general scope of our results is discussed in Subsection~\ref{subsec:applications:relevance}, including some rules-of-thumb when Theorem~\ref{thm:results:selection} implies a practice-oriented statement.

\subsection{Standardization of the Data and Signal-to-Noise Ratio}
\label{subsec:applications:standardization}

The discussion part of Theorem~\ref{thm:results:selection} has shown that the quality of the error estimate \eqref{eq:results:selection:bound} heavily relies on controlling the effective dimension $\effdim{\extd\dict \sset}$ as well as on the noise parameter $\advdev$.
For illustration purposes, let us focus on the important case of sparse representations here and assume that $\sset = R \ball[1][d]$ with some ``appropriately chosen'' scaling factor $R > 0$ (see also Remark~\ref{rmk:results:selection}\ref{rmk:results:selection:scaling} and Example~\ref{ex:results:estimatorsignal:effdim}\ref{ex:results:estimatorsignal:effdim:sparse}).
Adapting the notation from Subsection~\ref{subsec:intro:guarantee}, we denote the columns of the dictionaries $\dict$, $\dictnoise$, $\extd\dict$ by $\dictatom_j$, $\noiseatom_j$, $\extd\dictatom_j$, respectively, $j = 1, \dots, d$. Hence,
\begin{equation}
\extd\dict = \matr{\extd\dictatom_1 \mid \dots \mid \extd\dictatom_d} = \left[\begin{array}{c|c|c}\dictatom_1 & \dots & \dictatom_d \\ \noiseatom_1 & \dots & \noiseatom_d\end{array}\right] = \matr{\dict \\ \dictnoise}.
\end{equation}
At this point, it is particularly useful to think of $\extd\dictatom_j$ as a \emph{dictionary atom} that determines the contribution of each single feature/noise atom (i.e., the rows of $\extd\dict$) to the $j$-th feature variable of the data. 
% Note that the noise factors might have a significant impact on some of the feature variables. For the example of MS~data, this precisely happens on the baseline where the mass channels are not associated with any of the peaks (see Figure~\ref{fig:intro:motivation:msdata} and Remark~\ref{rmk:applications:standardization}\ref{rmk:applications:standardization:snr}).
% 	But actually, it is only relevant how the superpose on the support of $\tru\fv$.

Since $\ball[1][d] = \convhull\{\pm \vunit_1, \dots, \pm \vunit_d\}$, where $\vunit_1, \dots, \vunit_d \in \R^d$ are the canonical unit vectors of $\R^d$,
% \begin{equation}
% L := \extd\dict \sset = R \extd\dict \ball[1][d] = R \cdot \convhull\{\pm \extd\dictatom_1, \dots, \pm \extd\dictatom_d\}.
% \end{equation}
the estimate \eqref{eq:results:estimatorsignal:effdim:dictionary:dictpolytope} of Example~\ref{ex:results:estimatorsignal:effdim}\ref{ex:results:estimatorsignal:effdim:dictionary} yields
\begin{equation}\label{eq:applications:standardization:effdimbound}
	\effdim{\extd\dict\sset} = R^2 \cdot \effdim{\extd\dict \ball[1][d]} \lesssim R^2 \cdot \extd{D}_{\max}^2 \cdot \log(2d)
\end{equation}
with $\extd{D}_{\max} := \max_{1 \leq j \leq d} \lnorm{\extd\dictatom_j}$.
This implies that the asymptotic behavior of the effective dimension can be controlled by means of the maximal column norm of $\extd\dict$.
In order to establish a bound on $\extd{D}_{\max}$, let us first compute the variance of each feature variable of the data $\data_i = (x_{i,1}, \dots, x_{i,d})$.\footnote{Since the data is i.i.d., all results will be independent of the actual sample index $i = 1, \dots, m$.}
Using the factorization \eqref{eq:results:dictionary:factorization} and $\atoms = \dict^\T$, $\atomsnoise = \dictnoise^\T$, we obtain
% $\atom_k = (a_{k,1}, \dots, a_{k,d})$ and $\atomnoise_l = (b_{l,1}, \dots, b_{l,d})$, we obtain
\begin{equation}\label{eq:applications:standardization:variance}
	\var[x_{i,j}] = \var[\dictatom_j^\T \lat_i + \noiseatom_j^\T \latnoise_i] = \var[\sp{\dictatom_j}{\lat_i}] + \var[\sp{\noiseatom_j}{\latnoise_i}] = \lnorm{\dictatom_j}^2 + \lnorm{\noiseatom_j}^2 = \lnorm{\extd\dictatom_j}^2
\end{equation}
for $j = 1, \dots, d$.
Regarding \eqref{eq:applications:standardization:effdimbound}, it is therefore beneficial to \define{standardize} the data, meaning that $x_{i,j}$ gets replaced by $x_{i,j} / {\scriptstyle\sqrt{\var[x_{i,j}]}}$.\footnote{Here, we implicitly assume that $\var[x_{i,j}] \neq 0$. But this is no severe restriction, since a zero-variable would not be relevant to feature selection anyway.}
Every feature variable thereby achieves unit variance and the resulting dictionary atoms $\extd\dictatom_j / \lnorm{\extd\dictatom_j}$ are normalized.
And once again, it is crucial that such a simple step does only require the very raw data and can be realized without any knowledge of $\extd\dict$.

In the following, let us assume without loss of generality that $\var[x_{i,j}] = 1$  (the feasibility of such an assumption is discussed below in Remark~\ref{rmk:applications:standardization}\ref{rmk:applications:standardization:empirical}).
We then have $\extd{D}_{\max}^2 = \max_{1 \leq j \leq d} \lnorm{\extd\dictatom_j}^2 = \max_{1 \leq j \leq d} \var[x_{i,j}] = 1$, so that
\begin{equation}\label{eq:applications:standardization:effdimboundstd}
	\effdim{\extd\dict\sset} \lesssim R^2 \cdot \log(2d).
\end{equation}
This estimate does not depend on the underlying data model \ref{assump:model:datamodel}, except for the log-factor.
Hence, we can draw the remarkable conclusion that the estimator \eqref{eq:results:estimatorsignal:estimator} of Theorem~\ref{thm:results:selection} works like an \define{oracle} in the sense that it almost performs as good as if the signal factors $\lat_i$ were explicitly known. 
But note that the parameters $\tru\snrscal$ and $\advdev$ still may have a certain impact on the approximation quality, as we will analyze below.

\begin{remark} \label{rmk:applications:standardization}
\begin{rmklist}
\item\label{rmk:applications:standardization:anisotropic}
	The above argumentation can be easily adapted to Theorem~\ref{thm:results:selection:refined}, which allows for anisotropic signal variables $\lat_i \distributed \Normdistr{\vnull}{\Covmatr}$.
	To see this, one simply needs to repeat the computations with $\dict$ and $\extd\dict$ replaced by $\sqrt{\Covmatr} \dict$ and $\extd\dict_{\Covmatr}$, respectively.
\item\label{rmk:applications:standardization:empirical}
	In most practical applications, we do not have access to the random distribution of $x_{i,j}$. Thus, $\var[x_{i,j}]$ cannot be exactly computed and a standardization becomes infeasible.
	A typical way out is to perform an \define{empirical standardization} instead, which makes only use of the available sample data: For a fixed index $j \in \{1, \dots, d\}$, calculate the \define{empirical mean} $\std{x}_j := \tfrac{1}{m} \sum_{i = 1}^m x_{i,j}$ as well as the \define{empirical variance} $\std{\stddev}_j^2 := \tfrac{1}{m} \sum_{i = 1}^m (x_{i,j} - \std{x}_j)^2$. Then, the initial component $x_{i,j}$ is replaced by
	\begin{equation}
		\std{x}_{i,j} := \frac{x_{i,j} - \std{x}_j}{\std{\stddev}_j} \quad \text{for all $i = 1, \dots, m$ and $j = 1, \dots, d$.}
	\end{equation}
	Since the law of large numbers yields $\std{x}_j \to \mean[x_{i,j}]$ and $\std{\stddev}_j^2 \to \var[x_{i,j}]$ as $m \to \infty$, we can conclude that $\std{\stddev}_j^2 \approx \var[x_{i,j}] = \lnorm{\extd\dictatom_j}^2$. Consequently, an \define{empirical standardization} implies $\extd{D}_{\max} \approx 1$ for $m$ sufficiently large, so that the bound of \eqref{eq:applications:standardization:effdimboundstd} is at least approximately true.
\item\label{rmk:applications:standardization:snr}
	Due to the computational inaccuracies described in the previous remark, one could ask whether simply rescaling the data could be a better alternative to standardization.
	Unfortunately, such an approach can sometimes cause serious problems: Since $\lnorm{\trusig} = 1$ and $\trusig = \dict \tru\fv$, it might happen that several entries of $\tru\fv$ must be very large in magnitude (when the corresponding atom norm $\lnorm{\dictatom_j}$ is small) whereas others must be small (when $\lnorm{\dictatom_j}$ is large).
	The first case might involve an enlargement of the parameter $R > 0$ in order to guarantee $\scalfac \tru\fv \in \sset = R\ball[1][d]$, but at the same time, the estimate of \eqref{eq:applications:standardization:effdimbound} becomes worse.
	This issue could be easily fixed by rescaling the data $\data_i \mapsto \lambda \cdot \data_i$ with a sufficiently large $\lambda > 0$.
	However, the value of $\extd{D}_{\max}$ would then blow up (due to the large atom norms of the second case), which makes \eqref{eq:applications:standardization:effdimbound} meaningless again.
	
	Instead, a standardization aims to circumvent this drawback by adjusting each atom separately.
	But even then, we could run into difficulties as the SNR of the individual feature variables is too low. Indeed, if
	\begin{equation}\label{eq:applications:standardization:badsnr}
		\tfrac{\lnorm{\dictatom_j}}{\lnorm{\noiseatom_j}} \ll 1 \quad \text{and} \quad \lnorm{\dictatom_j}^2 + \lnorm{\noiseatom_j}^2 = \lnorm{\extd\dictatom_j}^2 = \var[x_{i,j}] = 1,
	\end{equation}
	the value of $\lnorm{\dictatom_j}$ must be very small.
	Thus, we are actually in the same situation as before (the first case), leading to a larger value of $R$.
	Problems may therefore occur if some of the non-zero components of the representing feature vector $\tru\fv$ suffer from \eqref{eq:applications:standardization:badsnr}.
	This observation gives a further justification of Definition~\ref{def:results:dictionary:optimalrepr}, since it tries to avoid a low SNR by minimizing the noise part $\lnorm{\dictnoise \tru\fv}$.
% 	In other words, the support of $\tru\fv$ is chosen such that the associated feature variables exhibit the best possible SNR (among all feasible feature vectors).
% 	
\item
	For general coefficient sets $\sset$, the situation becomes more difficult.
	Even if $\sset$ is a polytope, it is not guaranteed anymore that the factor $\extd{D}_{\max}$ is bounded by the maximal column norm of $\extd\dict$.
	For that reason, different strategies may be required to gain control of the asymptotic behavior of $\effdim{\extd\dict\sset}$.
	Which type of preprocessing is appropriate clearly depends on the specific application, but as illustrated above, a careful analysis of the effective dimension could serve as a promising guideline here.
	\qedrmkhere
\end{rmklist}
\end{remark}

% The previous considerations have merely concerned the effective dimension, which affects the first summand of the error bound \eqref{eq:results:selection:bound}.
% But in order to obtain a meaningful statement from Theorem~\ref{thm:results:selection}, it is also essential to bound the noise parameter $\advdev$, defined in \eqref{eq:results:guarantee:noiseparam}.
% Compared to standardization (cf. Remark~\ref{rmk:applications:standardization}\ref{rmk:applications:standardization:snr}), the SNR plays a major role now.\footnote{In this part, the SNR is defined according to \eqref{eq:results:dictionary:snr}, that is, $\SNR = 1 / \tru\stddev^2$.}
% In the noiseless case, corresponding to perfect SNR, we indeed have $y_i'' = y_i$ and therefore $\advdev = 0$.
% This indicates that that $\advdev$ should be still small if the SNR is sufficiently good.
% However, since the definition of $\advdev$ also involves the (unknown) non-linearity $\fobs$, we cannot make a general statement here.
% The following example verifies our intuition at least for the important cases of linear and binary outputs.
% Note that the noise variance $\tru\stddev^2 \ (= \SNR^{-1})$ is now a fixed (small) number which is determined by our choice of $\tru\fv$.
The previous considerations have merely focused on the first summand of the error bound \eqref{eq:results:selection:bound}.
However, in order to achieve a meaningful interpretation of Theorem~\ref{thm:results:selection}, it is equally 
important to bound the noise parameter $\advdev$, given by \eqref{eq:results:guarantee:noiseparam}. In this situation, the SNR plays a more significant role.\footnote{The SNR is defined according to \eqref{eq:results:dictionary:snr}, that is, $\SNR = 1 / \tru\stddev^2$.}
Since one has $\advdev = 0$ in the noiseless case (``perfect SNR''), we may hope that $\advdev$ is still small if the SNR is sufficiently high.
The following example verifies this intuition for the standard cases of linear and binary outputs. In the sequel, the noise variance $\tru\stddev^2 \ (= \SNR^{-1})$ should be regarded as a fixed (small) number, which is determined by the choice of $\tru\fv$.

\begin{example}\label{ex:applications:standardization:noiseparam}
\begin{rmklist}
\item
	\emph{Noisy linear observations.} Similarly to Example~\ref{ex:results:estimatorsignal:lossmodel}\ref{ex:results:estimatorsignal:lossmodel:linear}, let us assume that $\fobs(g) := g + \xi$ with additive mean-zero noise $\xi$. Recalling $\tru{s} = \sp{\lat_i}{\trusig}$, $\tru{n} = \sp{\latnoise_i}{\dictnoise \tru\fv}$ and \eqref{eq:results:guarantee:modddrepresentation}, the observations take the following form:
	\begin{equation}
		y_i = \tru{s} + \xi_i \quad \text{and} \quad y_i'' = \tfrac{\tru{s} + \tru{n}}{\sqrt{1+ \tru\stddev^2}} + \xi_i.
	\end{equation}
	Since $\tru{s} \distributed \Normdistr{0}{1}$ and $\tru{n} \distributed \Normdistr{0}{\tru\stddev^2}$ are independent, we have
	\begin{equation}
		Y_i := y_i - y_i'' = \tru{s} - \tfrac{\tru{s} + \tru{n}}{\sqrt{1+ \tru\stddev^2}} = \Big(1 - \tfrac{1}{\sqrt{1+ \tru\stddev^2}}\Big) \tru{s} + \tfrac{1}{\sqrt{1+ \tru\stddev^2}} \tru{n} \distributed \Normdistr{0}{\tru{\std{\stddev}}^2}
	\end{equation}
	with
	\begin{equation}
		\tru{\std{\stddev}}^2 := \Big(1 - \tfrac{1}{\sqrt{1+ \tru\stddev^2}}\Big)^2 + \tfrac{\tru\stddev^2}{1+ \tru\stddev^2} = 2 - \tfrac{2}{\sqrt{1+ \tru\stddev^2}}.
	\end{equation}
	Thus $\mean[Y_i^2] = \tru{\std{\stddev}}^2$, and by \cite[Prop.~7.5]{foucart2013cs}, we conclude that $Y_i^2 = (y_i - y_i'')^2$ is subexponential:
	\begin{equation}
		\prob[|Y_i^2 - \tru{\std{\stddev}}^2| \geq u] \leq \prob[\abs{\tru{\std{\stddev}}^{-1} Y_i} \geq \sqrt{\tru{\std{\stddev}}^{-2} u - 1}][\Big] \leq \exp(-\tfrac{\tru{\std{\stddev}}^{-2} u-1}{2}) \leq \sqrt{e} \cdot \exp(-\tfrac{u}{2\tru{\std{\stddev}}^2})
	\end{equation}
	for all $u \geq \tru{\std{\stddev}}^{2}$.
	Finally, we apply Bernstein's inequality (\cite[Cor.~7.32]{foucart2013cs} with $\beta = \sqrt{e}$, $\kappa = 1 / (2 \tru{\std{\stddev}}^2)$, and $t = 3 \tru{\std{\stddev}}^2 m$) to obtain a high-probability bound on the noise parameter:
	\begin{align}
		\prob[\advdev \leq 2 \tru{\std{\stddev}}] &= \prob[\tfrac{1}{m}\sum_{i = 1}^m Y_i^2  \leq 4 \tru{\std{\stddev}}^2][\Big] \geq \prob[\abs{\tfrac{1}{m}\sum_{i = 1}^m (Y_i^2 - \tru{\std{\stddev}}^2)} \leq 3 \tru{\std{\stddev}}^2][\Big] \geq 1 - 2 \exp\Big( - \tfrac{2m}{5} \Big).
	\end{align}
	Observing that $\tru{\std{\stddev}} = O(\tru{\stddev}) = O(\SNR^{-1/2})$ as $\tru{\stddev} \to 0$, such a bound could be now easily incorporated into the statement of Theorem~\ref{thm:results:selection}.
\item\label{ex:applications:standardization:noiseparam:binary}
	\emph{Binary observations.} We now consider $\fobs(g) := \sign(g)$, which implies
	\begin{equation}
		y_i = \sign(\tru{s}) \quad \text{and} \quad y_i'' = \sign\Big(\tfrac{\tru{s} + \tru{n}}{\sqrt{1+ \tru\stddev^2}}\Big) = \sign(\tru{s} + \tru{n}).
	\end{equation}
	These are Bernoulli variables, so that $\advdev^2$ actually measures the fraction of bit-flips (up to a factor of $4$) caused by the noise term $\tru{n}$.
	Hence, let us first compute the probability of a single bit-flip:
	\begin{align}
		\prob[y_i \neq y_i''] &= \prob[\tru{s} \cdot (\tru{s} + \tru{n}) < 0] = \iint_{s (s+n) < 0} \tfrac{1}{\sqrt{2\pi}} \exp(-\tfrac{s^2}{2}) \cdot \tfrac{1}{\sqrt{2\pi \tru\stddev^2}} \exp(-\tfrac{n^2}{2\tru\stddev^2}) \ dn \ ds \\
		&= 2 \cdot \tfrac{1}{2\pi \tru\stddev} \int_{0}^\infty \int_{-\infty}^{-s} \exp(-\tfrac{s^2}{2}) \cdot \exp(-\tfrac{n^2}{2\tru\stddev^2})\ dn \ ds \\
		&= \tfrac{1}{\pi \tru\stddev} \Big( \tfrac{\pi\tru\stddev}{2} - \tru\stddev \arctan(\tfrac{1}{\tru\stddev}) \Big) = \tfrac{1}{2} - \tfrac{1}{\pi} \arctan(\tfrac{1}{\tru\stddev}) =: \tru{p}.
	\end{align}
	Setting again $Y_i := y_i - y_i''$, we conclude that $\prob[Y_i^2 = 4] = 1 - \prob[Y_i^2 = 0] = \tru{p}$. In particular, $\mean[Y_i^2] = 4 \tru{p}$ and $\abs{Y_i^2 - \mean[Y_i^2]} \leq 4$. An application of Hoeffding's inequality (\cite[Thm.~7.20]{foucart2013cs} with $B_i = 4$ and $t = 5 \tru{p} m$) finally yields
	\begin{align}
		\prob[\advdev \leq 3\sqrt{\tru{p}}] &= \prob[\tfrac{1}{m}\sum_{i = 1}^m Y_i^2  \leq 9 \tru{p}][\Big] = \prob[{\tfrac{1}{m}\sum_{i = 1}^m (Y_i^2 - \mean[Y_i^2])  \leq 5\tru{p}}][\Big] \geq 1 - \exp \Big(- \tfrac{25\tru{p}^2 m}{32} \Big).
	\end{align}
	This bound is also relevant to Theorem~\ref{thm:results:selection}, at least for small (but fixed) values of $\tru{\stddev}$, since we have $\sqrt{\tru{p}} = O(\sqrt{\tru{\stddev}}) = O(\SNR^{-1/4})$ as $\tru{\stddev} \to 0$.
	\qedrmkhere
\end{rmklist}
\end{example}

\subsection{Sparse Feature Extraction from MS Data}
\label{subsec:applications:msdata}

Throughout this paper, the challenge of (sparse) feature extraction from MS~data has been a driving motivation for our modeling. With the full abstract framework of Section~\ref{sec:results} available, we are now ready to formulate a rigorous recovery guarantee for this specific application.
In order to make the specifications of \ref{assump:model:obsmodelrefined} and \ref{assump:model:datamodel} precise, let us first recall the vague model description from Subsection~\ref{subsec:intro:motivation}.
Each sample $i \in \{1, \dots, m\}$ corresponds to a patient who suffers from a certain disease ($y_i = +1$) or not ($y_i = -1$).
The associated signal variables $\lat_i = (s_{i,1}, \dots, s_{i,p}) \distributed \Normdistr{\vnull}{\Covmatr}$ represent the (centered) concentrations of the examined protein collection. In other words, each index $k \in \{ 1, \dots, p\}$ of $\lat_i$ stands for a particular type of protein and the corresponding entry contains its molecular concentration for the $i$-th patient.
Our hope is that the disease labels can be accurately predicted by a \emph{sparse $1$-bit observation scheme} ($\fobs = \sign$)
\begin{equation}\label{eq:applications:msdata:1bitmodel}
	y_i^0 := \sign(\sp{\lat_i}{\trusig}), \quad i = 1, \dots, m,
\end{equation}
where the signal vector $\trusig \in \R^p$ is assumed to be \emph{$s$-sparse}, that is, $\lnorm{\trusig}[0] \leq s$.
However, the true disease labels $y_1, \dots, y_m \in \{-1,+1\}$ may ``slightly'' differ from this very simple model.
As proposed by \ref{assump:model:obsmodelrefined}, we shall measure this mismatch by
\begin{equation}
	\tru\advdev := \sqrt{\tfrac{1}{m} \sum_{i = 1}^m \abs{y_i^0 - y_i}^2},
\end{equation}
which is (up to a factor of $2$) equal to the root of the \emph{fraction of wrong observations}.

The actual input data $\data_1, \dots, \data_m \in \R^d$ is generated via \emph{mass spectrometry} (\emph{MS}); see also Subsection~\ref{subsec:intro:motivation}, especially Figure~\ref{fig:intro:motivation:msdata}.
To give a formal definition of a \define{mass spectrum} according to \ref{assump:model:datamodel}, we follow the basic approach of \cite[Sec.~4.1]{genzel2015master}, which describes 
the raw data as a linear combination of \define{Gaussian-shaped peaks} plus low-amplitude \define{baseline noise}. In this situation, it is also helpful to be aware of the physical meaning of the data variables $\data_i = (x_{i,1}, \dots, x_{i,d})$: Each index $j \in \{1, \dots, d\}$ corresponds to a specific mass value (which is proportional to $j$), while the associated \define{mass channel} of the MS machine detects (counts) all those molecules having this specific mass value.
Due to the underlying physics, this process is not exact so that the abundance of each protein, i.e., the total count of detected molecules, typically spreads over several channels.
Consequently, one eventually observes wide peaks instead of sharp spikes (cf. Figure~\ref{fig:intro:motivation:msdata}).
To keep our illustration as simple as possible, we assume that the (raw) feature atoms\footnote{The word ``raw'' indicates that we do not care about standardizing the data for now. A standardization is then considered as a separate step below.} are discrete samplings of Gaussian-shaped functions
\begin{equation}
	\atom_k' = (a_{k,1}', \dots, a_{k,d}') := (G_{I_k,c_k,d_k}(1), \dots, G_{I_k,c_k,d_k}(d)) \in \R^d, \quad k = 1, \dots, p,
\end{equation}
with
% \footnote{The boundedness of the ``intensities'' $I_k \leq 1$ can be always achieved by rescaling the data. By this assumption, we can conveniently circumvent some technicalities when standardizing the data below.}
\begin{equation}
	G_{I_k, c_k, d_k}(t) := I_k \cdot \exp\Big( -\tfrac{(t - c_k)^2}{d_k^2} \Big), \quad t \in \R, \quad I_k \geq 0, \ c_k \in \R, \ d_k > 0.
\end{equation}
Thus, $I_k$ determines the height\footnote{Note that $I_k$ is still a fixed number. By \ref{assump:model:datamodel}, the maximal height of the $k$-peak is actually given by $s_{i,k} \cdot I_k$, which also depends on the considered sample $i$.} (intensity) of the $k$-th peak, $d_k$ its width, and $c_k$ its mass center.

The (raw) noise atoms, on the other hand, simply capture the independent baseline noise that is present in each mass channel:
\begin{equation}
	\atomnoise_l' := \stddev_l \vunit_l \in \R^d, \quad l = 1, \dots, q := d,
\end{equation}
where $\stddev_l \geq 0$ and $\vunit_l$ is the $l$-th unit vector of $\R^d$.

If we would just use $\atom_1', \dots, \atom_p', \atomnoise_1', \dots, \atomnoise_d'$ as input parameters for \ref{assump:model:datamodel}, the issue of Remark~\ref{rmk:applications:standardization}\ref{rmk:applications:standardization:snr} would become relevant, since the maximal intensities $I_1, \dots, I_p$ may be of different orders of magnitude in practice. In fact, there are usually peaks of very low height, whereas some others are huge in magnitude (see again Figure~\ref{fig:intro:motivation:msdata}). However, this does by far not imply that the latter ones are more important than the others.
Raw MS~data is therefore often standardized before feature selection.
For this purpose, let
\begin{align}
	\dict' &= \matr{\dictatom_1' \mid \dots \mid \dictatom_d'} := \matr{\atom_1' \mid \dots \mid \atom_p'}^\T \in \R^{p \times d}, \\
	\dictnoise' &= \matr{\dictnoise_1' \mid \dots \mid \dictnoise_d'} := \matr{\atomnoise_1' \mid \dots \mid \atomnoise_d'}^\T = \diag{\stddev_1, \dots, \stddev_d} \in \R^{d\times d}.
\end{align}
denote the raw dictionaries. According to \eqref{eq:applications:standardization:variance}, we compute the variance of the raw feature variables
\begin{equation}
	(\stddev_j')^2 := \var[x_{i,j}'] = \lnorm{\dictatom_j'}^2 + \lnorm{\noiseatom_j'}^2 = \lnorm{\dictatom_j'}^2 + \stddev_j^2, \quad j = 1, \dots, d,
\end{equation}
and standardize the dictionary atoms by
\begin{equation}\label{eq:applications:msdata:atomstd}
	\dictatom_j := \dictatom_j' / \stddev_j' \quad \text{and} \quad \noiseatom_j := \noiseatom_j' / \stddev_j', \quad j = 1, \dots, d.
\end{equation}
The resulting data model \ref{assump:model:datamodel} is then based on the extended dictionary
\begin{equation}\label{eq:applications:msdata:extdatomstd}
	\extd\dict = \matr{\extd\dictatom_1 \mid \dots \mid \extd\dictatom_d} := \left[\begin{array}{c|c|c}\dictatom_1 & \dots & \dictatom_d \\ \noiseatom_1 & \dots & \noiseatom_d\end{array}\right] = \matr{\dict \\ \dictnoise} = \matr{\atoms^\T \\ \atomsnoise^\T} \in \R^{(p+d) \times d}.
\end{equation}

Using these model assumptions, we can now state a corollary of Theorem~\ref{thm:results:selection:refined} for variable selection from MS~data.
This can be also regarded as an extension of Theorem~\ref{thm:intro:sparseselection} to the noisy case.
\begin{corollary}\label{cor:applications:msdata}
	Suppose that the standardized input data $\{(\data_i, y_i)\}_{1 \leq i \leq m}$ follow \ref{assump:model:obsmodelrefined} and \ref{assump:model:datamodel} with the above specifications. Furthermore, let \ref{assump:estimator:regularity} and \ref{assump:estimator:convexity} hold true. We assume that $\lnorm{\sqrt{\Covmatr}\trusig} = 1$ and $\trusig \in R\dict\ball[1][d]$ for some $R > 0$.
	Then there exists a constant $C > 0$, depending on the used loss function $\loss$, such that the following holds with high probability:
	
	If the number of samples obeys
	\begin{equation}
		m \geq C^4 \cdot R^2 \cdot \log(2d),
	\end{equation}
	then, defining $\solu\sig := \dict \solu\fv$ for any minimizer $\solu\fv$ of \eqref{eq:results:estimatorsignal:estimator}, we have
	\begin{equation}
		\lnorm{\sqrt{\Covmatr}(\solu\sig - \tru\snrscal\trusig)} = \lnorm{\sqrt{\Covmatr}(\dict\solu\fv - \tru\snrscal \dict\tru\fv)} \leq C \bigg( \Big(\frac{R^2 \cdot \log(2d)}{m}\Big)^{1/4} + \sqrt{\tru{\stddev}} + \tru\advdev \bigg),
	\end{equation}
	where $\tru\snrscal := \sqrt{\frac{2}{ \pi (1 + \tru\stddev^2)}}$.
\end{corollary}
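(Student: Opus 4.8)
\emph{Strategy.} The plan is to obtain the corollary as a specialization of Theorem~\ref{thm:results:selection:refined} to the binary non-linearity $\fobs = \sign$, the coefficient set $\sset = R\ball[1][d]$, and the extended dictionary $\extd\dict_{\Covmatr}$ built from the standardized atoms of \eqref{eq:applications:msdata:atomstd} and \eqref{eq:applications:msdata:extdatomstd}. This splits into four steps: (i) check the hypotheses of Theorem~\ref{thm:results:selection:refined}; (ii) bound the effective dimension $\effdim{\extd\dict_{\Covmatr}\sset}$; (iii) bound the model-mismatch parameter $\advdev$; (iv) evaluate the model parameters $\scalfac, \modeldev, \modeldeveta$ and the scaling factor $\tru\snrscal$; and then substitute everything into \eqref{eq:results:selection:refined:bound}. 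Step~(i) is routine: \ref{assump:model:obsmodelrefined} holds with $\fobs = \sign$ — the discrepancy between the true labels $y_i$ and $y_i^0 = \sign(\sp{\lat_i}{\trusig})$ being exactly the adversarial perturbation measured by $\tru\advdev$ — while \ref{assump:model:datamodel} holds with the Gaussian-peak feature atoms and the diagonal noise atoms, and \ref{assump:estimator:regularity}, \ref{assump:estimator:convexity} are assumed. The only structural condition needing attention is $\scalfac\trusig \in \dict\sset$: from $\trusig \in R\dict\ball[1][d]$ we pick a representation $\trusig = \dict\fv$ with $\lnorm{\fv}[1] \le R$, and since $\abs{\scalfac} = \sqrt{2/\pi} < 1$ (see step~(iv)) and $\sset = R\ball[1][d]$ is convex with $\vnull \in \sset$, we get $\scalfac\trusig = \dict(\scalfac\fv)$ with $\lnorm{\scalfac\fv}[1] \le R$; consequently the optimal representation $\tru\fv$ from Definition~\ref{def:results:dictionary:optimalrepr} exists and lies in $\sset_{\trusig}$.

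For step~(ii), note that $\ball[1][d] = \convhull\{\pm\vunit_1, \dots, \pm\vunit_d\}$ is a cross-polytope with $2d$ vertices, so by Example~\ref{ex:results:estimatorsignal:effdim}\ref{ex:results:estimatorsignal:effdim:dictionary} (cf.\ the computation \eqref{eq:applications:standardization:effdimbound}, which applies verbatim with $\extd\dict$ replaced by $\extd\dict_{\Covmatr}$, as in Remark~\ref{rmk:applications:standardization}\ref{rmk:applications:standardization:anisotropic}) we have $\effdim{\extd\dict_{\Covmatr}\sset} = R^2\,\effdim{\extd\dict_{\Covmatr}\ball[1][d]} \lesssim R^2 \cdot \extd{D}_{\max}^2 \cdot \log(2d)$, where $\extd{D}_{\max}$ is the largest column norm of $\extd\dict_{\Covmatr}$. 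The standardization \eqref{eq:applications:msdata:atomstd} forces $\lnorm{\dictatom_j}^2 + \lnorm{\noiseatom_j}^2 = 1$ for every $j$, whence $\extd{D}_{\max}^2 = \max_j\big(\lnorm{\sqrt{\Covmatr}\dictatom_j}^2 + \lnorm{\noiseatom_j}^2\big) \le \max\{1, \norm{\Covmatr}\}$, a factor we absorb into the constant (if one instead standardizes with respect to the true anisotropic variance, then $\extd{D}_{\max} = 1$ exactly). Hence $\effdim{\extd\dict_{\Covmatr}\sset} \lesssim R^2 \log(2d)$, so the hypothesis $m \ge C^4 R^2 \log(2d)$ implies the sample condition \eqref{eq:results:selection:refined:samplecount} after relabeling the constant, and the first summand of \eqref{eq:results:selection:refined:bound} is at most a constant times $(R^2\log(2d)/m)^{1/4}$.

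For step~(iv), with $\fobs = \sign$ and the square loss (say), Example~\ref{ex:results:estimatorsignal:lossmodel}\ref{ex:results:estimatorsignal:lossmodel:binary} with $p = 1$ — no bit-flip, since the genuine label corruption is already carried by $\tru\advdev$ — gives $\scalfac = \sqrt{2/\pi}$ and $\modeldev^2 = \modeldeveta^2 = 1 - 2/\pi$, so $\modeldevconst = C\max\{1, \modeldev, \modeldeveta\}$ reduces to an absolute constant times the loss-dependent factor, and $\tru\snrscal = \scalfac/\sqrt{1 + \tru\stddev^2} = \sqrt{2/(\pi(1+\tru\stddev^2))}$ as claimed (for a general admissible loss one merely replaces $\sqrt{2/\pi}$ by the solution of \eqref{eq:results:estimatorsignal:modelparam:scalfac}). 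For step~(iii), Example~\ref{ex:applications:standardization:noiseparam}\ref{ex:applications:standardization:noiseparam:binary} — a subexponential tail for the bit-flip indicators followed by Hoeffding's inequality — yields, with high probability, $\advdev \le 3\sqrt{\tru p}$ with $\tru p = \tfrac12 - \tfrac1\pi\arctan(1/\tru\stddev)$; using the identity $\tfrac\pi2 - \arctan(1/x) = \arctan(x) \le x$ for $x > 0$, this becomes $\tru p \le \tru\stddev/\pi$, hence $\advdev \lesssim \sqrt{\tru\stddev}$. Substituting the bounds from steps~(ii)--(iv) into \eqref{eq:results:selection:refined:bound} and collecting all numerical and loss-dependent factors into one constant $C$ gives the asserted inequality.

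The genuinely non-routine part is step~(iii): it rests on the probabilistic estimate of Example~\ref{ex:applications:standardization:noiseparam}\ref{ex:applications:standardization:noiseparam:binary} together with the elementary $\arctan$ identity that converts the bit-flip probability $\tru p$ into the clean $O(\sqrt{\tru\stddev})$ rate in terms of the data signal-to-noise ratio $\tru\stddev^{-2}$. The only other delicate point is the bookkeeping in step~(ii): since the MS standardization \eqref{eq:applications:msdata:atomstd} is defined through the isotropic variance formula \eqref{eq:applications:standardization:variance}, one must confirm that it still controls the column norms of the anisotropic dictionary $\extd\dict_{\Covmatr}$ — which it does, at the harmless cost of a $\norm{\Covmatr}$ factor. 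Beyond these two observations, the corollary is a direct substitution into Theorem~\ref{thm:results:selection:refined}.
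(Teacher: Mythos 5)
Your proposal is correct and follows essentially the same route as the paper's own (very terse) proof: specialize Theorem~\ref{thm:results:selection:refined} with $\sset = R\ball[1][d]$, note $\scalfac = \sqrt{2/\pi} < 1$ so that $\scalfac\trusig \in \dict\sset$, invoke the standardized effective-dimension bound \eqref{eq:applications:standardization:effdimboundstd} via Remark~\ref{rmk:applications:standardization}\ref{rmk:applications:standardization:anisotropic}, and control $\advdev$ by Example~\ref{ex:applications:standardization:noiseparam}\ref{ex:applications:standardization:noiseparam:binary}. Your explicit $\arctan$ estimate for $\tru p \lesssim \tru\stddev$ and your accounting of the $\norm{\Covmatr}$ factor in the anisotropic column norms merely make precise details the paper leaves implicit.
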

\begin{proof}
First, we observe that $\scalfac = \sqrt{2 / \pi} < 1$ (by Example~\ref{ex:results:estimatorsignal:lossmodel}\ref{ex:results:estimatorsignal:lossmodel:binary}), and therefore $\scalfac \trusig \in \scalfac R\dict\ball[1][d] \subset R\dict\ball[1][d]$.
Applying the results from Subsection~\ref{subsec:applications:standardization} (in particular, the bound \eqref{eq:applications:standardization:effdimboundstd}, Remark~\ref{rmk:applications:standardization}\ref{rmk:applications:standardization:anisotropic}, and Example~\ref{ex:applications:standardization:noiseparam}\ref{ex:applications:standardization:noiseparam:binary}), the claim now immediately follows from Theorem~\ref{thm:results:selection:refined}.
\end{proof}

The significance of Corollary~\ref{cor:applications:msdata} primarily depends on the scaling parameter $R > 0$ and on the noise variance $\tru{\stddev}^2$.
To investigate this issue further, we need to better understand the structure of the (standardized) dictionary $\extd\dict$.
A key observation of \cite[Subsec.~4.3.3]{genzel2015master} was that, mostly, the peak centers are ``well-separated,'' i.e.,
\begin{equation}\label{eq:applications:msdata:wellseppeaks}
	\abs{c_k - c_{k'}} \gg \max\{ d_k, d_{k'} \} \quad \text{for all $k, k' = 1, \dots, p$ with $k \neq k'$.}
\end{equation}
In other words, the overlap between two different peaks is usually very small (cf. Figure~\ref{fig:intro:motivation:msdata}).
Mathematically seen, this assumption implies that the feature atoms are almost orthogonal, that is, $\sp{\atom_k}{\atom_{k'}} \approx 0$ for $k \neq k'$.
Figure~\ref{fig:applications:msdata:dict} shows a typical example of $\extd\dict$ in such a situation.
\begin{figure}
	\centering
	\includegraphics[width=.4\textwidth]{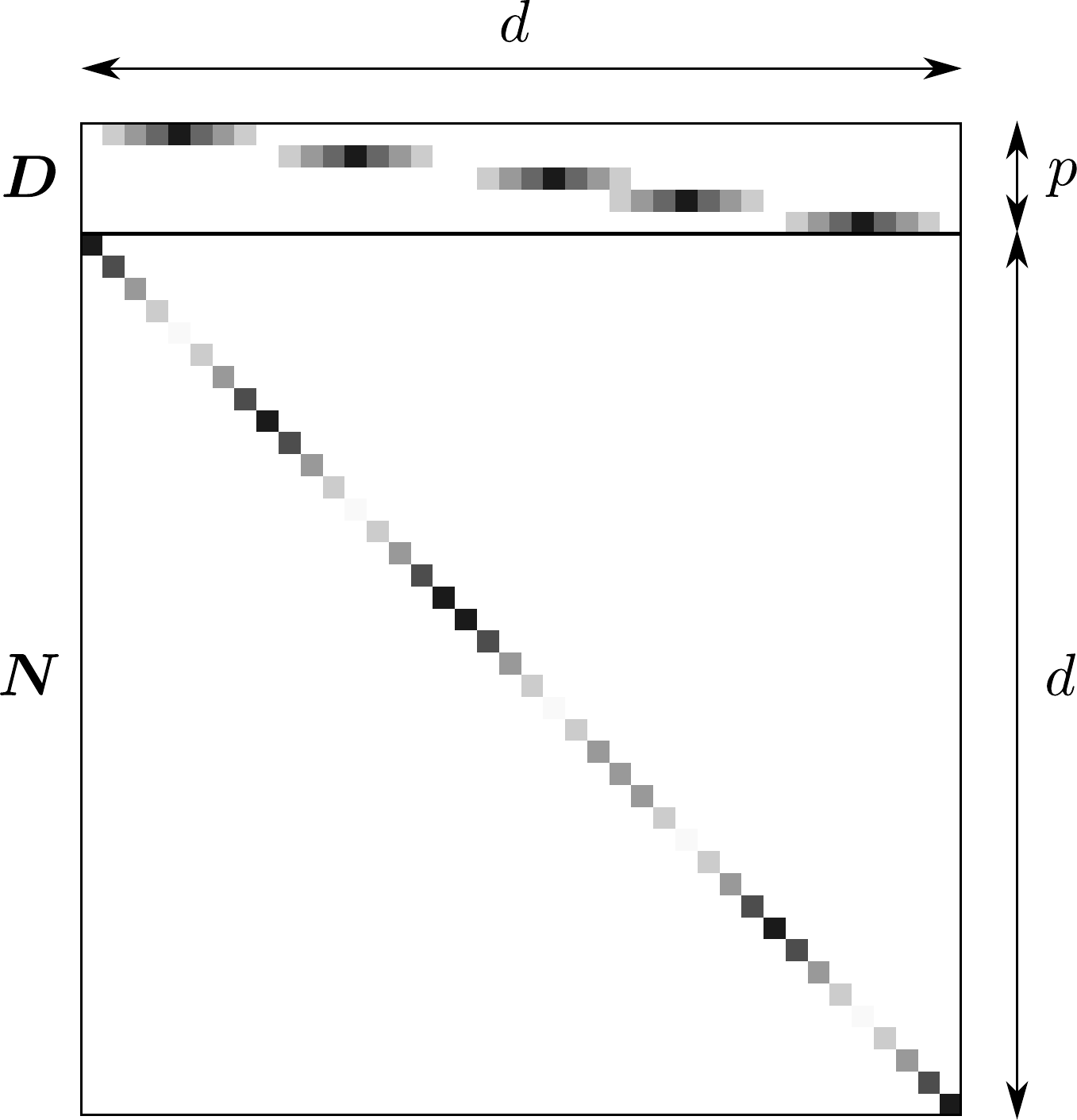}
	\caption{Visualization of $\extd\dict \in \R^{(p+d)\times d}$ in the case of almost isolated peaks (the grayscale values correspond to the magnitudes of the entries, $\text{white} = 0$ and $\text{black} = 1$). Indeed, the $p = 5$ feature atoms do only virtually overlap, so that the noise atoms precisely dominate those feature variables which are ``far away'' from any of the peak centers.}
	\label{fig:applications:msdata:dict}
\end{figure}
This visualization suggests to distinguish between two extreme cases:\footnote{In realistic scenarios, the dimension $d$ is usually much larger than the number of peaks $p$. Therefore, one of these two ``extreme'' cases applies to most of the feature variables.}
\begin{enumerate}[label=(\arabic*)]
\item\label{case:applications:msdata:dictfeat}
	\emph{Features contained in the essential support of a peak.}
	Let us assume that the $j$-th variable is very close to the center $c_k$ of the $k$-th peak ($k \in \{1, \dots, p\}$).
	Then, we have $\lnorm{\dictatom_j'}^2 \gg \stddev_j^2$, implying $(\stddev_j')^2 = \lnorm{\dictatom_j'}^2 + \stddev_j^2 \approx \lnorm{\dictatom_j'}^2$.
	Due to \eqref{eq:applications:msdata:atomstd} and \eqref{eq:applications:msdata:extdatomstd}, the associated dictionary atom now approximately takes the form
	\begin{equation}\label{eq:applications:msdata:dictfeat}
		\extd\dictatom_j = \matr{\dictatom_j \\ \noiseatom_j} \approx \matr{\vunit_k \\ \vnull} \in \R^{p+d},
	\end{equation}
	where $\vunit_k \in \R^p$ is the $k$-th unit vector of $\R^p$. Thus, the contribution of the noise atoms (baseline noise) is negligible here.
\item\label{case:applications:msdata:dictnoise}
	\emph{Features far away from any of the peak centers.} Now, $\lnorm{\dictatom_j'}^2 \ll \stddev_j^2$, which allows for an approximation $(\stddev_j')^2 = \lnorm{\dictatom_j'}^2 + \stddev_j^2 \approx \stddev_j^2$. For that reason, the dictionary atom is dominated by the noise part:
	\begin{equation}
		\extd\dictatom_j = \matr{\dictatom_j \\ \noiseatom_j} \approx \matr{\vnull \\ \vunit_j} \in \R^{p+d},
	\end{equation}
	where $\vunit_j \in \R^d$ is the $j$-th unit vector of $\R^d$.
	According to \eqref{eq:applications:standardization:badsnr}, these variables suffer from a very low SNR.
\end{enumerate}
With regard to our main goal to achieve a small variance $\tru\stddev^2 = \lnorm{\dictnoise \tru\fv}$, we can conclude that the support of an optimal representation $\tru\fv$ is (most likely) contained within those variables corresponding to Case~\ref{case:applications:msdata:dictfeat}.
This particularly shows that the issue of Remark~\ref{rmk:applications:standardization}\ref{rmk:applications:standardization:snr} can be easily resolved by standardizing the raw MS~data.

Restricting to the features of Case~\ref{case:applications:msdata:dictfeat}---and these are actually the only ones of interest---the dictionary $\dict$ enjoys a very simple structure (while $\dictnoise$ is negligible anyway). In fact, it basically consists of the unit vectors $\vunit_1, \dots, \vunit_p \in \R^p$, even though some of them may repeatedly occur, due to the spatial extent of the single peaks.
The error estimate of Corollary~\ref{cor:applications:msdata} therefore implies that the non-zero entries of the estimated feature vector $\solu\fv$ do also correspond to Case~\ref{case:applications:msdata:dictfeat}.\footnote{This conclusion is a bit vaguely formulated. For a more precise argument, one needs to observe that the error bound of Corollary~\ref{cor:applications:msdata} holds for $\lnorm{\dictnoise\solu\fv - \tru\snrscal \dictnoise\tru\fv}$ as well (see Remark~\ref{rmk:proofs:selection}\ref{rmk:proofs:selection:goodsnr}), which implies that $\dictnoise\solu\fv \approx \tru\snrscal \dictnoise\tru\fv \approx \vnull$.}
In particular, it is possible to assign each non-zero component of $\solu\fv$ to a specific peak.
A physician could now use his/her medical expertise to identify these peaks (feature atoms) with the underlying proteins (signal variables).
In that sense, the unknown transformation $\extd\dict$ can be invoked ``by hand,'' and consequently, we can even give a positive answer to \ref{question:intro:practical} in this situation.
For a numerical verification of the above argument, the interested reader is referred to the works of \cite[Chap.~5]{genzel2015master} and \cite{conrad2015spa}.

The natural (one-to-one) correspondence between the variables of the signal and data domain also allows us to specify a good choice of $R$: Since $\dict$ behaves similarly to the identity matrix, we may assume that there exists an (optimal) representation $\tru\fv$ of $\trusig$ which is $s$-sparse by itself and approximately preserves the norm, i.e., $\lnorm{\trusig} = \lnorm{\dict\tru\fv} \approx \lnorm{\tru\fv}$.
By the Cauchy-Schwarz inequality, we then obtain
\begin{equation}
	\lnorm{\tru\fv}[1] \leq \sqrt{\lnorm{\tru\fv}[0]} \cdot \lnorm{\tru\fv} \approx \sqrt{s} \cdot \lnorm{\trusig} \leq \sqrt{s} \cdot \norm{\sqrt{\Covmatr}^{-1}} \cdot \lnorm{\sqrt{\Covmatr}\trusig} = \sqrt{s} \cdot \norm{\sqrt{\Covmatr}^{-1}}.
\end{equation}
Hence, the hypothesis of Corollary~\ref{cor:applications:msdata} can be satisfied with $R \approx \sqrt{s} \cdot \norm{\sqrt{\Covmatr}^{-1}}$, and the number of required observations is essentially dominated by the degree of sparsity:
\begin{equation}
m \gtrsim C \cdot \norm{\sqrt{\Covmatr}^{-1}}^2 \cdot s \cdot \log(2d).
\end{equation}
This indicates that feature extraction from MS~data is already possible with a relatively few samples, supposed that the disease fingerprint $\trusig$ is sufficiently sparse and the covariance matrix $\Covmatr$ is not too ill-conditioned.
Such a conclusion is especially appealing for practical scenarios, since many clinical studies only encompass a very small sample set .
Moreover, it is worth mentioning that---regarding the resulting error bound of Corollary~\ref{cor:applications:msdata}---the estimator \eqref{eq:results:estimatorsignal:estimator} enjoys a certain type of \emph{oracle property} in the sense that it behaves almost as good as if we would work directly in the signal domain.

\begin{remark}
\begin{rmklist}
\item
	The discussion of Corollary~\ref{cor:applications:msdata} is based on some heuristic assumptions which do not always strictly hold in practice.
	For example, we may run into trouble if a peak is ``buried'' in the baseline noise, i.e., $I_k \ll \stddev_l$ when $l$ is close to $c_k$. In this case, \emph{all} associated feature variables would suffer from a low SNR (in the sense of \eqref{eq:applications:standardization:badsnr}) and the approximation of Case~\ref{case:applications:msdata:dictfeat} becomes very rough.
	Another undesirable situation arises if the \emph{sparse} $1$-bit model of \eqref{eq:applications:msdata:1bitmodel} is too inaccurate. Then, we would loose any control of the mismatch parameter $\tru\advdev$ and the error estimate of Corollary~\ref{cor:applications:msdata} becomes meaningless.
\item
	The bound of Corollary~\ref{cor:applications:msdata} could be substantially improved if some prior information about the data is available.
	For instance, by using statistical tests or medical expertise, one might already have a rough idea of what peaks (proteins) are more important than others.
	Such an additional knowledge could be easily incorporated via a \define{weighted standardization} as proposed in \cite{genzel2015master}.
	Alternatively, the recent concept of \define{weighted sparsity} could be beneficial (see \cite{rauhut2015weighted} for example), which would basically involve an \emph{anisotropic} rescaling of the coefficient set $\sset = R\ball[1][d]$. \qedrmkhere
\end{rmklist}
\end{remark}

\subsection{Practical Guidelines and Other Potential Applications}
\label{subsec:applications:relevance}

In the previous subsection, we have seen how our framework can be used to establish theoretical guarantees for feature selection from MS~data.
But what about the general practical scope of our results?
First, one might wonder about the feasibility of the model assumptions made in Section~\ref{sec:results}.
The linear model of \ref{assump:model:datamodel} is in fact quite generic, since every (Gaussian) data can be factorized by \eqref{eq:results:model:datamodel} if the feature and noise atoms are appropriately chosen.
The same is true for the observation scheme of \ref{assump:model:obsmodelrefined} where we even permit adversarial (deterministic) noise in the output variables.
A stronger limitation is of course the hypothesis of \emph{Gaussian} signal factors. It is indeed very unlikely that $\lat_i$ and $\latnoise_i$ precisely follow a Gaussian distribution in realistic situations. Hence, an emerging goal for future work should be to allow for more complicated measurement designs, such as sub-Gaussians (see also Section~\ref{sec:conclusion}).

Next, let us return to our initial challenge of \ref{question:intro:practical}. When applying Theorem~\ref{thm:results:selection} to a specific problem, we are always faced with the drawback that the estimator \eqref{eq:results:estimatorsignal:estimator} yields a vector $\solu\fv$ in the data domain while the actual approximation statement holds in the signal domain for $\solu\sig = \dict \solu\fv$.
The practical significance of our results therefore heavily depends on the intrinsic dictionary $\dict$, which could be very different for each application.
Unfortunately, one cannot expect a general conclusion here, since $\dict$ can be arbitrary and might be (entirely) unknown.
However, we would like to point out some ``good'' scenarios for which a meaningful answer to \ref{question:intro:practical} can be given:
\begin{listing}
\item
	\emph{Well-separated feature atoms.}
	If the supports of $\atom_1, \dots, \atom_p$ do only slightly overlap, we are essentially in the desirable setting of Subsection~\ref{subsec:applications:msdata}, where $\dict$ consisted of almost isolated peak atoms (see Figure~\ref{fig:applications:msdata:dict}).
	Thus, we may assume again that the support of the representing vector $\tru\fv$ is associated with feature variables of the form \eqref{eq:applications:msdata:dictfeat}.
	Although the dictionary $\dict$ might be still highly redundant, its structure is rather simple in this case, and a domain expert should be able to identify the non-zero components of $\solu\fv$ with the underlying signal variables.
	
	But note that this strategy can quickly fail if the feature atoms superpose. In the setup of MS~data for example, one could think of several narrow peaks sitting on the top of a wider peak.
	The behavior of $\dict$ becomes much more complicated in such a situation, so that some prior knowledge is probably required to cope with variable selection (in the signal domain).
\item
	\emph{$\dict$ is approximately known.} Suppose that an approximate dictionary $\dict' \in \R^{p \times d}$ is available, e.g., as an outcome of \emph{dictionary learning} or a \emph{factor analysis}.
	Then we can define an estimator by $\solu\sig' := \dict'\solu\fv$. If the error bound of Theorem~\ref{thm:results:selection} holds true, the triangle inequality now implies
	\begin{equation}
		\lnorm{\solu\sig' - \tru\snrscal\trusig} \leq \lnorm{\solu\sig' - \solu\sig} + \lnorm{\solu\sig - \tru\snrscal\trusig} \leq \norm{\dict' - \dict} \cdot \lnorm{\solu\fv} + \modeldevconst \bigg( \Big(\frac{\effdim{\extd\dict\sset}}{m}\Big)^{1/4} + \advdev \bigg).
	\end{equation}
	Hence, if $\dict'$ and $\dict$ are sufficiently close, we can conclude that $\solu\sig'$ serves as a reliable substitute for the (unknown) estimator $\solu\sig$.
\item
	\emph{Output prediction.}
	The primal focus of many applications is rather on accurately predicting the output variable, e.g., the health status of a patient.
	In this case, it is not even necessary to work within the signal domain: Let $(\data, y) \in \R^d \times \Y$ be a data pair with $y$ unknown. Then, one can simply define $\solu{y} := \fobs(\sp{\data}{\solu\fv})$ as an estimator of $y$. Here, the function $\fobs$, or at least an approximate version of it, is of course assumed to be given.
\end{listing}

These guidelines might be helpful to identify more potential applications of our framework.
We strongly believe that a similar statement than Corollary~\ref{cor:applications:msdata} can be also shown for other types of real-world data, for example, \emph{microarrays}, \emph{neuroimages}, or \emph{hyperspectral images}. 
A major difficulty when establishing a novel result for feature extraction is to find an appropriate parameterization of the data and observation model.
Initially, it might be unclear how to split the underlying collection of signal and noise factors, i.e., to figure out which variables are relevant to \ref{assump:model:obsmodel} and which are probably not.
But even then, there is usually still a certain degree of freedom to specify the atoms of \ref{assump:model:datamodel}, and we have seen that the quality of our error bounds is quite sensitive to this choice.
For that reason, each specific application will require a very careful individual treatment, such as we did in Subsection~\ref{subsec:applications:msdata} for MS~data.

\section{Conclusion and Outlook}
\label{sec:conclusion}

Regarding the initial challenges of \ref{question:intro:theoretical} and \ref{question:intro:setup}, we can conclude that our main results, Theorem~\ref{thm:results:selection} and Theorem~\ref{thm:results:selection:refined}, provide fairly general solutions.
The key idea was to construct an (optimal) representation of the signal vector $\trusig$ in order to mimic the original problem of variable selection within 
the data domain. In this way, we were able to prove recovery statements for standard estimators, which only take the raw samples $\{(\data_i, y_i)\}_{1 \leq i \leq m}$ 
as input. Interestingly, it has turned out that this approach works almost as good as if one would explicitly know the hidden signal factors $\lat_i$ of the data (``oracle property'').
Another remarkable observation was that the used loss function $\loss$ as well as the (unknown) non-linearity $\fobs$ do only have a minor impact on the qualitative behavior of the error bounds.
% This is actually a consequence of the powerful framework from \cite{genzel2016estimation}, which has formed the basis of our proofs.

In Section~\ref{sec:applications}, we have also discussed the practical scope of our findings, in particular the issue of \ref{question:intro:practical}.
As the setting of Theorem~\ref{thm:results:selection} and Theorem~\ref{thm:results:selection:refined} is quite generic, a practice-oriented statement can be usually only drawn up if some further assumptions on the data model \ref{assump:model:datamodel} are made.
This was illustrated for the example of MS~data, but there should be many other applications for which similar guarantees can be proven.
For that matter, we hope that our results could at least serve as a promising indicator of successful feature extraction from real-world data.

Finally, let us sketch several improvements and extensions of our framework which could be interesting to study in future research:
\begin{listing}
\item
	\emph{Non-Gaussian observations.}
	In the course of Subsection~\ref{subsec:applications:relevance}, we have already pointed out  that the assumption of Gaussian signal variables is often too restrictive in realistic scenarios.
	Therefore, it would be essential to allow for more general distributions, e.g., sub-Gaussians or even heavily-tailed random variables.
	The most difficult step towards such an extension is to handle the non-linearity $\fobs$ in a unified way.
	In the noisy linear case of Example~\ref{ex:results:estimatorsignal:lossmodel}\ref{ex:results:estimatorsignal:lossmodel:linear}, our results may be easily extended to sub-Gaussian factors, using techniques from \cite{tropp2014convex,liaw2016randommat,mendelson2007subgaussian}.
	If $\fobs$ produces binary outputs on the other hand, the situation becomes much more complicated.
	There exists in fact a simple counterexample based on Bernoulli variables for which signal recovery is impossible (see \cite[Rem.~1.5]{plan2013onebit}).
	An interesting approach has been given in \cite{ai2014onebitsubgauss}, excluding such extreme cases, but the authors just consider a linear loss for their estimator, which does not fit into our setup.
	However, we strongly believe that similar arguments can be used to adapt the statement of Theorem~\ref{thm:results:selection} to (a sub-class of) sub-Gaussian distributions.
\item
	\emph{More advanced sample models.}
	In order to achieve a smaller adversarial noise parameter $\tru\advdev$ in \ref{assump:model:obsmodelrefined}, it might be useful to go beyond a single-index model. For example, the output variables could be described by a \define{multi-index observation rule} of the form
	\begin{equation}
		y_i^0 = \fobs(\sp{\lat_i}{\trusig^{(1)}}, \sp{\lat_i}{\trusig^{(2)}}, \dots, \sp{\lat_i}{\trusig^{(D)}}), \quad i = 1, \dots, m,
	\end{equation}
	where $\trusig^{(1)}, \dots, \trusig^{(D)} \in \R^p$ are unknown signal vectors and $\fobs \colon \R^D \to \Y$ is again a scalar function.
	Apart from that, one could also think of a certain non-linearity in the linear factor model \ref{assump:model:datamodel}, especially in the noise term.
% 	But such a step would probably involve a substantial revision of our argumentation.
% 	Furthermore, we may also incorporate the ideas of \define{kernel methods}, which are quite popular in the field of machine learning. Here, the Euclidean scalar product $\sp{\lat_i}{\trusig}$ is replaced by a \define{kernel function}
% 	\begin{equation}
% 		k(\lat_i, \trusig) := \sp{\Phi(\lat_i)}{\Phi(\trusig)},
% 	\end{equation}
% 	using a non-linear \define{feature map} $\Phi \colon \R^p \to \mathcal{H}$ that lifts the signal domain to a possible infinite-dimensional Hilbert space $\mathcal{H}$.
\item
	\emph{Incorporation of prior knowledge.}
	A crucial feature of our main results is the fact that they impose only relatively few assumptions on the data model (the atoms $\atom_1, \dots, \atom_p$ and $\atomnoise_1, \dots, \atomnoise_q$ can be arbitrary).
	In most applications however, the input data obey additional structural constraints, such as the characteristic peak patterns of mass spectra.
	On this account, we have sketched various rules-of-thumb in Section~\ref{sec:applications}, but a rigorous treatment of these heuristics remains largely unexplored.
	For example, one could ask for the following:
	Can we simultaneously learn the signal vector $\trusig$ and the feature dictionary $\dict$?
	How to optimally preprocess (reweight) the data if some feature variables are known to be more important than others?
	And to what extent does this improve our error bounds?
\item
	\emph{Optimality of the representation.}
% 	The classical recovery bounds of \cite{genzel2016estimation,plan2014highdim,plan2015lasso} were shown to be essentially optimal, see \cite[Sec.~4]{plan2014highdim}.
	In Definition~\ref{def:results:dictionary:optimalrepr}, we have introduced a representative feature vector $\tru\fv$ that is optimal with respect to the SNR (cf. \eqref{eq:results:dictionary:snr}).
	This is indeed a very natural way to match the true observation $y_i$ with its perturbed counterpart $y_i''$ from \eqref{eq:results:guarantee:modddrepresentation}, but the impact of the non-linearity $\fobs$ is actually disregarded.
	The error bound of Theorem~\ref{thm:results:selection}, in contrast, involves the noise parameter $\advdev$, which also depends on $\fobs$.
	For this reason, it is not clear whether our notion of optimality always leads to the best possible estimates.
	A more elaborate approach would be to choose $\tru\fv$ such that (the expected value of) $\advdev$ is minimized, which would in turn require a very careful study of $\fobs$. But fortunately, Example~\ref{ex:applications:standardization:noiseparam} indicates that we would end up with the same result in most situations anyway.
\item
	\emph{General convex loss functions.} It can be easily seen that some popular convex loss functions do not fit into our setup.
	A prominent example is the \define{hinge loss} (used for \define{support vector machines}), which is neither strictly convex nor differentiable.
	However, the recent work of \cite{kolleck2015l1svm} has shown that signal recovery is still possible for this choice, even though the authors consider a more restrictive setting than we do.
	Another interesting issue would concern the design of adaptive loss functions: Suppose the non-linearity $\fobs$ of the observation model is (approximately) known, can we construct a good (or even optimal) loss $\loss$?
\end{listing}

\section{Proofs}
\label{sec:proofs}

\subsection{Proof of Theorem \ref{thm:intro:sparseselection}}
\label{subsec:proofs:sparseselection}

The claim of Theorem~\ref{thm:intro:sparseselection} follows from a straightforward application of our main result, Theorem~\ref{thm:results:selection}, which is proven in the next subsection.
\begin{proof}[Proof of Theorem \ref{thm:intro:sparseselection}]
We first observe that the data $\data_i = \atoms \lat_i$ does not involve any noise variables, so that one can simply assume $q = 0$ and $\extd\dict = \dict$.
Since $\fobs = \sign$ and $\loss = \losssq$, Example~\ref{ex:results:estimatorsignal:lossmodel}\ref{ex:results:estimatorsignal:lossmodel:binary} yields $\scalfac = \sqrt{2 / \pi} < 1$.
Thus, choosing $\sset := R \ball[1][d]$, we have
\begin{equation}
\scalfac \trusig \in \scalfac R \dict \ball[1][d] \subset R \dict \ball[1][d] = \dict\sset.
\end{equation}
The noise term $\tru{n}$ is equal to zero by assumption, which means that the feature vector $\tru\fv$ is already an optimal representation of $\trusig$.
In particular, we conclude that $\tru\stddev^2 = 0$, implying $y_i'' = y_i$ for all $i = 1, \dots, m$ and also $\advdev = 0$.
An application of \eqref{eq:results:estimatorsignal:effdim:dictionary:dictpolytope} finally gives the bound
\begin{equation}
	\effdim{\extd\dict\sset} = R^2 \cdot \effdim{\dict\ball[1][d]} \lesssim R^2 \cdot \dictenergy^2  \cdot \log(2d),
\end{equation}
and the claim is now established by the statement of Theorem~\ref{thm:results:selection}.
\end{proof}

\subsection{Proofs of Theorem \ref{thm:results:selection} and Theorem \ref{thm:results:selection:refined}}
\label{subsec:proofs:selection}

We have already pointed out that we would like to apply the abstract framework of \cite{genzel2016estimation} to prove Theorem~\ref{thm:results:selection} and Theorem~\ref{thm:results:selection:refined}.
In order to keep our exposition self-contained, let us first restate a deep result from \cite{genzel2016estimation}, where the notation is adapted to the setup of this work:
\begin{theorem}[\protect{\cite[Thm.~1.3]{genzel2016estimation}}]\label{thm:proofs:highdimestimation}
Let $\tru\v \in \S^{n-1}$ be a unit-norm vector. Set
\begin{equation}\label{eq:proofs:highdimestimation:obsmodel}
	\tilde{y}_i := \fobs(\sp{\meas_i}{\tru\v}), \quad i = 1, \dots,m,
\end{equation}
for i.i.d.\ Gaussian samples $\meas_1, \dots, \meas_m \distributed \Normdistr{\vnull}{\Idm{n}}$ and a (random) function $\fobs\colon \R \to \Y$.
We assume that \ref{assump:estimator:regularity}, \ref{assump:estimator:convexity} are fulfilled and that $\scalfac$, $\modeldev$, $\modeldeveta$ are defined according to \eqref{eq:results:estimatorsignal:modelparam}.
Moreover, suppose that $\scalfac \tru\v \in L$ for a bounded, convex subset $L \subset \R^n$.
Then there exists a constant of the form $\modeldevconst' = C' \cdot \max\{1, \modeldev, \modeldeveta\} > 0$ with $C' > 0$ such that the following holds with high probability:\footnote{As in Theorem~\ref{thm:results:selection}, the constant $C'$ may depend on the ``probability of success'' as well as on the RSC-constant of $\loss$.}
% Then, there exist constants $C_1, C_2 > 0$ such that the following holds with high probability:\footnote{As in Theorem~\ref{thm:results:selection}, these constants may depend on the ``probability of success,'' on the model parameters $\modeldev$ and $\modeldeveta$ as well as on the RSC-constant of $\loss$.}

If the number of observations obeys
\begin{equation}
	m \geq C' \cdot \effdim{L - \scalfac\tru\v},
\end{equation}
then any minimizer $\solu\v$ of
\begin{equation}\label{eq:proofs:highdimestimation:estimator}
	\min_{\v \in \R^n} \tfrac{1}{m} \sum_{i = 1}^m \loss(\sp{\meas_i}{\v}, y_i) \quad \text{subject to $\v \in L$} ,
\end{equation}
with arbitrary inputs $y_1, \dots, y_m \in \Y$, satisfies
\begin{equation}\label{eq:proofs:highdimestimation:bound}
	\lnorm{\solu\v - \scalfac\tru\v} \leq \modeldevconst' \bigg( \Big(\frac{\effdim{L - \scalfac\tru\v}}{m}\Big)^{1/4} + \Big(\tfrac{1}{m} \sum_{i = 1}^m \abs{\tilde{y}_i - y_i}^2\Big)^{1/2} \bigg).
	\end{equation}
\end{theorem}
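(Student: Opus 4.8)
The plan is to treat \eqref{eq:proofs:highdimestimation:estimator} as an $M$-estimation problem and to bound the error vector $\h := \solu\v - \scalfac\tru\v$ by combining three ingredients: a \emph{restricted strong convexity} lower bound on the empirical loss, a \emph{mean-width} upper bound on its gradient at the population minimizer $\scalfac\tru\v$, and the Lipschitz property \ref{assump:estimator:regularity} to transfer from the ideal outputs $\tilde{y}_i = \fobs(g_i)$ to the arbitrary inputs $y_i$. Throughout I write $g_i := \sp{\meas_i}{\tru\v} \distributed \Normdistr{0}{1}$ and decompose each sample orthogonally as $\meas_i = g_i\tru\v + \meas_i^\perp$, where $\meas_i^\perp$ is independent of $g_i$; this is the device that makes the defining equation \eqref{eq:results:estimatorsignal:modelparam:scalfac} of $\scalfac$ usable. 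Denote the empirical objective of \eqref{eq:proofs:highdimestimation:estimator} by $\lossemp[\y](\v) := \tfrac{1}{m}\sum_{i=1}^m \loss(\sp{\meas_i}{\v}, y_i)$.

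First I would set up the basic inequality. Since $\solu\v$ minimizes a convex objective over the convex set $L$ and $\scalfac\tru\v \in L$, first-order optimality gives $\sp{\gradient\lossemp[\y](\solu\v)}{\h} \leq 0$, and a second-order Taylor expansion of $\loss$ in its first argument yields, for intermediate points $\xi_i$ on the segment between $\sp{\meas_i}{\scalfac\tru\v}$ and $\sp{\meas_i}{\solu\v}$,
\begin{equation*}
	\tfrac{1}{2m}\sum_{i=1}^m \loss''(\xi_i, y_i)\sp{\meas_i}{\h}^2 \;\leq\; \bigl|\sp{\gradient\lossemp[\y](\scalfac\tru\v)}{\h}\bigr|.
\end{equation*}
The gradient on the right splits as $\gradient\lossemp[\tilde\y](\scalfac\tru\v) + \tfrac{1}{m}\sum_i[\loss'(\scalfac g_i, y_i) - \loss'(\scalfac g_i, \tilde{y}_i)]\meas_i$. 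By \ref{assump:estimator:regularity} and Cauchy--Schwarz, the inner product of the second summand with $\h$ is at most $C_{\loss'}\bigl(\tfrac{1}{m}\sum_i|\tilde{y}_i - y_i|^2\bigr)^{1/2}\bigl(\tfrac{1}{m}\sum_i\sp{\meas_i}{\h}^2\bigr)^{1/2}$, and since $\bigl(\tfrac{1}{m}\sum_i\sp{\meas_i}{\h}^2\bigr)^{1/2} \lesssim \lnorm{\h}$ with high probability by Gaussian concentration, this produces the additive term $\bigl(\tfrac{1}{m}\sum_i|\tilde{y}_i-y_i|^2\bigr)^{1/2}$ appearing in \eqref{eq:proofs:highdimestimation:bound}.

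Next I would bound the two remaining pieces. For the gradient at the ideal minimizer, observe that $w_i := \loss'(\scalfac g_i, \fobs(g_i))$ depends only on $g_i$, so $\gradient\lossemp[\tilde\y](\scalfac\tru\v) = \tfrac{1}{m}\sum_i w_i\meas_i$ has expectation $\mean[w_i g_i]\tru\v = \vnull$ exactly by \eqref{eq:results:estimatorsignal:modelparam:scalfac}, while its second-moment scales are $\modeldev^2$ and $\modeldeveta^2$ from \eqref{eq:results:estimatorsignal:modelparam:modeldev}--\eqref{eq:results:estimatorsignal:modelparam:modeldeveta} (the transverse multiplier sum contributes the $\modeldev$-scale and the $\tru\v$-directional sum the $\modeldeveta$-scale). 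Conditioning on the $g_i$ and using Gaussian concentration for $\tfrac{1}{m}\sum_i w_i\meas_i^\perp$, one bounds the entire supremum over the difference set,
\begin{equation*}
	\bigl|\sp{\gradient\lossemp[\tilde\y](\scalfac\tru\v)}{\h}\bigr| \;\leq\; \sup_{\x \in L-\scalfac\tru\v}\bigl|\sp{\gradient\lossemp[\tilde\y](\scalfac\tru\v)}{\x}\bigr| \;\lesssim\; \max\{\modeldev,\modeldeveta\}\cdot\frac{\meanwidth{L-\scalfac\tru\v}}{\sqrt m},
\end{equation*}
crucially \emph{without} a factor of $\lnorm{\h}$. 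For the quadratic term I would establish a restricted strong convexity estimate: using \ref{assump:estimator:convexity} (or the weaker localized condition \eqref{eq:results:estimatorsignal:rsc}) to bound $\loss''$ below by a positive constant whenever its arguments stay bounded, a truncation plus small-ball argument shows $\tfrac{1}{m}\sum_i\loss''(\xi_i, y_i)\sp{\meas_i}{\h}^2 \geq c\,\lnorm{\h}^2$ uniformly over $\h \in L-\scalfac\tru\v$ with high probability, provided $m \gtrsim \effdim{L-\scalfac\tru\v}$ (this is also where the sample-count hypothesis and the RSC-constant dependence of $C'$ enter).

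Combining everything, the basic inequality reduces to $c\,\lnorm{\h}^2 \lesssim \max\{\modeldev,\modeldeveta\}\meanwidth{L-\scalfac\tru\v}/\sqrt m + C_{\loss'}\,\epsilon\,\lnorm{\h}$ with $\epsilon := \bigl(\tfrac{1}{m}\sum_i|\tilde{y}_i-y_i|^2\bigr)^{1/2}$. Solving this quadratic inequality in $\lnorm{\h}$ and recalling $\effdim{L-\scalfac\tru\v} = \meanwidth{L-\scalfac\tru\v}^2$ gives $\lnorm{\h} \lesssim \max\{1,\modeldev,\modeldeveta\}\bigl((\effdim{L-\scalfac\tru\v}/m)^{1/4} + \epsilon\bigr)$, which after absorbing constants is exactly \eqref{eq:proofs:highdimestimation:bound}. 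I expect the restricted strong convexity lower bound to be the main obstacle: it is the one step where a nonlinear empirical process must be controlled uniformly over $L$ while the Taylor points $\xi_i$ themselves depend on the unknown $\h$. It is also precisely this structure that forces the $m^{-1/4}$ rate, since the square root arises from solving the quadratic inequality — a consequence of the gradient bound carrying no $\lnorm{\h}$ factor, because $\scalfac\tru\v$ is the population (not the empirical) minimizer.
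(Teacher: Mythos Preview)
The paper does not prove this statement. Theorem~\ref{thm:proofs:highdimestimation} is explicitly quoted from \cite[Thm.~1.3]{genzel2016estimation} and used as a black-box tool; the text introduces it with ``let us first restate a deep result from \cite{genzel2016estimation},'' and then immediately applies it to derive Theorem~\ref{thm:results:selection} and Theorem~\ref{thm:results:selection:refined}. There is therefore no proof in the paper to compare your attempt against.

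That said, your sketch is a faithful outline of how results of this type are established in the cited source and in the surrounding literature (e.g., \cite{plan2014highdim,plan2015lasso}): the basic inequality from convex optimality, the decomposition $\meas_i = g_i\tru\v + \meas_i^\perp$ so that the defining condition \eqref{eq:results:estimatorsignal:modelparam:scalfac} kills the gradient mean, a mean-width bound on the centered empirical process $\sup_{\x\in L-\scalfac\tru\v}\sp{\gradient\lossemp[\tilde\y](\scalfac\tru\v)}{\x}$, a restricted-strong-convexity lower bound on the Hessian term, and finally the Lipschitz transfer \ref{assump:estimator:regularity} from $\tilde{y}_i$ to $y_i$. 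Your diagnosis that the $m^{-1/4}$ rate comes from solving a quadratic inequality in $\lnorm{\h}$ (because the gradient bound is uniform over $L-\scalfac\tru\v$ rather than scaled by $\lnorm{\h}$) is also correct, and matches the paper's own remark that a localized mean width would sharpen the rate. The one step you flag as delicate---uniform RSC when the Taylor points $\xi_i$ depend on $\h$---is indeed where most of the technical work in \cite{genzel2016estimation} sits, and it is handled there via a small-ball/truncation argument of exactly the kind you describe.
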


The proof of our main result now follows from a sophisticated application of Theorem~\ref{thm:proofs:highdimestimation}:

\begin{proof}[Proof of Theorem \ref{thm:results:selection}]
Let us recall the approach of Remark~\ref{rmk:results:dictionary}\ref{rmk:results:dictionary:extddict} and work in the extended signal domain $\R^{p+q}$.
That means, we would like to apply Theorem~\ref{thm:proofs:highdimestimation} in a setup with $n := p + q$ and
\begin{equation}
	\meas_i := \extd{\lat}_i = (\lat_i, \latnoise_i) \distributed \Normdistr{\vnull}{\Idm{p+q}}, \quad i = 1, \dots, m.
\end{equation}
Next, we need to specify the vector $\tru\v$.
For this purpose, consider again
\begin{equation}
	 \tru{\extd\sig}' = \extd\dict \tru\fv = \matr{\dict \tru\fv \\ \dictnoise \tru\fv} = \matr{\trusig \\ \dictnoise \tru\fv},
\end{equation}
where the assumption $\scalfac \trusig \in \dict \sset$ ensures that an optimal representation $\tru\fv$ actually exists (cf. \eqref{eq:results:dictionary:fvstructured}).
Due to $\lnorm{\trusig} = 1$, we have $\tru\scalsig := \lnorm{\tru{\extd\sig}'} = \sqrt{1 + \lnorm{\dictnoise \tru\fv}^2} = \sqrt{1 + \tru\stddev^2}$, so that
\begin{equation}
	\tru\v := \tru\scalsig^{-1} \tru{\extd\sig}' = \tru\scalsig^{-1} \matr{\trusig \\ \dictnoise \tru\fv}
\end{equation}
satisfies $\lnorm{\tru\v} = 1$. Putting this into \eqref{eq:proofs:highdimestimation:obsmodel} leads to
\begin{equation}\label{eq:proofs:selection:modobsmodel}
	\tilde{y}_i = \fobs(\sp{\meas_i}{\tru\v}) = \fobs\Big(\tru\scalsig^{-1} \sp[\Big]{\matr{\lat_i \\ \latnoise_i}}{\matr{\dict \tru\fv \\ \dictnoise \tru\fv}}\Big)  = \fobs(\tru\scalsig^{-1} \sp{\data_i}{\tru\fv}), \quad i = 1, \dots,m,
\end{equation}
whereas the true observations are (cf. \eqref{eq:results:dictionary:obsmodelextd})
\begin{equation}
		y_i = \fobs(\sp{\lat_i}{\trusig}) = \fobs\Big(\sp[\Big]{\matr{\lat_i \\ \latnoise_i}}{\matr{\trusig \\ \vnull}}\Big) = \fobs(\sp{\extd{\lat}_i}{\tru{\extd\sig}}), \quad i = 1, \dots, m.
\end{equation}
Recalling \eqref{eq:results:guarantee:noiseparam} and \eqref{eq:results:guarantee:modddrepresentation}, we observe that in fact $\tilde{y}_i = y_i''$, which implies
\begin{equation}
	\advdev = \Big(\tfrac{1}{m} \sum_{i = 1}^m \abs{\tilde{y}_i - y_i}^2\Big)^{1/2}.
\end{equation}
As signal set we simply choose $L := \extd\dict \sset$. By $\vnull \in \sset$, $\tru\scalsig \geq 1$, and $\scalfac \tru\fv \in \sset$, one concludes that
\begin{equation}
\scalfac \tru\v = \scalfac \tru\scalsig^{-1} \matr{\trusig \\ \dictnoise \tru\fv} = \tru\scalsig^{-1} \matr{\scalfac \dict\tru\fv \\ \scalfac \dictnoise \tru\fv} \in \tru\scalsig^{-1} \extd\dict\sset \subset \extd\dict\sset = L.
\end{equation}
Since (see also \cite[Prop.~2.1]{plan2013robust})
\begin{equation}
	C' \cdot \effdim{L - \scalfac\tru\v} \leq \underbrace{4 C'}_{=: C} \cdot \effdim{L} = C \cdot \effdim{\extd\dict \sset} \leq m,
\end{equation}
all conditions of Theorem~\ref{thm:proofs:highdimestimation} are indeed satisfied.

For the remainder of this proof, let us assume that the ``high-probability-event'' of Theorem~\ref{thm:proofs:highdimestimation} has occurred. Then, the minimizer of \eqref{eq:proofs:highdimestimation:estimator} can be expressed in terms of the actual estimator \eqref{eq:results:estimatorsignal:estimator}:
\begin{align}
\solu\v &= \argmin_{\v \in L} \tfrac{1}{m} \sum_{i = 1}^m \loss(\sp{\meas_i}{\v}, y_i) \\
&= \argmin_{\v \in \extd\dict\sset} \tfrac{1}{m} \sum_{i = 1}^m \loss(\sp{\extd{\lat}_i}{\v}, y_i) \\
&= \extd\dict \cdot \argmin_{\fv \in \sset} \tfrac{1}{m} \sum_{i = 1}^m \loss(\underbrace{\sp{\extd{\lat}_i}{\extd\dict\fv}}_{= \sp{\data_i}{\fv}}, y_i) \\
&= \extd\dict \cdot \underbrace{\argmin_{\fv \in \sset} \tfrac{1}{m} \sum_{i = 1}^m \loss(\sp{\data_i}{\fv}, y_i)}_{\eqref{eq:results:estimatorsignal:estimator}} = \extd\dict \solu\fv = \matr{\dict\solu\fv \\ \dictnoise\solu\fv} = \matr{\solu\sig \\ \dictnoise\solu\fv}. \label{eq:proofs:selection:estimatorequiv}
\end{align}
This finally yields the desired error bound (with $\modeldevconst := \sqrt{2}\modeldevconst'$)
\begin{align}
	\lnorm{\solu\sig - \tru\snrscal\trusig} &= \lnorm[auto]{\smallmatr{\solu\sig \\ \vnull} - \tru\snrscal \smallmatr{\trusig \\ \vnull}} \leq \lnorm[auto]{\smallmatr{\solu\sig \\ \dictnoise\solu\fv} - \scalfac \tru\scalsig^{-1} \smallmatr{\trusig \\ \dictnoise \tru\fv}} \\
	&= \lnorm{\solu\v - \scalfac \tru\v} \stackrel{\eqref{eq:proofs:highdimestimation:bound}}{\leq } \modeldevconst' \bigg( \Big(\frac{4 \effdim{\extd\dict \sset}}{m}\Big)^{1/4} + \advdev \bigg).\label{eq:proofs:selection:errorbound}
\end{align}
\end{proof}

\begin{remark}\label{rmk:proofs:selection}
\begin{rmklist}
\item\label{rmk:proofs:selection:goodsnr}
	The above proof even shows a stronger statement than given in Theorem~\ref{thm:results:selection}. Indeed, \eqref{eq:proofs:selection:errorbound} also implies that
	\begin{equation}
		\lnorm{\dictnoise\solu\fv - \scalfac \tru\scalsig^{-1} \dictnoise \tru\fv} \leq \modeldevconst' \bigg( \Big(\frac{4 \effdim{\extd\dict \sset}}{m}\Big)^{1/4} + \advdev \bigg).
	\end{equation}
	Hence, if $\dictnoise \tru\fv \approx \vnull$ (high SNR) and the bound of the right-hand side is sufficiently small, we can conclude that $\dictnoise\solu\fv \approx \vnull$. This basically means that the SNR is still high when using the estimated feature vector $\solu\fv$ instead of $\tru\fv$.
\item\label{rmk:proofs:selection:rescaledobs}
	As already mentioned in Remark~\ref{rmk:results:guarantee:modddrepresentation}, the proof of Theorem~\ref{thm:results:selection} reveals why the outputs of \eqref{eq:results:dictionary:eddrepresentation},
	\begin{equation}
		y_i' = \fobs(\sp{\data_i}{\tru\fv}) = \fobs(\sp{\extd\lat_i}{\tru{\extd\sig}'}), \quad i = 1, \dots, m,
	\end{equation}
	cannot match with the observation model \eqref{eq:proofs:highdimestimation:obsmodel} in general.\footnote{However, there are at least some special cases, e.g., $\fobs = \sign$, where this approach would still work.}
	In fact, this would require $\lnorm{\tru{\extd\sig}'} = 1$, which in turn can never hold if $\dict\tru\fv = \trusig \in \S^{p-1}$ and $\dictnoise \tru\fv \neq \vnull$.
	That is precisely the reason why we need to consider rescaled outputs $y_i'' = \fobs(\tru\scalsig^{-1} \sp{\data_i}{\tru\fv})$ instead.
\item
	The proof strategy of Theorem~\ref{thm:results:selection} might appear a bit counter-intuitive at first sight: The true observations $y_1, \dots, y_m$ from \ref{assump:model:obsmodel} are rather treated as noisy (and dependent) perturbations of the ``artificial'' output rule defined in \eqref{eq:proofs:selection:modobsmodel}.
	This ``reverse'' perspective is due to the fact that there might not exist a feature vector $\fv \in \R^d$ with $\tru{\extd\sig} = (\trusig, \vnull) = \extd\dict \fv$, i.e., there is no exact representation of $\trusig$ in the extended signal domain (see also Remark~\ref{rmk:results:dictionary}\ref{rmk:results:dictionary:extddict}).
	More precisely, if we would have simply chosen $\meas_i := \lat_i$, $\tru\v := \trusig$, and $L = \dict \sset$, the crucial equivalence of the estimators in \eqref{eq:proofs:selection:estimatorequiv} would only hold if $\ker\dictnoise \intersec \sset = \sset$. But this is not the case in general.
% 	But on the other hand, the equivalence of the estimators in \eqref{eq:proofs:selection:estimatorequiv} is crucial, since we are always reliant on working with \eqref{eq:results:estimatorsignal:estimator}.
	\qedrmkhere
\end{rmklist}
\end{remark}

Theorem~\ref{thm:results:selection} could have been also deduced as a corollary of Theorem~\ref{thm:results:selection:refined}. However, we have decided to give a separate proof in order to avoid technicalities that may detract from the key techniques of this work.
Theorem~\ref{thm:results:selection:refined} can be derived in a very similar way, using Theorem~\ref{thm:proofs:highdimestimation} again:
\begin{proof}[Proof of Theorem \ref{thm:results:selection:refined}]
	First, observe that $\sqrt{\Covmatr}^{-1}\lat_i \distributed \Normdistr{\vnull}{\Idm{p}}$.
	So, let us set up Theorem~\ref{thm:proofs:highdimestimation} with
	\begin{align}
		\meas_i &:= (\sqrt{\Covmatr}^{-1}\lat_i, \latnoise_i) \distributed \Normdistr{\vnull}{\Idm{p+q}}, \quad i = 1, \dots, m, \\
		\tru\v &:= \tru\scalsig^{-1} \matr{\sqrt{\Covmatr}\trusig \\ \dictnoise \tru\fv}, \quad \text{and} \quad L := \extd\dict_{\Covmatr} \sset.
	\end{align}
	Using the assumptions of Theorem~\ref{thm:results:selection:refined}, one easily shows that $\lnorm{\tru\v} = 1$, $\scalfac \tru\v \in L$, $\tilde{y}_i = y_i''$, and
	\begin{equation}
		\solu\v = \argmin_{\v \in L} \tfrac{1}{m} \sum_{i = 1}^m \loss(\sp{\meas_i}{\v}, y_i)
		= \extd\dict_{\Covmatr} \cdot \argmin_{\fv \in \sset} \tfrac{1}{m} \sum_{i = 1}^m \loss(\sp{\data_i}{\fv}, y_i) = \extd\dict_{\Covmatr} \solu\fv = \matr{\sqrt{\Covmatr}\solu\sig \\ \dictnoise\solu\fv}.
	\end{equation}
	Recall that $y_1, \dots, y_m$ are the true observations, whereas $y_1^0, \dots, y_m^0$ obey \eqref{eq:results:refinements:obsmodel}.
	This leads to
	\begin{equation}
		\Big(\tfrac{1}{m} \sum_{i = 1}^m \abs{\tilde{y}_i - y_i}^2\Big)^{1/2} \leq \Big(\tfrac{1}{m} \sum_{i = 1}^m \abs{y_i'' - y_i^0}^2\Big)^{1/2} + \Big(\tfrac{1}{m} \sum_{i = 1}^m \abs{y_i^0 - y_i}^2\Big)^{1/2} \leq \advdev + \tru\advdev,
	\end{equation}
	and the statement of Theorem~\ref{thm:proofs:highdimestimation} finally yields the desired bound
	\begin{equation}
	\lnorm{\sqrt{\Covmatr}(\solu\sig - \tru\snrscal\trusig)} \leq \lnorm{\solu\v - \scalfac \tru\v} \stackrel{\eqref{eq:proofs:highdimestimation:bound}}{\leq } \modeldevconst' \bigg( \Big(\frac{4 \effdim{\extd\dict_{\Covmatr} \sset}}{m}\Big)^{1/4} + \advdev + \tru\advdev \bigg).
	\end{equation}
\end{proof}

%%% --> Appendix <--
% \appendix
% \cleardoublepage
% \addappheadtotoc
% \input{content/appendix}

%%% acknowledgements
\section*{Acknowledgements}
This research is supported by the Einstein Center for Mathematics Berlin (ECMath), project grant CH2.
G. Kutyniok acknowledges partial support by the Einstein Foundation Berlin, the Einstein Center for Mathematics Berlin (ECMath), the 
European Commission-Project DEDALE (contract no. 665044) within the H2020 Framework Program, DFG Grant KU 1446/18, DFG-SPP 
1798 Grants KU 1446/21 and KU 1446/23, the DFG Collaborative Research Center TRR 109 Discretization in Geometry and Dynamics, 
and by the DFG Research Center Matheon Mathematics for Key Technologies in Berlin.

%%% Biblography
\renewcommand*{\bibfont}{\small}
% \nocite{*} % auch nichtzitierte Werke im Verzeichnis
% \printbibliography[heading=bibintoc]
\bibliographystyle{abbrv}
\bibliography{learning-model}

\end{document}